\documentclass[Afour,sageh,times]{sagej}

\usepackage{moreverb,url}
\usepackage{bm}
\usepackage{stfloats}

\usepackage[colorlinks,bookmarksopen,bookmarksnumbered,citecolor=red,urlcolor=red]{hyperref}

\usepackage{algorithm}
\usepackage{algpseudocode}

\newtheorem{theorem}{Theorem}
\newtheorem{discussion}{Discussion}
\newtheorem{corollary}{Corollary}
\usepackage{pdflscape}
\newtheorem{question}{Question}
\newtheorem{assumption}{Assumption}
\newtheorem{lemma}{Lemma}
\usepackage{graphicx}
\usepackage{multirow}
\usepackage{subcaption}
\usepackage{centernot} 

\usepackage[most]{tcolorbox}
\usepackage{xcolor}
\usepackage{setspace}
\usepackage{geometry}
\usepackage{listings}
\usepackage{lmodern}
\usepackage{enumitem}
\definecolor{mygray}{gray}{0.6}
\definecolor{mypink}{rgb}{1.0, 0.75, 0.8}

\newcommand\BibTeX{{\rmfamily B\kern-.05em \textsc{i\kern-.025em b}\kern-.08em
T\kern-.1667em\lower.7ex\hbox{E}\kern-.125emX}}

\def\volumeyear{2016}

\begin{document}

\runninghead{Smith and Wittkopf}

\runninghead{Yongbo Chen et al.}

\title{POMDP-based object search with growing state space and hybrid action domain}

\author{Yongbo Chen\affilnum{1,~2,~3}, Hesheng Wang\affilnum{1,~3}, Shoudong Huang\affilnum{4}, and Hanna Kurniawati\affilnum{2}}

\affiliation{\affilnum{1}School of Automation and Intelligent Sensing, Shanghai Jiao Tong University, Shanghai, 200240, People's Republic of China\\ \affilnum{2}School of Computing, Australian National University (ANU), Canberra, ACT, 2601, Australia (The main part of this work is completed at ANU)\\ \affilnum{3}Key Laboratory of System Control and Information Processing, Ministry of Education of China, Shanghai, 200240, People's Republic of China\\ \affilnum{4}Robotics Institute, University of Technology Sydney, Australia}

\corrauth{Yongbo Chen, School of Automation and Intelligent Sensing, Shanghai Jiao Tong University, Shanghai, 200240, People's Republic of China}

\email{shjtdx\_cyb@sjtu.edu.cn, chengyongbobobo@gmail.com}

\begin{abstract}
Efficiently locating target objects in complex indoor environments with diverse furniture, such as shelves, tables, and beds, is a significant challenge for mobile robots. This difficulty arises from factors like localization errors, limited fields of view, and visual occlusion. We address this by framing the object-search task as a high-dimensional Partially Observable Markov Decision Process (POMDP) with a growing state space and hybrid (continuous and discrete) action spaces in 3D environments. Based on a meticulously designed perception module, a novel online POMDP solver named the growing neural process filtered $k$-center clustering tree (GNPF-$k$CT) is proposed to tackle this problem. Optimal actions are selected using Monte Carlo Tree Search (MCTS) with belief tree reuse for growing state space, a neural process network to filter useless primitive actions, and $k$-center clustering hypersphere discretization for efficient refinement of high-dimensional action spaces. A modified upper-confidence bound (UCB), informed by belief differences and action value functions within cells of estimated diameters, guides MCTS expansion. Theoretical analysis validates the convergence and performance potential of our method. To address scenarios with limited information or rewards, we also introduce a guessed target object with a grid-world model as a key strategy to enhance search efficiency. Extensive Gazebo simulations with Fetch and Stretch robots demonstrate faster and more reliable target localization than POMDP-based baselines and state-of-the-art (SOTA) non-POMDP-based solvers, especially large language model (LLM) based methods, in object search under the same computational constraints and perception systems. Real-world tests in office environments confirm the practical applicability of our approach. Project page: \url{https://sites.google.com/view/gnpfkct}.
\end{abstract}

\keywords{Object-searching, partially observable Markov decision process (POMDP), Monte Carlo tree search (MCTS), $k$-center clustering tree, neural process (NP), and belief tree reuse.}

\maketitle

\section{Introduction}
\label{sec:intro}
 
Object search in partially familiar, cluttered environments is crucial for mobile robots. For example, a robot retrieving a customer's order from a warehouse or fetching a Coke bottle from a kitchen must often locate a target object that is partially or fully obscured by other items. Known as the `Mechanical Search' problem~\cite{R1}, it typically involves leveraging partial environmental knowledge, like maps of large furniture (e.g., shelves, tables), while smaller objects (e.g., mugs, glasses) remain unlocalized due to frequent movement. Humans typically find smaller objects by relying on prior knowledge, assuming familiarity with the target, and removing obstructions as needed~\cite{aydemir2013active}. Similarly, robots must adjust their configurations, strategically observe from multiple viewpoints, and interact with the environment by removing obstructions and retrieving the target object.  This paper focuses on object search in a multi-room home environment with predefined workspaces. Equipped with 3D point cloud and 2D occupancy grid maps and provided with target photos, the robot aims to efficiently locate objects using onboard sensors, as shown in Fig.~\ref{f3}.

\begin{figure}[!th]
	\centering
	\includegraphics[width=\linewidth]{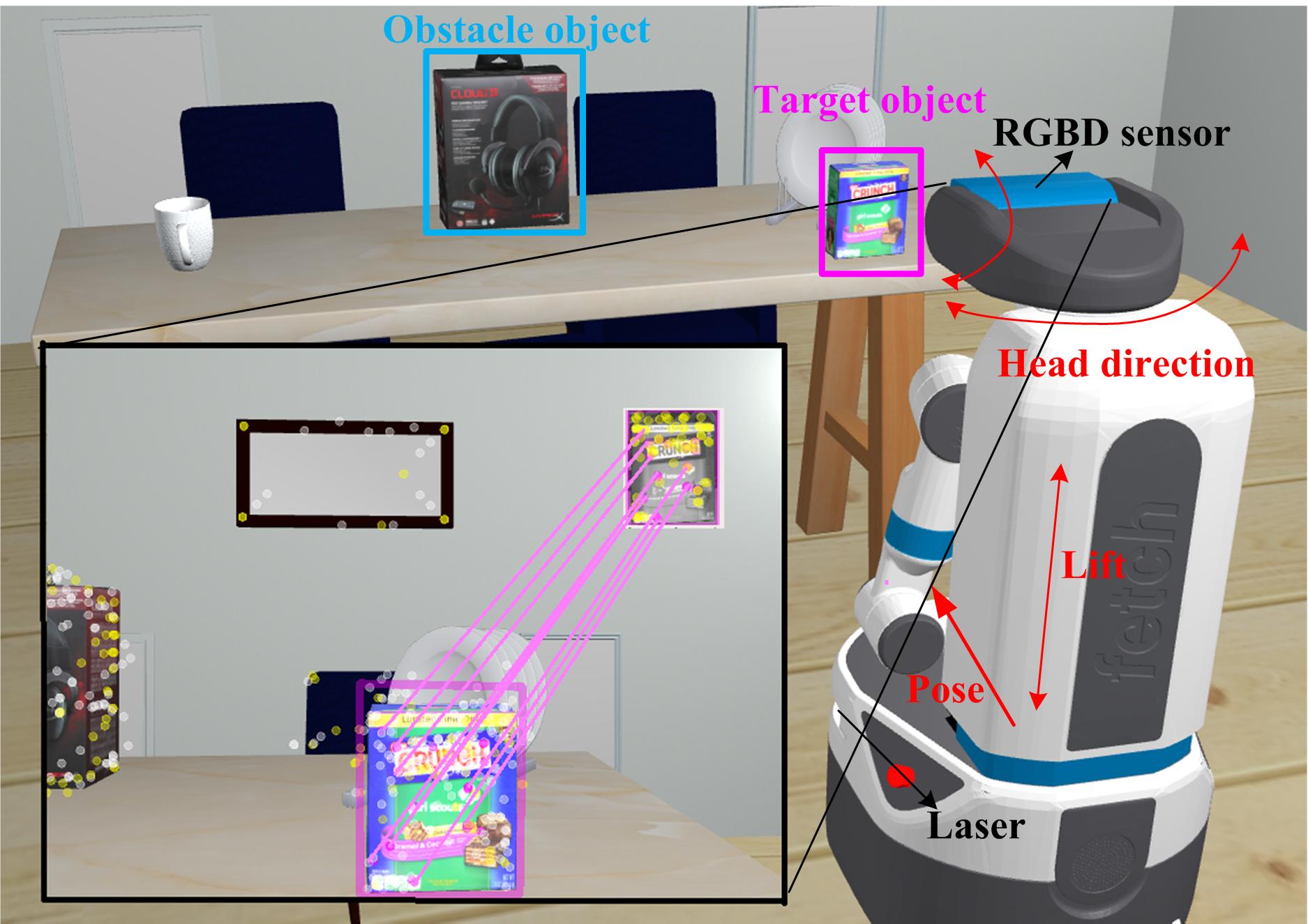}
	\caption{A scenario of object search involves a Fetch robot locating and removing a pink snack box from a table using feature matching. The robot actively adjusts its base, lift, and head (red) to change its view, with no prior knowledge of the size, number, or color of objects on the table.} 
	\label{f3}
\end{figure}

The object search method depends on a comprehensive support system that includes modules for object segmentation, object detection, pose estimation, task-level planning, and action-level planning. Significant advancements have been made in these supporting areas, particularly through advanced learning technologies. For example, an impressive approach presented in~\cite{R3} showcases an efficient one-shot learning method utilizing a fully convolutional network for pixel-level image segmentation. Similarly, other methods leverage architectures like neural radiance fields and convolutional neural networks (CNNs) to address challenges in optimal grasp pose generation~\cite{R4} and manipulation planning~\cite{R5}.  In this study, we focus on task-level planning, aiming to select primitive actions and achieve long-term sequential goals for locating and retrieving target objects in complex, unlabeled environments.

In this paper, we propose a partially observable Markov decision process (POMDP) formulation with a hybrid action domain for object search, integrating well-defined perception and action modules. To solve this formulation, we introduce a novel solver that leverages belief tree reuse, neural process filtering, and $k$-center clustering. The main contributions of this work are as follows:

\begin{itemize}
    \item{\textbf{Realistic formulation}: A complete POMDP formulation for object search task considering a realistic perception process.}
	\item{\textbf{Better exploration trick}: Introduction of the guessed target object with odds update for better exploration.}
	\item{\textbf{Novel efficient POMDP solver}: A novel solver for POMDP problem with growing state space, neural process filtering, and hybrid (continuous and discrete) action domain.}
	\item{\textbf{Demo}: Realistic gazebo simulations and real-world demos using two robots, showcasing the proposed framework with publicly available open-source code.}
\end{itemize}

This paper extends our previous work~\cite{chen2024pomdp} and an on-line document~\cite{anonymous2024npfkct}. The paper is organized as follows: Section~\ref{s2} reviews related work on POMDP models, existing solvers, and mechanical search. Section~\ref{s3} outlines our problem formulation and solvers for the POMDP-based object search task. Section~\ref{s4} details the POMDP formulation, including state, action, and observation spaces, and key functions. 
Section~\ref{s6} introduces a novel online POMDP solver with theoretical discussions. In Section~\ref{s5}, we explain our action execution and perception parts using the Fetch robot with onboard sensors to support our POMDP formulation. Section~\ref{s7} provides simulation and experimental results validating the practicality and high performance of the proposed method. Finally, we summarize our conclusions and present our future work in Section~\ref{s8}.

\section{Related work}
\label{s2}
\subsection{Mechanical search: Planning}
Robots frequently encounter challenges navigating cluttered environments to locate target objects. Many contemporary approaches leverage advancements in learning technologies, including deep reinforcement learning (RL)~\cite{kurenkov2020visuomotor}, deep Q-learning~\cite{yang2020deep}, and deep-geometric inference systems~\cite{huang2021mechanical}. For instance, ~\cite{kurenkov2020visuomotor} proposes an innovative learning procedure that combines an asymmetric architecture to exploit privileged information, guidance from suboptimal teachers, and mid-level representations for training deep RL agents to discover target objects hidden within piles of unknown items. However, these methods are tailored to specific environments, such as shelves or boxes, limiting their applicability in diverse, unstructured settings.

Beyond learning-based methods, the POMDP framework offers a compact and effective approach for object search. As an example, in \cite{huang2022mechanical}, a Monte-Carlo tree search method featuring a 1D occupancy distribution for objects is employed to swiftly identify and extract the target object. Due to its natural compatibility with scenarios involving multiple detected objects, the object-oriented POMDP (OO-POMDP) framework is widely adopted. This framework factors objects into classes with associated attributes, treating the beliefs for different objects as independent, which enables linear scaling of belief size and reduces computational costs~\cite{wandzel2019multi}. To simplify computation, prior work often restricts the workspace to a 2D plane with simple transition and observation models. For example, \cite{aydemir2013active} proposed a 2D POMDP formulation for object search in a room by evaluating candidate viewpoints. Recent efforts have extended this to 3D environments with object-size-friendly properties using multi-resolution planning algorithms~\cite{zheng2021multi}. The commonly used POMDP solver in these frameworks is partially observable Monte Carlo planning (POMCP)~\cite{silver2010monte}. We build on the OO-POMDP framework for real-world applicability, but go forward one step to consider the belief relationship between different objects in a complicated 3D environment including some challenging cases with partially and fully occluded objects. We also formulate the object-level POMDP formulation with a growing state space, a hybrid action domain, and a guessed target object.  

\subsection{Mechanical search: Reasoning} Earlier robotics approaches integrated reasoning directly into the planning process, such as updating probabilities, rather than treating it as a distinct module. As robotic systems became more sophisticated, reasoning evolved into a separate layer, addressing high-level cognitive tasks like commonsense inference, contextual understanding, and hypothesis generation. This reasoning layer then informs a planning module to execute detailed action strategies.  For instance, ~\cite{giuliari2023leveraging} demonstrates a reasoning module that uses environmental context and commonsense knowledge to infer plausible object locations, aiding localization in partially observed scenes. Similarly, ~\cite{ge2024commonsense} employs commonsense knowledge from large language models to construct scene graphs for improving object search.  However, these methods struggle with unconventional object arrangements, such as randomized setups. In our scenario, objects are deliberately placed in configurations that defy typical human habits, increasing search complexity and requiring active robot-environment interactions, such as occlusion removal. Our reasoning idea is based on the probability update of the grid world. By representing the belief over the pose state of each object in the planning environment using particle filtering, the authors in \cite{garrett2020online} incorporate probabilistic reasoning into a deterministic planner and then perform replanning when facing the base movement failure. Recent work~\cite{huang2024out} utilizes a video tracking-based memory model with integrated reasoning and planning capabilities. This allows the system to remember potential locations of occluded target objects and complete tasks using appropriate actions.

\subsection{POMDP formulation and solvers}
For a POMDP agent, the process commences with a brief, where the agent deduces and executes the optimal action to transition to a new state. This new state is concealed within an updated belief, refined through Bayesian inference based on observed data. Each step yields an immediate reward and contributes to a discounted cumulative reward, fostering a long-term objective. This iterative process continues until terminal conditions are met.  The POMDP framework is a powerful tool for planning under uncertainty, excelling in non-deterministic and partially observable scenarios. Widely applied in robotics, it addresses tasks like autonomous underwater vehicle navigation~\cite{Hou_belief}, robot manipulation~\cite{pajarinen2022pomdp}, and Human-Robot collaboration~\cite{burks2023harps}.  Our work adopts this approach, developing an object-oriented POMDP model with hybrid action domains.

Finding the exact optimal strategy for a POMDP problem is widely recognized as computationally intractable~\cite{papadimitriou1987complexity}. Over the past two decades, POMDP solvers have been categorized into offline and online methods. Offline solvers, such as point-based POMDP methods~\cite{Shani_2013, Kurniawati_2008}, can produce optimal solutions for small-scale problems. However, their significant memory requirements often make them impractical for use on standard computational platforms in real-world applications. In contrast, online solvers, especially sampling-based methods, offer a practical trade-off by achieving approximate optimality with greater computational efficiency, making them suitable for various applications.

Sampling-based solvers, such as POMCP~\cite{silver2010monte}, adaptive belief tree (ABT)~\cite{kurniawati2016online}, and determinized sparse partially observable tree (DESPOT)~\cite{somani2013despot}, share a common strategy: representing belief as particles and utilizing Monte Carlo tree search (MCTS) to expand the belief tree within computational limits. Despite significant advancements, solving POMDPs with high-dimensional continuous action spaces remains a substantial challenge. The key point in existing continuous-action POMDPs is to refine the action subset incrementally to improve the possibility that the selected subset of actions contains the best action.  Methods like partially observable Monte Carlo planning with observation widening (POMCPOW) ~\cite{sunberg2018online} use Progressive Widening to add new sampled actions after sufficient exploration, while others integrate techniques like Voronoi optimistic optimization~\cite{Michael2021Voronoi} or Bayesian optimization~\cite{mern2021bayesian}. These methods commonly apply the UCB1~\cite{auer2002finite} algorithm for exploration and Monte Carlo backups for value estimation, differing mainly in how they refine the action domain. Despite advances, the challenges posed by growing state spaces and continuous state, action, and observation spaces remain formidable.

\subsection{Neural process in robotics}
As a more efficient alternative to Gaussian processes, neural processes excel in function regression by capturing uncertainty in predictions as a stochastic process. In numerous robotics tasks, the ability to account for uncertainty in predictions is crucial for enhancing system robustness and practicality.  For instance, \cite{chen2022meta} introduces a conditional neural process (CNP)-based meta-learning algorithm to estimate grasp points from limited depth image trials.  The versatility of CNPs is further demonstrated in applications such as 6D pose estimation~\cite{Li2022Category} and understanding social aspects of navigation~\cite{yildirim2022learning}. In our work, we utilize neural processes to address uncertain scoring function regression and primitive action filtering, creating a robust framework for decision-making in complex, dynamic environments. This idea was publicly released in our earlier work~\cite{anonymous2024npfkct}, preceding several concurrent studies exploring neural process-based planning~\cite{Jain2025Learning}.

\section{Overall framework}
\label{s3}
In a cluttered indoor environment with numerous unknown movable and immovable obstacles, a robot must locate and pick up a movable target object using onboard sensors and reference photos. To solve this, we use a POMDP-based approach, GNPF-$k$CT, with 4 alternating stages: planning, execution, observation, and filtering. In the planning stage, a look-ahead MCTS exploration uses an initial belief, transition, and reward functions, observation model, rollout strategy, and terminal function to sample particles and determine the best action. To address challenges from a growing state space and continuous action domain, we enhance the POMDP solver with belief tree reuse, neural process filtering, and $k$-center clustering, improving efficiency and performance.

At the execution and observation stages, the algorithm performs the selected action on the robot, which gathers real-world measurements from onboard sensors, including 2D lidar scans, 3D point clouds, and RGBD camera images, without relying on markers like QR codes. These measurements are processed into state-space information through steps like robot pose estimation, point cloud filtering and segmentation, principal component analysis, object matching, and move-ability estimation. In the filtering stage, a particle filter updates the belief and prunes the belief tree using processed sensor data. The guessed target object and its grid world are updated via log-odds, similar to occupancy grid mapping. Robot pose estimation uses iterative closest point (ICP), initialized by adaptive Monte Carlo localization (AMCL). If the terminal condition is met, the task ends with the cumulative reward; otherwise, the updated belief reuses the existing belief tree and resamples particles to proceed with the MCTS. These steps, as shown in Fig.~\ref{f2xx}, collectively achieve object search and retrieval.


\begin{figure}[!th]
	\centering
	\includegraphics[width=\linewidth]{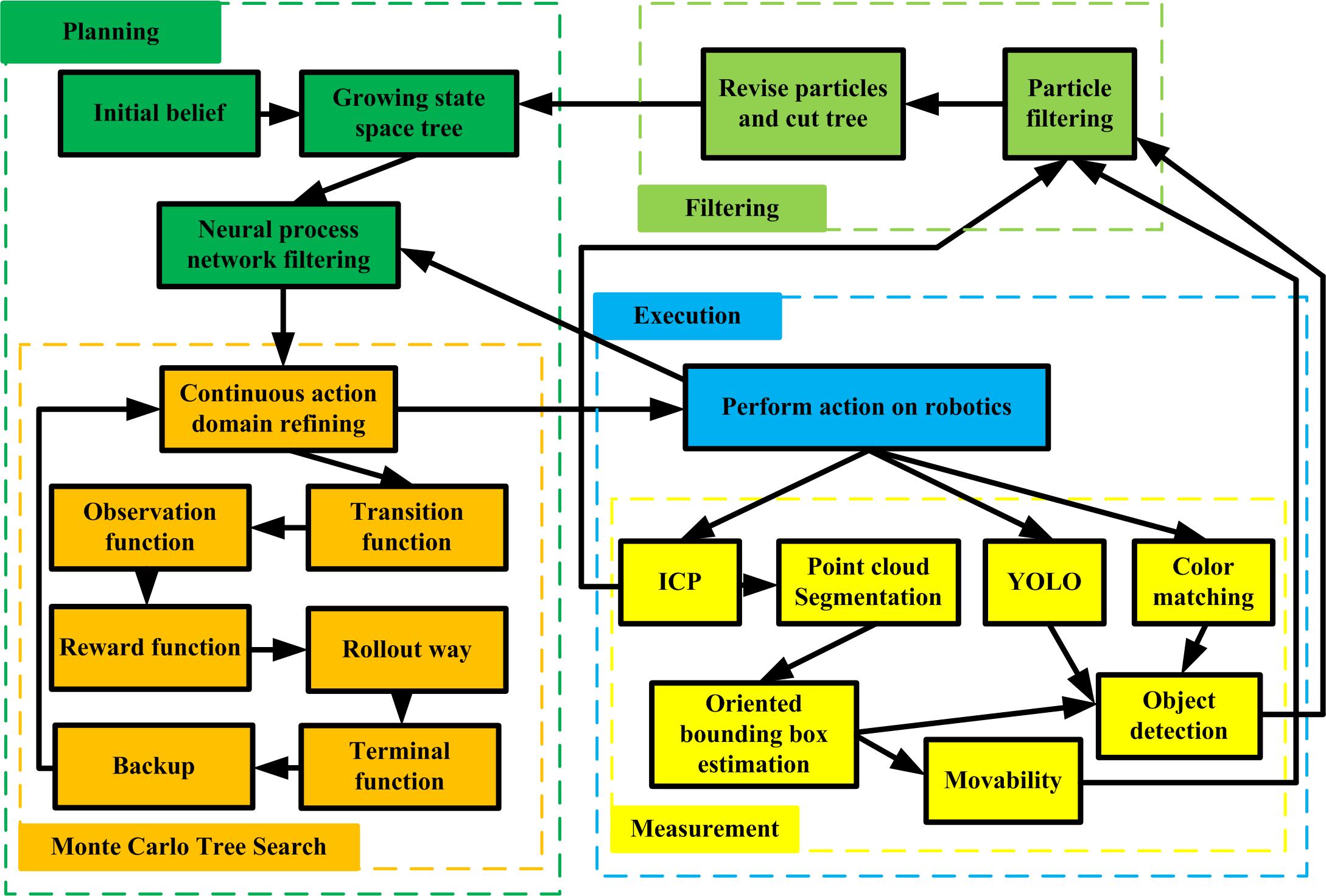}
	\caption{The main steps in our proposed approach, GNPF-$k$CT} 
	\label{f2xx}
\end{figure}

\section{POMDP formulation for object search}
\label{s4}
\subsection{General hybrid and growing POMDP}
In this paper, we consider a POMDP formulation $\mathcal{P}$ with a hybrid action domain. Formally, it is defined as an 8-tuple $<\mathcal{S},~\mathcal{A},~\mathcal{O},~{T},~{Z},~{R},~\bm{b}_0,~\gamma>$, where the state space $\mathcal{S}$ at the $k$-th time step $\mathcal{S}_k$ denotes the set of all possible states of the robot and the environment and is assumed to be continuous or discrete with growing property along: $\mathcal{S}_k = \mathcal{S}_{k-1} \times \triangle \mathcal{S}_{k-1}$, where  $\star \times \bullet$ denotes the direct product group of the group $\star$ and the group $\bullet$, $\triangle \mathcal{S}_{k-1}$ is the new added orthogonal subspace; the action space $\mathcal{A}\triangleq \mathcal{A}_c\times \mathcal{A}_d$ denotes the set of all actions the robot can perform, where sub-domain $\mathcal{A}_c$ is assumed to be continuous and embedded in a bounded metric space with distance metric function (Typically, the metric space to be a $D$-dimensional bounded Euclidean space) and sub-domain $\mathcal{A}_d$ is discrete; the observation space $\mathcal{O}$ means the set of all observations the robot can perceive and is assumed to be discrete; the transition function $T(\bm{s},~\bm{a},~\bm{s}') = Pr(\bm{s}'|\bm{s},~\bm{a})$ represents the nondeterministic effects of actions $\bm{a}\in \mathcal{A}$ working from the current state $\bm{s} \in \mathcal{S}$ to the resulting state $\bm{s}'\in \mathcal{S}$; the observation function $Z(\bm{s}',~\bm{a},~\bm{o}) = Pr(\bm{o}|\bm{s}',~\bm{a})$ is commonly a conditional probability function that represents the observation $\bm{o}$ the robot may perceive after performing action $\bm{a}\in\mathcal{A}$ in state $\bm{s}'\in \mathcal{S}$; the immediate reward function $R(\bm{s},~\bm{a},~\bm{s}')$ shows a function mapping from a state, an action, a state–action pair, or a tuple of state, action, and subsequent state to a value;  the state $\bm{s}\in \mathcal{S}$ is initially hidden in an initial belief $\bm{b}_0$, which is a probability distribution on the state space $\mathcal{S}$; $\gamma$ is a discount factor following $0<\gamma<1$, set as $0.9$ in this paper. 


The goal of solving a POMDP problem is to find an optimal policy $\Pi^*(\bm{b}) =\textbf{argmax}_{\bm{a}\in \mathcal{A}}Q(\bm{b}, \bm{a})$ for belief $\bm{b}$, where the $Q(\bm{b}, \bm{a})$-value is the value of executing action $\bm{a}$ when the agent is at belief $\bm{b}$ and continuing optimally afterwards. 

\subsection{Focused object search problem}\label{s42}
Our object search task involves locating a movable target object in clustered workspaces with unknown numbers ($n>1$) of obstacles (e.g., cups, laptops, or books) at unknown locations. We also have the point cloud and occupancy grid maps for a planning environment with furniture for robot navigation. While point clouds for some known obstacle objects are available, the target object and many other obstacles lack prior point cloud data, requiring detection via onboard sensors during execution. Unlike the assumption in OO-POMDP, because we measure the information about the object using real-world sensors, the object beliefs in our scenario are not independent and may have effects on each other. Additionally, the planning environment is continuous, with parameters such as object poses represented as continuous variables rather than discrete ones.

\textbf{State space $\mathcal{S}$.} A state $\bm{s} = \{\bm{s}_{r},~\bm{s}_{o^0},~\bm{s}_{o^1},\cdots,~\bm{s}_{o^n}\}\in \mathcal{S}$ consists of robot configuration and object-oriented factors, where $\bm{s}_{r}$ is the robot state and $\bm{s}_{o^i}$ denotes the object state of the $i$-th object. A robot state is $\bm{s}_{r}=(\bm{p}^\top, ~l_h,~l_p,~l_t)^\top$, where $\bm{p}$ is the 6D base pose, $l_h$ is the lift height, $l_p$ and $l_t$ are the pan and tilt angles of the robot head~\footnote{Unless otherwise specified, all statements are based on the Fetch robot. Our framework is adaptable to similar mobile robots with minimal changes. For example, applying it to the Stretch robot excludes $l_h$ due to its fixed camera height.}. It is easy to know that, at the beginning stage, the robot may not get into the workspace, so no detected object is available.  To guide the robot toward meaningful actions, we introduce a guessed target object with state $\bm{s}_{o^0}=(\bm{p}_{o^0}^\top,~s^x_{o^0},~s^y_{o^0},~s^z_{o^0},~\bm{g}_{o^0}^\top,~m_{o^0},~u_{o^0})^\top\in \mathbb{R}^{20}$, where $\bm{p}_{o^0}=({\bm{p}^p_{o^0}}^\top,~{\bm{p}^o_{o^0}}^\top)$ is the 6D object pose; $s^x_{o^0}$, $s^y_{o^0}$, and $s^z_{o^0}$ mean the sizes along the principal 3D axis of the object; $\bm{g}_{o^0}\in\mathbb{R}^{8}$ indicates 8 grid odds for identifying target/obstacle status, updated through object matching. The positions of these 8 grids are linked to the object’s pose, indicating that each visual observation can only capture certain parts (surfaces or grids) of the object. This helps us to identify similar objects with the same visual surfaces. $m_{o^0}\in\mathbb{R}$ is the odds value to evaluate the object’s move-ability; $u_{o^0}\in\{-2,~-1,~0,~1\}$ marks the object’s status~\footnote{-2 indicates the object has been removed; -1 means it is still updating without being declared or removed; 0 and 1 signify it has been declared as an obstacle or target object, respectively, and is no longer updating.}. The belief of the position $\bm{p}^p_{o^0}$ of the guessed target object is saved in a grid world $\mathcal{G}_f$ generated by all workspaces ${\mathcal{W}}$ using many odds values $Odd(\mathcal{G}_f)$ with a given resolution. When sampling, within each grid cell, the object’s position is uniformly sampled, and the probability is determined by the corresponding odds value. It is noted that the grids for manually generated workspaces, which are the surfaces of the furniture, differ from 8 grids associated with objects. The odds values are updated based on the field of view (FOV) of the camera using real-world measurement in the excursion process, which is similar to the update of the occupancy grid map~\cite{Zhao2024Occupancy, Yongbo2020Active}, following:
\begin{equation}\label{grid}
	\begin{aligned}
&\log Odd(g_j|z_{1:t})=\log Odd(g_j | z_t)+\\
&\log Odd(g_j |z_{1:t-1}),\\
& Odd(g_j|z_{1:t}) = {Pr(g_j|z_{1:t})}/{Pr(\neg g_j|z_{1:t})},
	\end{aligned}
\end{equation}
where $Pr(g_j|z_{1:t})$ and $Pr(\neg g_j|z_{1:t})$ means the probability of the object belonging to and not belonging to the $j$-th grid $g_j\in \mathcal{G}_f$ based on multiple observations $z_{1:t}$; $Odd(g_j|z_{1:t})$ is the corresponding odd value. However, during belief tree search, the odds values $Odd(\mathcal{G}_f)$ remain unchanged and are used to sample the guessed target object for MCTS. In the planning stage, this grid world is just used to sample the potential position of the guessed target object in the root node. The guessed target object is special with a constant (no need to estimate) orientation (set as (0,~0,~0,~1)), size (0.1,~0.1,~0.1), and move-ability value (set as 100, movable). The grid values $\bm{g}_{o^0}$ and the declared value $u_{o^0}$ are update-able in the belief tree search but need to be reinitialized as the given value after each excursion. The guessed target object is not the really detected objects. Fig.~\ref{1aa} shows the scenario about grid world for the guessed target object. The other object $\bm{s}_{o^i},~i\neq 0 \in \mathbb{R}^{1\times20}$ follows the same format and but all parameters should be updated both in belief tree search and real-world excursion. It is noted that the pose of the other objects $\bm{s}_{o^i}$ is estimated based on the point cloud which is independent of the grid world ${\mathcal{G}_f}$. The newly detected objects after each real-world excursion will be appended to the state vector and form a growing state space $(\bm{s},~\bm{s}_{add})\in  \mathcal{S}\times\triangle\mathcal{S}$. 

\begin{figure}[!th]
	\centering
	\includegraphics[width=\linewidth]{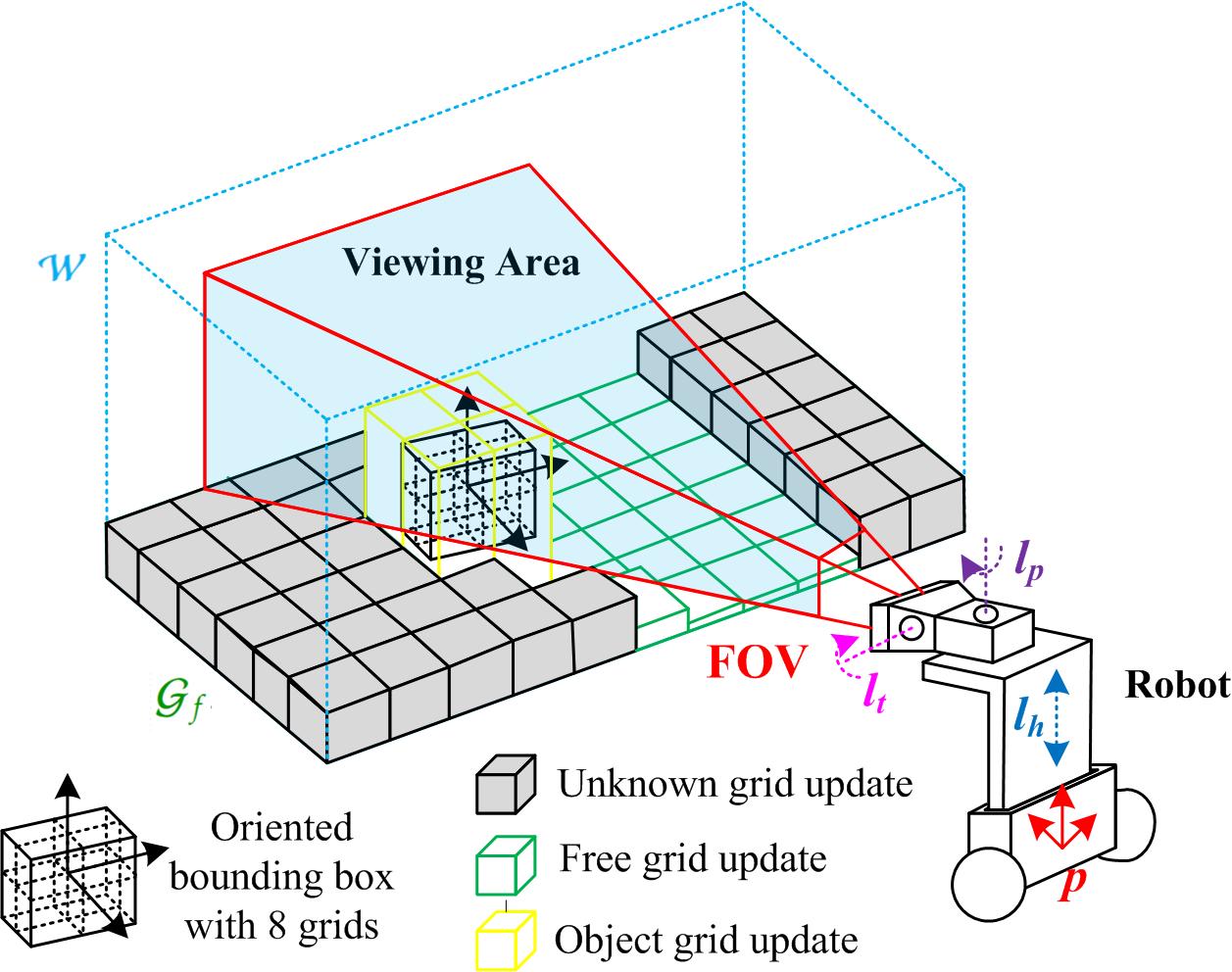}
	\caption{Grid world update in one frame} 
	\label{1aa}
\end{figure}

\textbf{Action space $\mathcal{A}$.} Our method is a task-level planning framework designed with multiple primitive actions, rather than control-level planning. These actions adjust the robot camera's FOV to cover the workspace and enable object manipulation. The action space $\mathcal{A}$ comprises 3 primitive actions, including changing the robot configuration $\bm{a}_r$, which belongs to a continuous action domain $\bm{a}_r\in\mathbb{R}^9$ , declaring the updating object ($u_{o^i}=-1$) to be the target ($u_{o^i}=1$) or obstacle object ($u_{o^i}=0$), and removing $i$-th object using robot manipulator. Changing robot configuration $\bm{a}_r$ contains the actions to move the robot base to a given target, to change the angles of the robot head pan (left/right) and tilt (up/down), and to move the height. Because the declaring and removing actions are discrete, our action space follows the previous setting with both continuous and discrete actions $\mathcal{A}_c\times \mathcal{A}_d$. For simplicity, we assume all actions succeed with a 100\% success rate.\footnote{In practice, achieving 100\% success is challenging, even in a Gazebo simulator. For object removal, we use the set\_model\_state function with Gaussian noise to adjust the model's state. This compromise will be improved in future work. \label{f1}}

\textbf{Observation space $\mathcal{O}$.} The observation focuses on the FOV frustum $V$ from the robot's head camera. Each object is modeled as an oriented box with 8 grids. For every observation containing visible objects, we update the 4 nearest grids\footnote{The parameter 4 is manually set to simulate visual surfaces, but can be adjusted, e.g., based on the number of the visible grids in this frame.} of the observed object with noisy log-odds values: 
$-c_o + \eta$ (negative), $c_o + \eta$ (positive), and $\eta$ (near-zero), where $c_o>0$ is a constant value and $\eta$ is Gaussian noise. Due to occlusions caused by other objects, the robot's camera may not detect all four nearest grids, but we still update 4 grids. We set the observation space to be discrete. By aggregating observations of  $n_o$ detected objects, the final observation for this step is represented as $\{\{i, \cdots, j\},~\{\bm{o}(i),\cdots,\bm{o}(j)\}\},~|\{i, \cdots, j\}|=n_o$, where $\{i, \cdots, j\}$ indexes the observed objects with at least one updated grid, and $\{\bm{o}(i),\cdots,\bm{o}(j)\}$ contains their updated log-odds values, $|\star|$ represents the set's cardinality.

\textbf{Transition function $T$.} The transition function $T$ primarily accounts for changes in camera motion induced by the robot's configuration  $\bm{a}_r$ and updates to object states caused by declaring or removing actions. Camera motion follows a rigid transformation and the robot's static structural information is obtained from the \emph{tf} message. A Declaring actions update an object's state $u_{o^i}$ by comparing the mean of the two smallest log-odds values ($\nu=2$) among its 8 grids to thresholds $C^o_d$ (obstacle) and $C^o_t$ (target). Objects remain static and rigid unless removed. When a removing action is performed on the $i$-th object, the object is moved outside the workspace, and its status $u_{o^i}$ is set to -2, indicating that it will no longer obstruct the observation of other objects in the workspace.

\textbf{Observation function $Z$.} 
The coordinates of the eight grids for the $i$-th object are computed using the object's center point $\bm{p}^p_{o^i}$, orientation $\bm{p}^o_{o^i}$, and size dimensions $s^x_{o^0}$, $s^y_{o^0}$, and $s^z_{o^0}$, estimated from point cloud data.   After an action, the visible grids within the four nearest grids of different objects in the current frame are updated. The updated value depends on the mean log-odds of the grid: If the original mean log-odds value of this grid is positive and exceeds a positive threshold $\nu_p>0$, the updated value will be set to be positive $c_0+\eta$. If below a negative threshold $\nu_n<0$, the updated value will be set to be negative $-c_0+\eta$. Otherwise, the value between the positive and negative thresholds will generate a near-zero $\eta$. To encourage exploration, the initial grid values $\bm{g}_{o^0}$ of a fake object are reinitialized to values greater than the positive threshold $\nu_p>0$ after each real-world excursion.

\textbf{Reward function $R$.} The reward function is a crucial component of the POMDP formulation, guiding the agent toward successfully completing the task.  The robot's expected sequence of actions is that the robot begins by moving to the workspace for initial observation, iteratively adjusts its viewpoint to update log-odds in object grids, declares objects as obstacles or targets, and finally removes obstacles to improve visibility or directly removes the target to complete the task. Following these expected actions, successfully removing the target yields the highest reward ($R_{max} \gg 0$), while correct declarations of obstacles ($R_{c_o}$) and targets ($R_{c_t}$) offer moderate rewards, with $R_{max} \gg R_{c_t} > R_{c_o} \gg 0$. All actions will receive a negative step cost $R_{min}<0$. Due to the complexity of the removing action, its cost is set as $2R_{min}$. Illegal actions result in a significant penalty $R_{ill}\ll 0$. Illegal actions include: (a)  Moving to a target pose that is within an occupied grid of the occupancy map (violating collision avoidance); (b) Attempting to remove an object that is estimated to be non-movable; (c) Only the updating object is allowed to be declared and others are illegal; (d) Declaring an object that does not satisfy the declaration conditions; (e) Attempting to remove objects that have not been declared or objects that have already been removed.

\section{Online POMDP solver: GNPF-$k$CT}
\label{s6}
\subsection{Overview}
GNPF-$k$CT is an anytime online POMDP solver that assumes the Q-value of the POMDP problem follows Lipschitz continuity in the action space\footnote{Definition: A value function $f$ mapping $\mathcal{S}\subset\mathbb{R}^n$ to $\mathbb{R}^m$ is Lipschitz continuous at $\bm{x} \in \mathcal{S}$ if there exists a constant $\eta$ such that $|f(\bm{y}) - f(\bm{x})| \leq \eta|\bm{y} - \bm{x}|$ for all $\bm{y}\in \mathcal{S}$ near $\bm{x}$.}. The solver alternates between planning, execution, observation, and filtering, focusing on planning to identify the optimal action from the current belief $\bm{b}_0$. To reduce the complexity of the high-dimensional continuous action domain, a pre-trained neural process predicts the feasibility of sampled actions with associated uncertainties, filtering out irrelevant actions. The remaining actions are clustered into hyperspheres using $k$-center clustering. A belief tree $\mathcal{T}$ is then constructed, where nodes represent beliefs and actions. Each belief node $\bm{node}_o \in \mathcal{T}$ is linked to a dynamic list $\mathcal{L}(\bm{node}_o)$, initialized from the previous tree and augmented with newly generated hyperspheres to include connected action nodes. During episode simulations, $\mathcal{L}(\bm{node}_o)$ expands as $k$-center clustering refines the action space further. In POMDP problems with growing state spaces (e.g., detecting new objects), the observation introduces unpredicted states, requiring classical solvers to rebuild the entire belief tree, which is computationally expensive. GNPF-$k$CT avoids this by saving a history tuple $\bar{\mathcal{H}} = <H_s, H_r, H, dim_r(H_s)>$. The list $H_s$ tracks all state histories from simulations, where rows represent particle IDs and columns track their state histories.  Two additional lists, $H$ (action-observation history) and $H_r$ (immediate rewards), as well as a vector $dim_r(H_s)$ (depth vector) are associated with $H_s$. These saved histories allow the solver to approximate and reuse the previous belief tree by updating $H_s$, avoiding a full rebuild. Algorithm~\ref{alg:cap11} outlines the complete method, with details provided in subsequent sections\footnote{Gray text provides explanations for the corresponding line.}. 


\begin{algorithm}
\small
\caption{GNPF-$k$CT framework}\label{alg:cap11}
\begin{algorithmic}[1]
\Require Initial belief $\bm{b}_0$
\Ensure Task is complete or not
\State $\bm{b}\gets{\bm{b}_0}$
\State isTerminal = False
\While{isTerminal is False}
\State \textcolor{red}{-----------------Planning stage----------------}
\If{$\bm{s}_{add}\neq \emptyset$}
\State $\bar{\mathcal{H}}\gets$Episode\_simulation\_along\_list($\bar{\mathcal{H}},~\bm{s}_{add}$) \Comment{\textcolor{mygray}{Update states $\bm{s}$ saved in $H_s$ by transition and observation functions considering new states $\bm{s}_{add}$; See Algorithm~\ref{alg:cap9}.}}
\State $\mathcal{T} \gets$Grow\_tree($\mathcal{T}$,~$\bar{\mathcal{H}}$)\Comment{\textcolor{mygray}{Grow the belief tree using these updated particles and decide terminal nodes by saved depth vector $dim_r(H_s)$; See Algorithm~\ref{alg:euclid101}.}}
\EndIf
\State $\mathcal{A}_r =\{\bm{a}_i\}\gets$~Network$\_$sampling($Odd(\mathcal{G}_f)$,~$\mathcal{P}$,~$\bm{b}$,\\~$C_{threshold}$)\Comment{\textcolor{black}{\textcolor{mygray}{Sample a lot of candidate actions and filter them using neural network; See Algorithm~\ref{alg:cap4x}.}}}
\State $\mathcal{C}_a=\{\bm{center}_i\},~\mathcal{R}_a=\{{range}_i\}$~$\gets k$-Clustering($\mathcal{A}_r$)\Comment{\textcolor{mygray}{Generate high-dimension hyperspheres by partitioning and covering the candidate actions set using $k$-Clustering over some discrete actions, satisfying $|\mathcal{C}_a|=|\mathcal{R}_a|=k$.}}
\State $\mathcal{L}(\bm{node}_o)$ = \text{Update}($\mathcal{T}$,~$\mathcal{C}_a$,~$\mathcal{R}_a$)  \Comment{\textcolor{mygray}{Extend the list of each belief node based on the cluster centers and ranges recursively.}}
\While{planning budget not exceeded}
\State $\bm{s}\gets \text{Sampling}(\bm{b})$ 
\State $\mathcal{T},~\bar{\mathcal{H}}\gets$Episode\_simulation($\mathcal{T}$, $\bm{s}$, $\bm{h}$, $\bar{\mathcal{H}}$) \Comment{\textcolor{mygray}{Go through the  MCTS sampling by refining the action domain, then save the tuple $\bar{\mathcal{H}}$ for each particle; See Algorithm~\ref{alg:cap4}.}}
\EndWhile
\State \textcolor{red}{------Execution and observation stages--------}
\State $\bm{a}^*, \bm{center}^*, {range}^* \gets$ Get the best action in $\mathcal{T}$ from $\bm{b}$
\While{planning budget not exceeded}
\State $\bm{a}_{implement}~\gets$ Action\_sampling($\bm{a}^*$,~$\bm{center}^*$ $, {range}^*$) \Comment{\textcolor{mygray}{Get a discrete action from the selected domain.}}
\If{\text{Reasonability\_check}$(\bm{a}_{implement})$}
\Comment{\textcolor{mygray}{Only select the reasonable discrete action, such as preventing a robot from moving
into obstacles.}}
\State break
\EndIf
\EndWhile
\State ($\bm{o}$, \text{isTerminal}) ~$\gets$~Execute $\bm{a}_{implement}$~\Comment{\textcolor{mygray}{Execute the selected action and get real observation; See Section~\ref{s3}}}
\State \textcolor{red}{-----------------Filtering stage-----------------}
\If{$\bm{s}_{add}\neq \emptyset$}
\State $\bm{o}~\gets$ Cut\_observation($\bm{o},~\bm{s},~\bm{s}_{add}$)~\Comment{\textcolor{mygray}{Get the sub-observation available in the old tree~$\mathcal{T}$ by removing sub-part related to $\bm{s}_{add}$.}}
\EndIf
\State $\bm{b},~\mathcal{T}~\gets$\text{Filtering}($\bm{b},~\mathcal{T},~\bm{a}_{implement},~\bm{o}) $\Comment{\textcolor{mygray}{Particle filtering to get a new belief and the corresponding sub-tree.}}
\State $\bar{\mathcal{H}} \gets$ \text{Cutting}$(\bm{a}^*, \bm{o}, \bar{\mathcal{H}})$\Comment{\textcolor{mygray}{Cut lists based on observation $\bm{o}$; See Algorithm~\ref{alg:cap8}.}}
\EndWhile
\end{algorithmic}
\end{algorithm}

\subsection{GNPF-kCT: NP network for filtering}
\subsubsection{Motivation about network filtering.}\label{s3B1}
In complex tasks, many primitive actions with continuous control parameters ($\mathcal{A}c$) are often inefficient or impractical for achieving long-term goals, such as the optimal policy goal of POMDPs, or short-term goals, like completing a primitive action.  Specifically, if the optimal action $\bm{a}^* = \textbf{argmax}_{\bm{a}\in\mathcal{A}_c} Q(\bm{b}, \bm{a})$ lies within the continuous action domain $\mathcal{A}_c$, there exists a smaller feasible region $\mathcal{X} \subseteq \mathcal{A}_c$ that satisfies $\bm{a}^* \in \mathcal{X}$. An intuitive idea is to use a score function $g$, with some inherent uncertainty, to quickly identify this region $\mathcal{X}$ given $\bm{s}$ and $\alpha(\mathcal{P})$, where $\alpha(\mathcal{P})$ represents all configuration settings of the POMDP problem $\mathcal{P}$. Gaussian Processes (GPs) are a natural choice for modeling such functions, leveraging probabilistic inference to update predictions, but they become computationally expensive in high-dimensional tasks. To address this, we propose to use Neural Processes (NPs), which also define distributions over functions while being computationally efficient and adaptable to various inputs leveraging the advantages of neural networks. Unlike traditional approaches like Q-learning~\cite{hausknecht2015deep}, which train complex networks with large minimum description lengths~\cite{Zirui2024Large} to pursue $\mathcal{X}=\bm{a}^*$, we adopt a simpler scoring network. This network prioritizes short-term, physically meaningful goals (e.g., single-step objectives), making training and implementation more practical. The NP-based scoring network efficiently filters out ineffective actions, enabling a more compact POMDP framework to select optimal actions based on the filtered actions and current beliefs. We employ attentive NPs $nn(\mu(\bm{a}), \sigma(\bm{a}))$ to learn the scoring function, where $\mu(\bm{a})$ is the mean function and $\sigma(\bm{a})$ is the kernel function. Importantly, our aim is not to mandate the use of the same network for all POMDP tasks but to highlight the action-filtering concept. Users are encouraged to train domain-specific networks tailored to their problems, integrating short-term primitive action learning with long-term POMDP frameworks in continuous action domains:
\begin{equation}\label{smooth-constraint}
	\begin{aligned}
	g(\bm{a},~\bm{s},~\alpha(\mathcal{P})) \sim nn(\mu(\bm{a}),~\sigma(\bm{a})).
 \end{aligned}
\end{equation} 

To preserve the optimal action $\bm{a}^*$, eliminate irrelevant actions, and accurately represent the complex feasible region $\mathcal{X}$, we generate a set of actions ${\bm{a}_i} \subset \mathcal{A}_c$ such that, with high probability, $g(\bm{a}_i,~\bm{s},~\alpha(\mathcal{P})) \geq 0$.  These samples are then encapsulated using high-dimensional hyperspheres to represent the feasible region. To ensure a high likelihood of selecting effective robot actions, we establish a predictive score bound for these samples, derived from the theorem:


\begin{theorem}\label{t_add_new}
Let $\delta \in (0, 1)$, $g(\bm{a},~\bm{s},~\alpha(\mathcal{P})) \sim nn(\mu(\bm{a}),~\sigma(\bm{a}))$ and set $\beta^{*} =
(2\ln(1/\delta)) ^{\frac{1}{2}}$. If $\mu(\bm{a}_i)>\beta^{*}\sigma(\bm{a}_i)$, $\forall i = 1,\cdots,T$, then $Pr[g(\bm{a}_i,~\bm{s},~\alpha(\mathcal{P}))> 0]\geq 1 - \delta$. 
\end{theorem}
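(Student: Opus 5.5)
The plan is to treat the neural-process output at each fixed sampled action $\bm{a}_i$ as a Gaussian predictive distribution, $g(\bm{a}_i,\bm{s},\alpha(\mathcal{P}))\sim\mathcal{N}(\mu(\bm{a}_i),\sigma^2(\bm{a}_i))$. This is the standard reading of the attentive NP $nn(\mu(\bm{a}),\sigma(\bm{a}))$ in \eqref{smooth-constraint}, whose decoder outputs a mean and a standard deviation. With that reading, the claim reduces to a one-dimensional Gaussian tail bound in the style of GP-UCB arguments, applied separately to each $i=1,\dots,T$. No union bound is needed, since the statement is pointwise in $i$.

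First, I handle the degenerate case $\sigma(\bm{a}_i)=0$. Here $g=\mu(\bm{a}_i)>0$ almost surely, because the hypothesis reads $\mu(\bm{a}_i)>0$, so the claim is immediate. Otherwise I standardize with $Z=(g-\mu(\bm{a}_i))/\sigma(\bm{a}_i)\sim\mathcal{N}(0,1)$. Then
\[
Pr[g\le 0]=Pr\bigl[Z\le -\mu(\bm{a}_i)/\sigma(\bm{a}_i)\bigr]=Pr\bigl[Z\ge \mu(\bm{a}_i)/\sigma(\bm{a}_i)\bigr].
\]
By the hypothesis $\mu(\bm{a}_i)>\beta^*\sigma(\bm{a}_i)$, this is at most $Pr[Z>\beta^*]$.

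Second, I apply the Chernoff tail bound $Pr[Z>c]\le e^{-c^2/2}$ for $c\ge 0$. It follows from Markov's inequality applied to $e^{\lambda Z}$ with $\lambda=c$. An alternative is the bound $Pr[Z>c]\le \tfrac12 e^{-c^2/2}$, obtained by comparing $\int_c^\infty e^{-r^2/2}\,dr$ with $e^{-c^2/2}\int_0^\infty e^{-r^2/2}\,dr$ after the substitution $r\mapsto r+c$, which is even tighter. Plugging in $c=\beta^*=(2\ln(1/\delta))^{1/2}$ gives $e^{-(\beta^*)^2/2}=\delta$, hence $Pr[g\le0]\le\delta$ and $Pr[g>0]\ge 1-\delta$ for every $i$.

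The main obstacle is not the calculation but the modelling step: justifying that the NP's predictive distribution at $\bm{a}_i$ is Gaussian, or at least sub-Gaussian with proxy variance $\sigma^2(\bm{a}_i)$. If the NP is only assumed to output a mean and scale, I would state this explicitly as the interpretation of ``$\sim nn(\mu,\sigma)$''. I would also note that the argument goes through verbatim for any sub-Gaussian predictive law, since only the moment-generating-function bound is used.
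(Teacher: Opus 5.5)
Your proposal is correct and is essentially the paper's argument: treat $g(\bm{a}_i,\bm{s},\alpha(\mathcal{P}))$ as $\mathcal{N}(\mu(\bm{a}_i),\sigma^2(\bm{a}_i))$, standardize, and apply the Gaussian tail bound $Pr[Z>\eta]\le \frac{1}{2}e^{-\eta^2/2}$, which the paper derives by exactly the shift substitution you offer as your alternative. The one difference is that the paper first obtains the two-sided interval $|g-\mu(\bm{a}_i)|\le\beta^*\sigma(\bm{a}_i)$ with probability at least $1-\delta$ by summing both tails, and then reads off positivity from its lower endpoint; you bound only the relevant tail, which is slightly cleaner and, with the factor $\frac{1}{2}$, actually gives failure probability $\delta/2$.
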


\begin{proof}
Please refer to Appendix~\ref{a1}.
\end{proof}

 This theorem provides a condition for actions $\bm{a}_i$: If all the sampled actions using $nn(\mu(\bm{a}_i),~\sigma(\bm{a}_i))$ satisfying $\mu(\bm{a}_i)>\beta^{*}_{i}\sigma(\bm{a}_i)$, then all samples will satisfy the constraint $g(\bm{a}_i,~\bm{s},~\alpha(\mathcal{P}))  > 0$ with probability at least $1 - \delta$. This simple conclusion offers a good way to sample good robot actions using their predictive scores.

\subsubsection{Neural process filtering implemented for object searching task.}
 The goal of the object search process is to locate the target object (assumed to be the $i$-th object) and update its grid belief $\bm{b}(\bm{g}{o^i})$ from an initial belief $\bm{b}_{0}(\bm{g}_{o^i})$ to a target belief $\bm{b}_T(\bm{g}{o^i})$ by adjusting the robot’s FOV. This process involves passing a certain threshold to enable subsequent declaration and removal actions. Among all primitive actions $\mathcal{A}$, those deemed \emph{efficient} $\{\bm{a}_i,\cdots\}\in \mathcal{X}$ are identified if, at step $j$, they can update the belief $\bm{b}_{j}(\bm{g}_{o^i})$ to move closer to the target belief $\bm{b}_{T}(\bm{g}_{o^i})$ within a bounded distance: $\omega_{\triangle} \triangle \leq \|\bm{b}_{j-1}(\bm{g}_{o^i})-\bm{b}_T(\bm{g}_{o^i})\|_1 - \|\bm{b}_{j}(\bm{g}_{o^i})-\bm{b}_T(\bm{g}_{o^i})\|_1 \leq \triangle $, where $\triangle$ serves as a natural upper bound for grid updating, like updating one surface of the nontransparent object, since it is impossible to observe all surfaces of an opaque object simultaneously. $\omega_{\triangle},~0<\omega_{\triangle}<1$ means at least one grid is observed and updated correctly,  $\|\star\|_1$ means the L1 norm for $\star$. According to Theorem~\ref{t_add_new}, if the action $\bm{a}_i$ satisfies $\mu(\bm{a}_i) > \beta^{*}$ $\sigma(\bm{a}_i)$, the probability of the grid belief moving closer to $\bm{b}_T(\bm{g}_{o^i})$ is at least $1 - \delta$. Given this, a successful action sequence that reaches the target belief and completes the task selected by the POMDP solver must include at least $N_l = [\|\bm{b}_{0}(\bm{g}_{o^i})-\bm{b}_{T}(\bm{g}_{o^i})\|_1/\triangle]$ efficient actions within $N_p\geq N_l$ potential primitive actions, where $[\star]$ means to round to an integer smaller than the value $\star$. Equality holds only if every primitive action is efficient. Faster task completion with higher rewards is achieved by prioritizing actions with larger $\beta^*$, satisfying $\mu(\bm{a})>\beta^{*}\sigma(\bm{a})$. All these ideas rely on accurate learning of the scoring function.  The scoring function $g(\bm{a},~\bm{s},~\alpha(\mathcal{P}))$ predicts the likelihood of the robot observing updated grids for the target object. This probability depends on the robot's configuration $\bm{s}_r$, obstacle data from the fused point cloud of detected objects $\{o_0,~o_1,\cdots,~o_n\}\in\mathcal{M}'_c$, the grid world status $\mathcal{G}_f$ (represented as a 2D grayscale image), and the 8 grid odds $\bm{g}_{o^i}$ of the target object (assume to be the $i$-th object), satisfying $g(\bm{a},~\bm{s},~\alpha(\mathcal{P})) \triangleq g(\bm{s}_r,~\mathcal{G}_f,~\mathcal{M}'_c,~\bm{g}_{o^i})$.

\subsubsection{Simulation data generator.}
To learn the scoring function, we use an iterative process to autonomously generate simulation data in a Gazebo environment with two simulators, as detailed in Algorithm~\ref{alg:cap}. Initially, predefined action sequences generate point clouds of detected objects and the corresponding grid world for the guessed target object. Subsequently, we repeatedly modify the robot's configuration and object grid arrangement, using object detection to calculate the probability of detecting the target object. This process operates fully autonomously once candidate actions and the workspace $\mathcal{W}$ are provided. Fig.~\ref{fx1} illustrates a scenario involving a sampled target object, a specific robot configuration, and successful online color-based object detection. 

\begin{figure}[!htb]
	\centering
\includegraphics[width=0.9\linewidth]{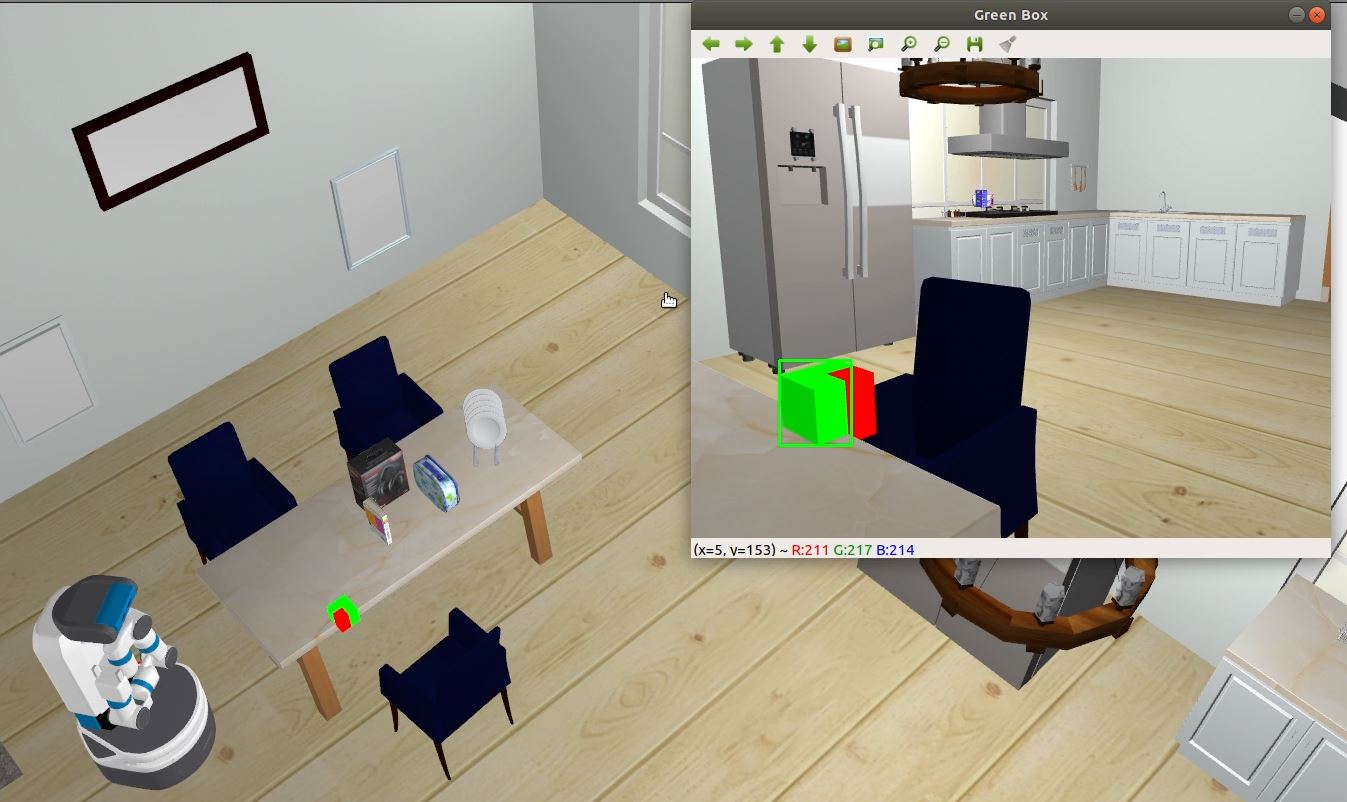}
\caption{Robot successfully detects the green grids in this scene and its variable "$score_d$" will add 1 in Algorithm~\ref{alg:cap}.}
\label{fx1}
 \end{figure}

\begin{algorithm*}
\small
\caption{Simulation dataset generator}\label{alg:cap}
\begin{algorithmic}[1]
\Require 2D occupancy grid map, corresponding 3D point cloud map, Fetch robot simulator in Gazebo environment, a set of manually selected candidate actions $\mathcal{A}_{manual}$ 
\Ensure  Data mappings $(\bm{s}_r,~\mathcal{G}_f,~\mathcal{M}'_c,~\bm{g}_{o^i})\rightarrow g(\bm{a},~\bm{s},~$ $\alpha(\mathcal{P}))$
\While{Dataset size is not enough}
\State Re-initialize the robot simulator and gazebo environment with different object numbers and poses. \Comment{\textcolor{mygray}{Different scenarios.}}
\State Randomly generate $I$ classes of action sequences $\{\bm{a}^1_i,~\bm{a}^2_i,\dots,~\bm{a}^J_i\},~\bm{a}_i^j \in  \mathcal{A}_{manual} \subseteq  \mathcal{A},~i=1,~2,\cdots,~I$ from $\mathcal{A}_{manual}$ with limited $J$ steps. \Comment{\textcolor{mygray}{Different action sequences generate different observed point clouds and grid worlds.}}
\For{$i = 1$  to $ I$}  \Comment{\textcolor{mygray}{Different action sequences for different scenarios.}}
\For{$j = 1$  to $ J$} \Comment{\textcolor{mygray}{Action steps for growing obstacle environment.}}
\State Set robot status based on the action $\bm{a}_i^j$ with some noise.  \Comment{\textcolor{mygray}{Apply the actions to generate different observed environment.}}
\State Collect and fuse point cloud $\mathcal{M}'_c = f_F(\bm{P}\bigcup\bm{P}_i^j)$ after ICP and filtering operations. \Comment{\textcolor{mygray}{Generate point cloud.}}
\State Compute the odds update for the grid world $\mathcal{G}_f$ based on FOV and object detection.  \Comment{\textcolor{mygray}{Generate grid world with odds.}}
\State Remove the target object (identified) and undetected objects in gazebo environment. \Comment{\textcolor{mygray}{Use only the information from the detected obstacle environment for visual occlusion testing.}}
\State Collect and save $\mathcal{M}'_c$ and $Odd(\mathcal{G}_f)$.
\For{$k = 1$  to $ K$}
\State Uniformly sample and apply configurations $\bm{s}_r$ in continuous action domain $\mathcal{A}_c$. \Comment{\textcolor{mygray}{Generate robot configuration $\bm{s}_r$.}}
\State Build a cube with 8 red and green grids. The color distribution is decided by $\bm{g}_{o^i}\in\{0,~1\}^{8}$. \Comment{\textcolor{mygray}{Avoid observed grids, marked as 1 and red color, to be re-observed assuming that $i$-th object has the highest probability to be target object.}}
\State Collect and save $\bm{s}_r$ and $\bm{g}_{o^0}$.
\State $score_d \gets  0$   \Comment{\textcolor{mygray}{Index to record the observed times under given parameters $(\bm{s}_r,~\mathcal{G}_f,~\mathcal{M}'_c,~\bm{g}_{o^0})$.}}
\For{$l = 1$  to $ L$} \Comment{\textcolor{mygray}{Repeat multiple times to compute probability.}}
\State Sample the position of the cube with 8 grids in different positions based on odds value $Odd(\mathcal{G}_f)$.
\If{Object detection finds the green area based on the collected RGBD image is True}
\State $score_d \gets  score_d+ 1$ \Comment{\textcolor{mygray}{Based on given parameters, record the times to update grids in $L$ times attempts.}}
\EndIf
\EndFor 
\State $g(\bm{a},~\bm{s},~$ $\alpha(\mathcal{P})) \gets  \frac{score_d}{L}\times 100\%$.
\State {Collect and save} $g(\bm{a},~\bm{s},~\alpha(\mathcal{P}))$. \Comment{\textcolor{mygray}{Compute probability.}}
\EndFor 
\State Rearrange all objects based on their original poses before removing them. \Comment{\textcolor{mygray}{Recover the object setting to generate the grown point cloud and the updated grid world.}}
\EndFor 
\EndFor
\For{$m = 1$  to $M = I\times J\times K \times L$}
\State Normalize the following data mappings: $(\bm{s}_r, Odd(\mathcal{G}_f), \mathcal{M}'_c, \bm{g}_{o^i}) \rightarrow g(\bm{a},~\bm{s},~\alpha(\mathcal{P}))$. \Comment{\textcolor{mygray}{Get learning dataset.}}
\EndFor 
\EndWhile
\end{algorithmic}
\end{algorithm*}

The inputs to Algorithm~\ref{alg:cap} include: 1) a 2D occupancy grid map for initializing 3D ICP matching; 2) A 3D point cloud map, employed to create a fused object point cloud by performing scan matching with the current 3D camera's point cloud and filtering out points outside the workspace; 3) The Fetch simulator for real-time sensor data; 4) A set of manually selected candidate actions, $\mathcal{A}_{manual}$, chosen from randomly generated actions to ensure good diversity. The outputs are training pairs for the neural network mapping, which connect the robot state $\bm{s}_r$, the generated grid world $\mathcal{G}_f$, the detected objects in the workspace (represented as point clouds $\mathcal{M}'_c$), and the status of the target object's 8 grids $\bm{g}_{o^i}$ to a scoring value. This scoring value represents the probability of updating the grids of the target object.


\subsubsection{Network structure and training.}
Considering all the input formats, we employ various encoder networks, including partial pointNet~\cite{Qi2017Pointnet}, ResNet18~\cite{He2016Deep}, and Multilayer perception (MLP) to process different inputs. The point cloud $\mathcal{M}'_c$ is encoded into the global feature vector with 1024 elements. The odds value for the grid world about the guessed target object is transformed into a 2D grayscale image and further encoded as a 1000-dimension vector using ResNet-18. The robot configuration and the 8-grid odds are processed by MLP to generate three 10-dimensional global features. The concatenated global features form a tensor of size $M\times 2054$. For NPs, the whole tensor is divided into training data $M_1\times 2054$, including the context data $(\bm{x_C},~\bm{y_C})$ and the target data $(\bm{x_T},~\bm{y_T})$, and test data $M_2\times 2054$, satisfying $M_1 + M_2 = M$. With these data, the latent variable version of the NP model models these conditional distributions as:
\begin{equation}\label{smooth-constraint}
	\begin{aligned}
	&p(\bm{y_T}|\bm{x_T},~\bm{x_C},~\bm{y_C})\triangleq\int p(\bm{y_T}|\bm{x_T},~\bm{z})\\&q(\bm{z}|r(\bm{x_C},~\bm{y_C})) d\bm{z}
 \end{aligned}
\end{equation} 
where $r(\bm{x_C},~\bm{y_C})$ is an encoder function that shows a representation of the context data using an MLP, $p(\star|\bullet)$ denotes the conditional prior for $\star$ given $\bullet$, and $q(\star|\bullet)$ means the variational posterior for $\star$ given $\bullet$. Then, with the latent variable $\bm{z}$ generated by the Gaussian sampling of the representation $r(\bm{x_C},~\bm{y_C})$, the MLP-based decoder process is applied for the latent variable $\bm{z}$ and the test data to model a final Gaussian distribution for prediction. The whole network structure without training based on the test data and the context data is shown in Fig.~\ref{fig:landscape} in Appendix~\ref{ab2}. For training, the network about the encoder part to the latent variable needs to work on both the context data and the target data to get Kullback–Leibler (KL) divergence $D_{KL}(\bullet||\bullet)$ between prior and posterior. The parameters of the whole network are learned by maximizing the evidence lower bound (ELBO) $\log p(\bm{y_T}|\bm{x_T},~\bm{x_C},~\bm{y_C}) \geq u_{ELBO}$~\cite{Garnelo2018Neural}:
\begin{equation}\label{smooth-constraint}
	\begin{aligned}
	&u_{ELBO}=\mathbb{E}_{q(\bm{z}|\bm{x_C},~\bm{y_C})}\left[\log p(\bm{y_T}|\bm{x_T},~\bm{z})\right]\\&+D_{KL}(q(\bm{z}|\bm{x_T},~\bm{y_T})||q(\bm{z}|\bm{x_C},~\bm{y_C})),
     \end{aligned}
\end{equation}
where $\mathbb{E}_{q(\bm{z}|\bm{x_C},~\bm{y_C})}\left[\log p(\bm{y_T}|\bm{x_T},~\bm{z})\right]=\int q(\bm{z}|\bm{x_C},$ $~\bm{y_C})\log p(\bm{y_T}|\bm{x_T},~\bm{z}) d\bm{z}$. Building on the findings in~\cite{Le2018Empirical} and to enhance learning performance, the context data are selected as a subset of the target data, and the observation variance is modeled as a latent variable constrained within a range of 0.1 to 1. Additionally, the NP model incorporates a self-attention module to preprocess context and test data tensors, reducing predictive uncertainty near context points, as shown in~\cite{Kim2019Attentive}.

\subsection{GNPF-$k$CT: Belief Tree Construction}
The belief tree $\mathcal{T}$ is a tree of which the nodes include the belief nodes $\bm{node}_o$ and action nodes $\bm{node}_a$ and the edges are action $\bm{a}\in \mathcal{A}$ and observation $\bm{o}\in \mathcal{O}$. The belief nodes $\bm{node}_o$ save a set of states $\bm{s}\in \bm{b}$  forming belief $\bm{b}$. An action node $\bm{node}_a$ is a child of the belief node $\bm{node}_o$ via edge $\bm{a}$ if and only if transition function $T(\bm{s},~\bm{a},~\bm{s}')$ satisfies. Belief nodes are linked by the action–observation pairs $(\bm{a},~\bm{o})$ based on  transition function $T(\bm{s},~\bm{a},~\bm{s}')$ and observation function $Z(\bm{s}',~\bm{a},~\bm{o})$. The episode simulation framework is in Algorithm~\ref{alg:cap4}.

\begin{algorithm}
\small
\caption{Episode\_simulation($\mathcal{T}$, $\bm{s}$, $\bm{h}$, $\bar{\mathcal{H}}$) }\label{alg:cap4}
\begin{algorithmic}[1]
\Require Belief tree $\mathcal{T}$, sampled state $\bm{s}$, history in episode $\bm{h}$, the history tuple $\bar{\mathcal{H}}$
\Ensure  Discounted reward value $r$, updated history tuple $\bar{\mathcal{H}}$, and updated belief tree $\mathcal{T}$.
\State $H,~H_s,~H_r,~dim_r(H_s)\gets{\bar{\mathcal{H}}}$
\State belief node $\bm{node}_o \gets (N(\bm{node}_o), V(\bm{node}_o))$ based on history $\bm{h}$
\If{$\bm{node}_a$ is False}
\State Generate all action nodes $\bm{node}_a \gets (N(\bm{node}_o,~\bm{a}), V(\bm{node}_o,~\bm{a}))$ based on list $\mathcal{L}(\bm{node}_o)$;
\State $r,~\bm{h}_s,~{\bm{h}}_r,~\bar{\bm{h}},\gets$RolloutHeuristic\_Random($\bm{s}$, $\bm{h}+\{\bm{a}, \bm{o}\}$) \Comment{\textcolor{mygray}{A rollout policy with the random action strategy. $\bm{h}_s$ and $\bar{\bm{h}}$ are the record states and action-observation pairs from tree root until to the terminal of rollout.}}
\State $H_s \gets H_s\cup \{\bm{h}_s\}$, $H_r \gets H_r\cup \{\bm{h}_r\}$, $H \gets H\cup \{\bar{\bm{h}}\}$, $dim_r(H_s) \gets dim_r(H_s) $ $\cup\{|\bm{h}|\}$ \Comment{\textcolor{mygray}{Terminal node before rollout, so save lists and depth vector $dim_r(H_s)$, until terminal node, by Hash table format sharing the particle ID as keys.}}
\State \Return $r$ \Comment{\textcolor{mygray}{Get the random rollout reward.}}
\Else 
\State $\bm{a}^* = \text{argmax}_{\bm{a}\in \mathcal{L}(\bm{node}_o)} U (\bm{node}_o,~\bm{a})$ \Comment{\textcolor{mygray}{Eq.~\eqref{list1}}}
\State $\mathcal{T},~H\gets$Refine($\mathcal{T}$,~$\bm{node}_o$,~$\bm{a}^*$,~$H$)
\Comment{\textcolor{mygray}{Grow list, revise tree, and update history list. See Algorithm~\ref{alg:cap5}.}}
\State $\bm{a}_u^*\gets$Action\_sampling($\bm{a}^*$,~$\bm{center}^*$,~$range^*$)
\Comment{\textcolor{mygray}{Sample a discrete action $\bm{a}_u^*$ from range $\mathcal{H}(\bm{center}^*,~range^*)$; Refer to Section~\ref{s634}.}}
\State Get $\bm{s}'$ and $\bm{o}$ based on  $\bm{a}_u^*$, $T(\bm{s},~\bm{a}^*,~\bm{s}')$, and $Z(\bm{s}',~\bm{a}^*,~\bm{o})$
\If{$\bm{s}$ is not terminal state}
\State $r \gets \gamma \text{Episode\_simulation}(\mathcal{T}$,~$\bm{s}'$,~$\bm{h}+\{\bm{a}^*, \bm{o}\}, H_s,$ $H_r, H, dim_r(H_s))  + R(\bm{s},~\bm{a}^*,~\bm{s}')$
\Comment{\textcolor{mygray}{Recursively}}
\State Backup($\mathcal{T}$,~$\bm{node}_o$,~$\bm{a}^*$,~$r$,~$R(\bm{s},~\bm{a}^*,~\bm{s}')$) \Comment{\textcolor{mygray}{See Algorithm~\ref{alg:cap7}.}}
\Else
\State $H_s \gets H_s\cup \{\bm{h}_s\}$, $H_r \gets H_r\cup \{{\bm{h}_r}\}$, $H \gets H\cup \{{\bm{h}}\}$,  $dim_r(H_s) \gets dim_r(H_s)\cup \{|\bm{h}|\}$ \Comment{\textcolor{mygray}{Terminal node, so save state lists $H_s$, $H_r$ and $H$ as well as depth vector $dim_r(H_s)$ by Hash table format.}}
\EndIf
\EndIf
\State \Return $\bar{\mathcal{H}}\gets<H$, $H_s$, $H_r$, $dim_r(H_s)>$, $\mathcal{T}$.
\end{algorithmic}
\end{algorithm}

\subsubsection{Construction overview.} To construct the belief tree $\mathcal{T}$, our GNPF-$k$CT framework follows the standard select-expand-simulate-backup strategy used in many MCTS algorithms, with an adaptive discretization mechanism based on the $k$-center clustering method for controllable discretization rates. As mentioned before, each belief node maintains a dynamic list $\mathcal{L}(\bm{node}_o)$. If resources allow, episodes are continuously sampled to grow the belief tree $\mathcal{T}$, at each episode, we select a path from the tree root with the format: $\bm{s}_0$, $\bm{a}_0$, $\bm{o}_0$, $\bm{r}_0$, $\bm{s}_1$, $\bm{a}_1$, $\bm{o}_1$, $\bm{r}_1$, $\cdots$. Starting from the root belief, an action $\bm{a}_i \in \mathcal{L}(\bm{node}_o)$ is selected using a revised UCB1 strategy (based on HOO, detailed in Section~\ref{s433}). The selected action is executed from state $\bm{s}_i$ to achieve the next state $\bm{s}_{i+1}$ and obtain observation $\bm{o}_i$ as well as reward $\bm{r}_i$ based on transition $T(\bm{s}_i,~\bm{a}_i,~\bm{s}_{i+1})$, observation $Z(\bm{s}_{i+1},~\bm{a}_i,~\bm{o}_i)$, and reward functions $R(\bm{s}_i,~\bm{a}_i,~\bm{s}_{i+1})$. Finally, we can update the belief $\bm{b}_i=\{\bm{s}_i\}$ to $\bm{b}_{i+1}=\{\bm{s}_{i+1}\}$. If the terminal condition is met or no child node exists, the tree expands by adding a new belief node and associated action nodes based on $\mathcal{L}(\bm{node}_o)$. A rollout policy, typically RolloutHeuristic\_Random($\bullet$), is simulated to provide an estimated value for new (terminal) nodes. States, action-observation, immediate rewards, and depths along the episode are recorded in lists $H_s$, $H$, $H_r$, and $dim_r(H_s)$ when the node is terminal due to reaching rollout or terminal state. Finally, backup operations update the estimated values for all visited nodes in the episode.

 \begin{algorithm}
\small
\caption{Network$\_$sampling($Odd(\mathcal{G}_f)$,~$\mathcal{P}_j$,~$\bm{b}$,\\~$~ C_{threshold})$ }\label{alg:cap4x}
\begin{algorithmic}[1]
\Require The trained network $nn(\mu(\bm{a}),~\sigma(\bm{a}))$, the odds update for the whole grid world about the guessed target object $Odd(\mathcal{G}_f)$,~belief $\bm{b}=\{\bm{s}_i\}$, fused point cloud $\mathcal{M}'_c$, throushold$~ C_{threshold}$
\Ensure  A set of potential actions $\{\bm{a}_i\}$ satisfy the following condition  $Pr[g(\bm{a}_i,~\bm{s}_i,~\alpha_i(\mathcal{P}))  > 0, \forall i] \geq C_{threshold}$
\State Transform odds value for the grid world as an image and save as a repeated tensor $T_{odd}$.
\State Repeats the fused point cloud and saves it as a tensor $T_{point}$.
\State Samples a class of states $\{\bm{s}_i\}$.
\State $\{\bm{s}_r\},~\{\bm{g}_{o^0}\}\gets\{\bm{s}_i\}$
and gets the robot state tensor $T_{r}$ and 8 grids odds tensor $T_{g}$.
\State Gets the predicted mean $T_{\mu}$ and variance tensors $T_{\sigma}$ based on $T_{r}$,~$T_{odd}$,~$T_{point}$,~$T_{g}$ and the trained network $nn(\mu(\bm{a}),~\sigma(\bm{a}))$.
\For{$\mu(\bm{a}_i)\gets T_{\mu}$, $\sigma(\bm{a}_i)\gets T_{\sigma}$}
\State $\beta^{*} \gets
(2\log(1/(1-C_{threshold})) ^{\frac{1}{2}}$
\State Check $\mu(\bm{a}_{i})>\beta^{*}\sigma(\bm{a}_{i})$ as well as $\mu(\bm{a})>10\%$ and collect the ones satisfying these two conditions to $\{\bm{a}_i\}$.
\If{satisfying the number limitation}
\State \Return selected action set $\{\bm{a}_i\}$.
\EndIf
\EndFor
\State \Return action set $\{\bm{a}_i\}$ with the highest score.
\end{algorithmic}
\end{algorithm}

\textbf{Action clustering and list initial update before MCTS:}
Our key idea is to identify the feasible region $\mathcal{X}$ using Algorithm~\ref{alg:cap4x} and update the action list associated with belief nodes (Update$(\cdot)$). Due to the complexity of real-world problems, $\mathcal{X}$ often has intricate shapes. A practical and efficient method is to get a lot of potential actions $\{\bm{a}_i\}$ with a high probability of satisfying $g(\bm{a}_i,~\bm{s},~\alpha(\mathcal{P}))> 0$ based on NP filtering.  Firstly, we quickly predict mean and variance values based on the highly-dimensional tensor format with only one-step network evaluation and then test values based on Theorem~\ref{t_add_new}. Typically, efficient actions in $\mathcal{X}$  form scattered clusters across the continuous action domain $\mathcal{A}_c$, as shown by the purple, red, and green regions in Fig.~\ref{2}, reflecting the natural continuity of primitive action physical properties.  We use the Elbow method~\cite{thorndike1953belongs} to determine the optimal number of clusters and each cluster is enclosed within high-dimensional hyperspheres with centers  $\mathcal{C}_a$ and radii $\mathcal{R}_a$. Hope the feasible region $\mathcal{X}$ can be a subset of the space covered by high-dimensional hyperspheres, such that $\mathcal{X} \subseteq \bigcup_i \mathcal{H}(\bm{center}_i,~range_i)$, where $\bm{center}_i \in \mathcal{C}_a$ and $range_i \in \mathcal{R}_a$. The centers and radii are recursively added to the action list $\mathcal{L}(\bm{node}_o)$ for observation nodes by traversing the entire tree.

\begin{figure}[!htb]
	\centering
\includegraphics[width=\linewidth]{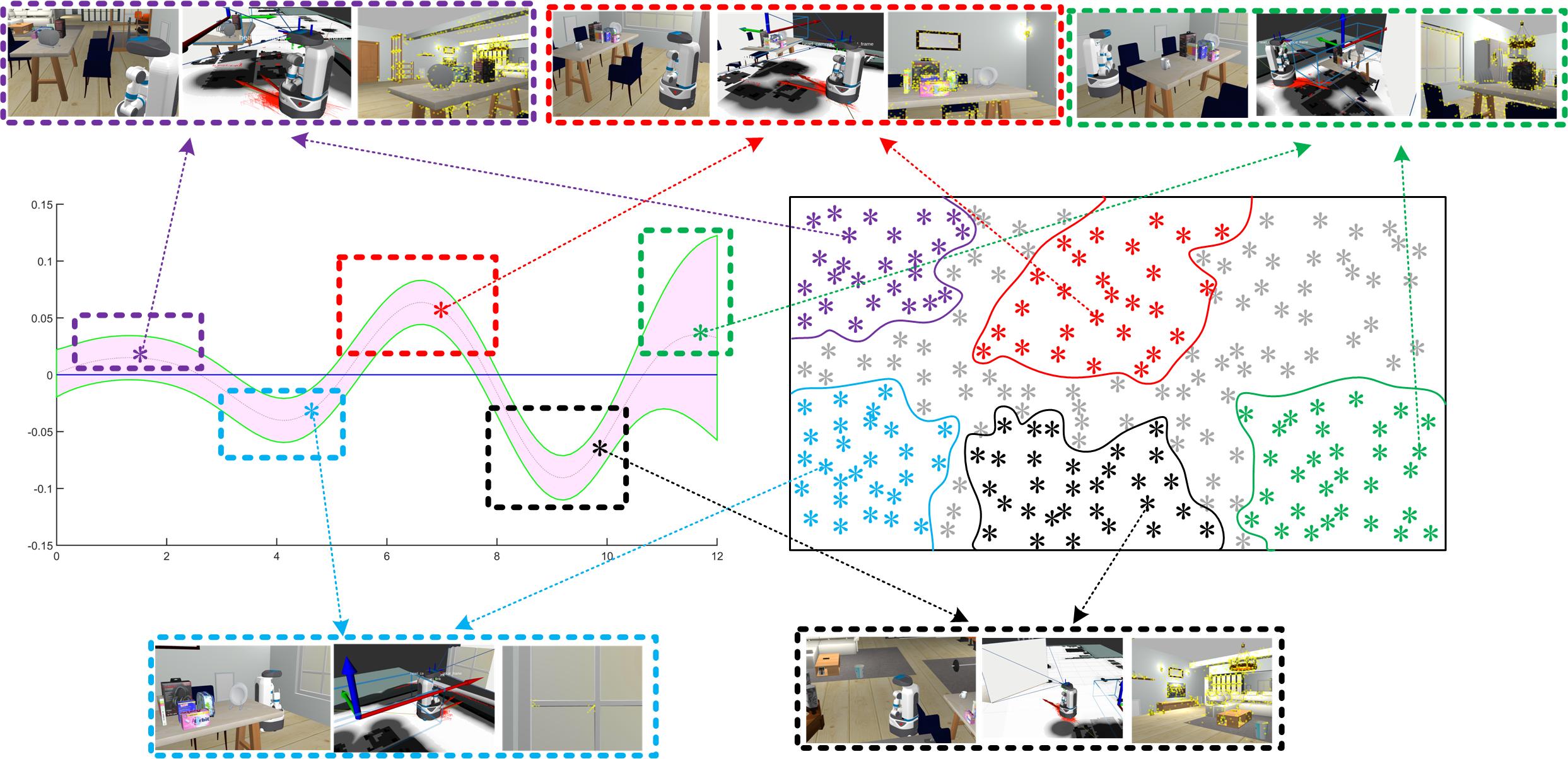}
\caption{The neural process-based scoring function, some scattered clusters, and some samples.}
\label{2}
 \end{figure}

\subsubsection{Action selection strategy and list growing in MCTS.}\label{s433}
Inspired by the classical hierarchical optimistic optimization (HOO) idea, which comes from the continuous-arm bandit problem, we select an action from the set of candidate actions $\mathcal{L}(\bm{node}_o)$ according to:
\begin{equation}\label{list1}
	\begin{aligned}
	&\bm{a}^* = \text{argmax}_{\bm{a}\in \mathcal{L}(\bm{node}_o)} U (\bm{node}_o, \bm{a}), \\
&U (\bm{node}_o, \bm{a}) = \hat Q (\bm{node}_o, \bm{a})  \\&+\omega_1 \sqrt{\log \frac{N (\bm{node}_o)}{N (\bm{node}_o,~\bm{a})}}+ \omega_2 range_i \\&+ f_{\text{option}}(\omega_3 BD), 
 \end{aligned}
\end{equation}
where $\hat Q (\bm{node}_o, \bm{a})$ represents the average reward received during rounds when the corresponding action node was selected; $U (\bm{node}_o, \bm{a})$ denotes the upper-confidence bound for the maximum possible Q-value within the hypersphere region $\mathcal{H}(\bm{center}_i,~range_i)$, akin to the UCB1 bound. This bound incorporates the effect of $\text{range}_i$ for the $i$-th hypersphere, adhering to the Lipschitz assumption. $N (\bm{node}_o)$ and $N (\bm{node}_o,~\bm{a})$ represent the visitation counts for the observation node $\bm{node}_o$ and its corresponding action node, respectively. The coefficients $\omega_1$, $\omega_2$, and $\omega_3$ adjust the relative contributions of different components. The $BD$ term refers to the belief distance associated with the growing state space, which is detailed in Section~\ref{s642}. This term is zero until a new object is detected. The belief difference heavily depends on the specific problem settings. It may disrupt the regret bounds derived for policies using UCB1 in multi-armed bandit problems~\cite{auer2002finite} and the bias correction for estimated expected payoffs in MDPs~\cite{Kocsis2006Bandit}. Hence, the $BD$ term is optional for users and denoted as $f_{\text{option}}(\star)$. We recommend setting $f_{\text{option}}(\star) = \star$ when the problem exhibits significant changes and the belief distance is large. For more common scenarios~\footnote{In our experiments, adding or removing the $BD$ term does not yield significant differences in the results.}, users may choose $f_{\text{option}}(\star) = 0$, simplifying the action selection to a UCB1-like term. This case is discussed further regarding convergence in Section~\ref{s65}. It is noted that we cannot guarantee its convergence to the optimal solution if selecting $f_{\text{option}}(\star)=\star$. Within the planning budget, episodes begin at the root node, using the action selection strategy in Eq.~\eqref{list1} to select an action node or expand the belief tree $\mathcal{T}$. If an action node associated with a high-dimensional hypersphere $\mathcal{H}(\bm{center}_i,~range_i)$ is visited more than an adaptive threshold, it is refined and divided into smaller clusters and hyperspheres:

\begin{equation}\label{list2}
	\begin{aligned}
	&N (\bm{node}_o, \bm{a}) \geq \frac{1}{C_r range_i^2}, 
 \end{aligned}
\end{equation}
where $C_r$ is a self-defined exploration constant and $N (\bm{node}_o, \bm{a})$ here provides a rough estimate of the quality of the reward estimation $\hat Q (\bm{node}_o, \bm{a})$, which follows the adaptive refining strategy in~\cite{hoerger2022adaptive} to limit the growth of
the dynamic list $\mathcal{L}(\bm{node}_o)$ and ensures that a hypersphere is only refined when this action node has been visited sufficiently often. We also constraint the refining accuracy and limit the node number corresponding to the dynamic list $|\mathcal{L}(\bm{node}_o)|$ by $range_i\geq D_{lim}$, where $D_{lim}$ is the minimum radius for partitioning.

\setcounter{footnote}{0}

Assuming the action node $\bm{node}_{a^*}$ containing $N (\bm{node}_o, \bm{a}^*)$ episodes with the selected action $\bm{a}^*$ and hypersphere $\mathcal{H}(\bm{center}^*,~range^*)$ needs refinement, these actions in this node are divided into $k$ clusters\footnote{The optimal cluster number depends on the areas filtered by the neural process. By default, $k$ is set to 3 unless otherwise specified.} and then the hyperspheres are obtained with centers $\{\triangle \bm{center}_i\}$ and radii  $\{\triangle range_i\},~i=1,~2,\cdots,~k$ based on the KMeans algorithm. We then update the action $\bm{a}^*$ of dynamic list $\mathcal{L}(\bm{node}_o)$ and its corresponding hypersphere $\mathcal{H}(\bm{center}^*,~range^*)$ by the alternative actions set $\{\bm{a}^*,~\bm{a}_{|\mathcal{L}(\bm{node}_o)|+2,},\cdots,~\bm{a}_{|\mathcal{L}(\bm{node}_o)|+k}\}$ and new hypersphere set $\{\mathcal{H}(\triangle \bm{center}_1,~f_{lim}($ $\triangle range_1)),~\cdots,~\mathcal{H}(\triangle \bm{center}_k,~f_{lim}(\triangle range_k))\}$, where $f_{lim}(\star)$ controls the refinement rate, ensuring convergence and planning performance.
\begin{equation}\label{list2}
	\begin{aligned}
	&f_{lim}(\star)= \max(f'_{lim}(\star),~D_{lim}),~\\&f'_{lim}(\star)= \\&
 \begin{cases}
        \bar{\omega}_1range^* & \text{if } \star\geq\bar{\omega}_1range^*\\
        \star & \text{if } \bar{\omega}_2range^*<\star<\bar{\omega}_1range^*\\
       \bar{\omega}_2range^* & \text{if } \star \leq \bar{\omega}_2range^* \\
    \end{cases},
 \end{aligned}
\end{equation}
where $\bar{\omega}_1$ and $\bar{\omega}_2$ are coefficients controlling the refining velocity, with $0 \leq \bar{\omega}_2 < \bar{\omega}_1 \leq 1$. The original sub-tree with root node $\bm{node}_{a^*}$ is copied and connected to the observation node $\bm{node}_o$ as an additional child node based on actions ${\bm{a}_{|\mathcal{L}(\bm{node}_o)|+2}, \cdots, \bm{a}_{|\mathcal{L}(\bm{node}_o)|+k}}$. All nodes generated from $\bm{node}_{a^*}$ are updated based on the clustered episode IDs, shown in Algorithm~\ref{alg:cap5} and Fig.~\ref{3x}. 

\begin{figure}[!htb]
	\centering
\includegraphics[width=\linewidth]{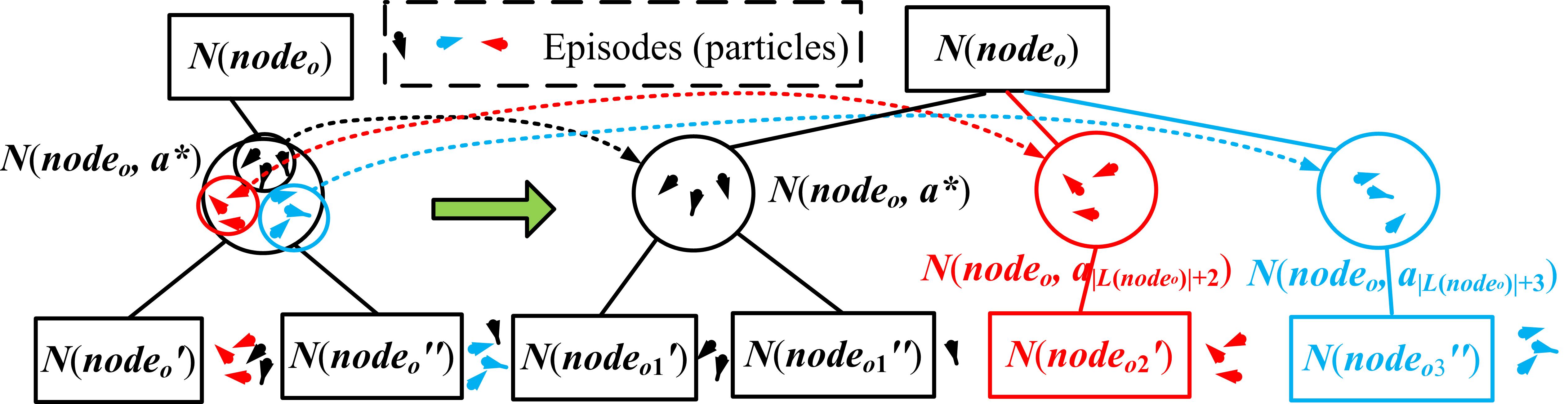}
\caption{The refining process using clustered episode IDs.}
\label{3x}
 \end{figure}

\begin{algorithm}
\small
\caption{Refine($\mathcal{T}$,~$\bm{node}_o$,~$\bm{a}^*$,~$H$)}\label{alg:cap5}
\begin{algorithmic}[1]
\Require The belief tree $\mathcal{T}$, the belief node $\bm{node}_o$, the selected action $\bm{a}^*$, the recorded action-observation history list $H$
\Ensure  The updated history list $H$ and the new tree $\mathcal{T}$ with the refined nodes
\State Collects all applied actions $\mathcal{S}_r'=\{\bm{a}_i\}$ in previous episodes passed leaf node of belief node $\bm{node}_o$ with its action $\bm{a}^*$;
\If{$N (\bm{node}_o, \bm{a}^*) \geq \frac{1}{C_r range_i^2}$ \text{and} $range_i > D_{lim}$} 
\State $\{\triangle \bm{center}_i\},~\{\triangle {range}_i\}$, clustered episode IDs~$\gets k$-Clustering$(\mathcal{S}_r')$
\State Update the dynamic list $\mathcal{L}(\bm{node}_o)\gets \mathcal{L}(\bm{node}_o)\bigcup$ $\{\bm{a}_{|\mathcal{L}(\bm{node}_o)|+2,},~\cdots,~\bm{a}_{|\mathcal{L}(\bm{node}_o)|+k}\}$
\State $\mathcal{H}(\bm{center}_i,~f_{lim}( range_i)) \gets \mathcal{H}(\triangle \bm{center}_1,$ $f_{lim}(\triangle range_1))$
\State $\{\mathcal{H}(\bm{center}_i,~f_{lim}(range_i))\} \gets \{\mathcal{H}(\bm{center}_i,$ $f_{lim}(range_i))\}\bigcup\{\mathcal{H}(\triangle \bm{center}_2,~f_{lim}(\triangle range_2)),\cdots,$ $\mathcal{H}(\triangle \bm{center}_k,~f_{lim}(\triangle range_k))\}$
\State Pick out sub-tree $\mathcal{T}_{sub}(\bm{node}_o,~\bm{a}^*)$ from $\mathcal{T}$ corresponding to the belief node $\bm{node}_o$ with its action $\bm{a}^*$;
\State Copy and generate new sub-trees based on sub-trees $\mathcal{T}_{sub}(\bm{node}_o,~\bm{a}^*)$ by list $\mathcal{L}(\bm{node}_o)$.
\State Revise all nodes of new sub-trees based on clustered episode IDs, as shown in Fig.~\ref{3x}
\State Attach generated sub-trees to the belief node $\bm{node}_o$.
\For{Find $\bm{h}$ in $H$ based on key: clustered episode ID}
\State Reach the observation $\bm{o}$ based on belief node $\bm{node}_o$
\State Update the next action $\bm{a}^*$ in $\bm{h}$ by the corresponding element $\bar{\bm{a}}$ in dynamic list $\mathcal{L}(\bm{node}_o)$, satisfying $\bm{a}^*\gets \bar{\bm{a}}\in \{\bm{a}^*,~\bm{a}_{|\mathcal{L}(\bm{node}_o)|+2,},~\cdots,~\bm{a}_{|\mathcal{L}(\bm{node}_o)|+k}\}$
\EndFor
\State \Return updated tree $\mathcal{T}$ and the updated history list $H$.
\EndIf 
\end{algorithmic}
\end{algorithm}

\subsubsection{Action sampling.}\label{s634}
Since the selected action $\bm{a}^*$ corresponds to the hypersphere $\mathcal{H}(\bm{center}^*,~range^*)$, in order to execute the action, we assume that the sampled action is uniformly distributed in the hypersphere $\mathcal{H}(\bm{center}^*,~range^*)$ and then sample a discrete action in this hypersphere, similar to ellipsoid sampling.

\subsubsection{Backup.}\label{s334}
When each episode reaches the terminal state, our GNPF-$k$CT framework updates the estimation reward $\hat{Q}(\bm{node}_o,~\bm{a})$ as well as the visited numbers $N(\bm{node}_o)$ and $N(\bm{node}_o,~\bm{a})$ of all nodes visited by this episode. Here, we present two classical stochastic backup methods including the Bellman backup (Algorithm~\ref{alg:cap7}), which is used
in the ABT method and similar to the rule used in Q-learning, and the Monte-Carlo backup (Algorithm~\ref{alg:cap7}), which is widely used in many outstanding POMDP solvers, like POMCP, POMCPOW, and Voronoi Optimistic Monte Carlo Planning with Observation Weighting (VOMCPOW)~\cite{Michael2021Voronoi}. The Bellman update naturally follows the objective function of the POMDP formulation that aims to pursue optimal action in each step of the long-term planning. It helps the solver to explore deeper by focusing its search on promising parts of the belief tree. The main challenge for the Bellman backup is when facing unexpected observations, a lot of deeply explored belief trees will be frequently cut and this case causes poor planning performance. Hence, the Bellman backup gets a better performance when the good rewards are sparse in the belief tree, but it is not stable enough for the poor observation prediction. In contrast to selecting the reward with optimal action, the Monte-Carlo backup computes the average reward along with different action episodes, which means that the generated belief tree will be more robust when facing unexpected uncertainty in received observation. In our application for object search, the real visual observation is not well predictable for the observation model in the POMDP formulation and the robot camera will frequently receive unexpected measurements, which breaks the advantage of the Bellman update. We use the Monte-Carlo backup in our problem but the Bellman backup is commonly superior in other applications, so we present both two backup ways here.
\begin{algorithm}
\small
\caption{Backup($\mathcal{T}$, $\bm{node}_o$, $\bm{a}$, $r$, $R$)}\label{alg:cap7}
\begin{algorithmic}[1]
\Require The belief tree $\mathcal{T}$, the observation mode $\bm{node}_o$, the selected action $\bm{a}$, the accumulated reward $r$, and the immediate reward $R$, optional type for backup: Bellman or Monte-Carlo.
\Ensure  The updated tree $\mathcal{T}$ with the refined nodes
\State $N(\bm{node}_o)\gets N(\bm{node}_o)+1$ and $N(\bm{node}_o,~\bm{a}^*)\gets N(\bm{node}_o,~\bm{a}^*)+1$
\If{Use Monte-Carlo backup}
\State $\hat Q (\bm{node}_o, \bm{a}^*) \gets \hat Q (\bm{node}_o, \bm{a}^*) + \frac{r - \hat Q (\bm{node}_o, \bm{a}^*)}{N(\bm{node}_o,~\bm{a}^*)}$
\Else \Comment{\textcolor{mygray}{Bellman backup.}}
\State $\bm{node}_o'$ is the child of $\bm{node}_o$
\State $\hat Q (\bm{node}_o, \bm{a}^*) \gets \hat Q (\bm{node}_o, \bm{a}^*) + \frac{R + \gamma \hat V^*(\bm{node}_o') - \hat Q (\bm{node}_o, \bm{a}^*)}{N(\bm{node}_o,~\bm{a}^*)}$
\State $\hat V^*(\bm{node}_o) \gets \max_{\bm{a}\in \mathcal{L}(\bm{node}_o)} \hat Q (\bm{node}_o, \bm{a})$
\EndIf
\end{algorithmic}
\end{algorithm}

\subsection{GNPF-$k$CT: Reusing the belief tree}\label{s64}

In constructing the belief tree, three history lists are maintained: state history $H_s$, immediate rewards $H_r$, and action-observation pairs $H$. The depth vector $dim_r(H_s)$ for terminal nodes from the previous tree is also tracked (lines 5 and 16, Algorithm~\ref{alg:cap4}). When the robot observes new objects, leading to non-empty states $s_{add}$, the new observation cannot directly map to a belief node. Sub-observations of newly detected objects are removed (line 28, Algorithm~\ref{alg:cap4}), and the remaining observation vector $\bm{o}$ with action $\bm{a}^*$ identifies the root node among child nodes of the prior tree $\mathcal{T}$. This process also trims $H_s$, $H_r$, $H$, and $dim_r(H_s)$ (Algorithm~\ref{alg:cap8}). Using the reduced histories, for each particle, we can update all states in the state history list $H_s$ and the action-observation history $H$ by adding sub-state vectors and adjusting observations for newly detected objects based on the transition and observation functions, as shown in Algorithm~\ref{alg:cap9}. Finally, the updated histories grow the tree structure (Algorithm~\ref{alg:euclid101}).

\subsubsection{Cutting and updating state list and depth vector.} 

The key to reusing the previous belief tree is maintaining and updating stored information, including $H_s$, $H_r$, $H$, and $dim_r(H_s)$, in the sampled particles. The first step refines action nodes (Algorithm~\ref{alg:cap5}). When an action, $\bm{a}^*$, is refined into multiple new actions (e.g., $\bm{a}^*$ to $\{\bm{a}^*,~\bm{a}_{|\mathcal{L}(\bm{node}_o)|+2,},~\cdots,~\bm{a}_{|\mathcal{L}(\bm{node}_o)|+k}\}$), the history list $H$ is updated accordingly. The second step prunes branches after executing $\bm{a}^*$ and receiving observation $\bm{o}$ (Algorithm~\ref{alg:cap8}). For the new state $(\bm{s}, \bm{s}_{add})$, branches corresponding to the old state $\bm{s}$ in the previous tree $\mathcal{T}$ are assigned as the root of the new tree. The last significant operation is to update the state $\bm{s}'$ by the added state vector $\bm{s}_{add}'$ corresponding to new objects based on the transition function $T((\bm{s},~\bm{s}_{add}), \bm{a}, \bm{s}_{add}')$ and observation function $Z((\bm{s},~\bm{s}_{add}'), \bm{a}, \bm{o}')$ for each particle\footnote{It is noted that because our current considered actions $\bm{a}_1$ and $\bm{a}_2$ just perform on the old objects, the reward function does not need to update. Our reward function follows $R((\bm{s},~\bm{s}_{add}), \bm{a}_1, (\bm{s}',~\bm{s}_{add}')) = R(\bm{s}, \bm{a}_1, \bm{s}')$, because the old action $\bm{a}_1$ and $\bm{a}_2$ just operate one object each step.}, where $\bm{o}'$ is the new observation including both the old objects and newly detected objects.  In the state update, we consider the effect $\bm{s}' \rightarrow \bm{s}_{add}'$ from the old state vector $\bm{s}$, corresponding to old objects, to the added state vector $\bm{s}_{add}'$, which is related to the newly detected objects. However, the old state vector $\bm{s}'$ itself is not updated to preserve and reuse existing information. In other words, the belief of the old state \textbf{does not} consider the effect from the new object $\bm{s}_{add}' \centernot\rightarrow \bm{s}'$. This process ensures updated histories for particles (Algorithm~\ref{alg:cap9}) while adhering to the allocated planning budget.

\begin{algorithm}
\small
\caption{\text{Cutting}($\bm{a}^*$,~$\bm{o},~\bar{\mathcal{H}}$)}\label{alg:cap8}
\begin{algorithmic}[1]
\Require The selected action $\bm{a}^*$, the received observation $\bm{o}$, the history tuple $\bar{\mathcal{H}}$
\Ensure  The updated history tuple $\bar{\mathcal{H}}$
\State $H,~H_s,~H_r,~dim_r(H_s)\gets{\bar{\mathcal{H}}}$
\State $H'\gets{\emptyset}$, $H_s'\gets{\emptyset}$, $H_r'\gets{\emptyset}$, $dim_r(H_s)'\gets{\emptyset}$
\For{$\bm{h}$~in~$H$}
\If{$\bm{h}[0]==\bm{a}^*$ and $\bm{h}[1]==\bm{o}$}
\State Get $key$ for $\bm{h}$ in $H$
\State Find states history $\bm{h}_s$ and corresponding depth value $dim_r(H_s)[key]$ based on hash table key $key$
\State $\bm{h} \gets \bm{h}/\{\bm{h}[0],\bm{h}[1]\}$, $\bm{h}_s \gets \bm{h}_s/\{\bm{h}_s[0]\}$, $\bm{h}_r \gets \bm{h}_r/$ $\{\bm{h}_r[0]\}$, $dim_r(H_s)$ $ [key]\gets dim_r(H_s)[key]-1$ \Comment{\textcolor{mygray}{$\star/\bullet$ means to remove sets or elements $\bullet$ from a set $\star$.}}
\State $H'\gets H'\bigcup\{\bm{h}\}$, $H_s'\gets H_s'\bigcup\{\bm{h}_s\}$, $H_r'\gets H_r'\bigcup$ $\{\bm{h}_r\}$, $dim_r(H_s)'\gets $ $dim_r(H_s)'\bigcup dim_r(H_s)[key]$
\EndIf
\EndFor
\State $H\gets H$, $H_s\gets H_s'$, $H_r\gets H_r'$, $dim_r(H_s)\gets dim_r(H_s)'$
\State \Return $\bar{\mathcal{H}}\gets<H$, $H_s$, $H_r$, $dim_r(H_s)>$.
\end{algorithmic}
\end{algorithm}

\begin{algorithm}
\small
\caption{Episode\_simulation\_along\_list($\bar{\mathcal{H}},~\bm{s}_{add}$)}\label{alg:cap9}
\begin{algorithmic}[1]
\Require The history tuple $\bar{\mathcal{H}}$, the initial state for newly detected objects $\bm{s}_{add}$
\Ensure  The updated tuple $\bar{\mathcal{H}}$.
\State $H,~H_s,~H_r,~dim_r(H_s)\gets{\bar{\mathcal{H}}}$
\State $H'\gets{\emptyset}$, $H_s'\gets{\emptyset}$, $H_r'\gets{\emptyset}$, $dim_r(H_s)'\gets{\emptyset}$
\For{$\bm{h}_s$~in~$H_s$}
\State Get $key$ for $\bm{h}_s$ in $H_s$ \Comment{\textcolor{mygray}{Get the hash table key.}}
\State $\bm{h}\gets H[key]$ \Comment{\textcolor{mygray}{Get the action-observation history based on hash table key $key$.}}
\State $i\gets 0$, $\bm{h}_s'\gets \emptyset$, $\bm{h}'\gets \emptyset$
\For{$i<|\bm{h}_s|$}
\State $\bm{s}\gets \bm{h}_s[i]$, $\bm{a}\gets \bm{h}[2*i]$
\State $\bm{s}_{add}'\gets T((\bm{s},~\bm{s}_{add}), \bm{a}, \bm{s}_{add}')$ \Comment{\textcolor{mygray}{Transition function to get the states about newly detected object.}}
\State $\bm{o}'\gets Z((\bm{s},~\bm{s}_{add}'), \bm{a}, \bm{o}')$ \Comment{\textcolor{mygray}{Observation function to get the new observations $\bm{o}'$ considering the extended state.}}
\State {$\bm{h}_s' \leftarrow \bm{h}_s' \bigcup \{(\bm{s}', \bm{s}_{add}')$\}}
\State {$\bm{h}' \leftarrow \bm{h}' \bigcup \{(\bm{a}, \bm{o}')$\}}
\EndFor
\State $H_s'\gets H_s'\bigcup\{\bm{h}_s'\}$, $H'\gets H'\bigcup\{\bm{h}'\}$, $H_r'\gets H_r'\bigcup\{H_r$ $[key]\}$, $dim_r(H_s)'\gets $ $dim_r(H_s)'\bigcup dim_r(H_s)[key]$  \Comment{\textcolor{mygray}{The recorded immediate reward and depth will not be changed.}}
\If{planning budget exceeded} 
\State break \Comment{\textcolor{mygray}{limitation. Left some time for resampling.}{This planning budget is commonly related to the whole planning limitation. Left some time for resampling.}}
\EndIf
\EndFor
\State $H\gets H'$, $H_s\gets H_s'$, $H_r\gets H_r'$, $dim_r(H_s)\gets dim_r(H_s)'$
\State \Return $\bar{\mathcal{H}}\gets<H$, $H_s$, $H_r$, $dim_r(H_s)>$.
\end{algorithmic}
\end{algorithm}

\subsubsection{Rebuild the belief tree and resampling.}\label{s642}
Starting from the identified root node, the new belief tree $\mathcal{T}'$ is grown using the updated particles. This growth process follows the same exploration principles as the tree structure in MCTS sampling, leveraging the pre-updated particles. The main steps are outlined in Algorithm~\ref{alg:euclid101} and Algorithm~\ref{alg:euclid102}. Notably, new belief nodes may be introduced while constructing the new belief tree $\mathcal{T}'$. Consequently, $\mathcal{T}'$ may not be a strict sub-tree of the previous belief tree $\mathcal{T}$. Based on $\mathcal{T}'$, MCTS is then performed for all branches, including those with new actions and observations. The action selection strategy adheres to the UCB principle, incorporating the hypersphere range $\omega_2 range_i$, and the belief distance $\omega_3 BD$, as detailed in Section~\ref{s433}. Let $\bm{s}_{new}= (\bm{s},~\varnothing)$ and $\bm{s}_{new}' = (\bm{s}',~\bm{s}_{add}')$, we know that the current optimal action should be selected following equation Eq.~\eqref{smooth-1}. Assuming our belief approximation is perfect outside the sub-tree with $depth\in \{1,~2,~3,\cdots\}$ layer~\footnote{This assumption is reasonable in our object search problem. Because, after they reach the threshold of the declaring action, different state vectors corresponding this object are considered the same in its physical meaning, the belief differences will become into 0. }, of which one layer has action nodes and observation nodes, in the sub-tree of the new belief tree shown in Fig.~\ref{f2}, for $depth=1$, the only difference between two ways (ours and the direct resampling way) is from the last terms $V^*(\bm{b}(\bm{s}_{new}': \bm{s}'\rightleftarrows \bm{s}_{add}'))$ (correct by building tree) and $V^*(\bm{b}(\bm{s}_{new}': \bm{s}'\rightarrow \bm{s}_{add}'))$ (ours, without considering partial effect). Hence, we add the belief distance term, which is defined as $BD \triangleq \sum_{o\in O}Z(\bm{s}_{new}', \bm{a}, \bm{o}') \|\bm{b}((\bm{s}',~\bm{s}_{add}'): s'\rightarrow \bm{s}_{add}') -  \bm{b}((\bm{s}',~\bm{s}_{add}'): s'\rightleftarrows \bm{s}_{add}')\|_1$, to get a better estimation for the upper confidence bound.

\begin{figure*}[!htb]
   \begin{equation}\label{smooth-1}
   \small{
	\begin{aligned}
	\max_{\bm{a}\in A} \left(\sum_{\bm{s}_{new}\in S} R(\bm{s}_{new}, \bm{a}, \bm{s}_{new}') \bm{b}(\bm{s}_{new})+  \gamma\sum_{o\in O}\sum_{\bm{s}_{new}'\in S}Z(\bm{s}_{new}', \bm{a}, \bm{o}')T(\bm{s}_{new}, \bm{a}, \bm{s}_{new}')V^*(\bm{b}(\bm{s}_{new}': \bm{s}'\rightleftarrows \bm{s}_{add}'))\right).
	\end{aligned}}
\end{equation} 
\end{figure*}

\begin{figure}[!htb]
	\centering
	\includegraphics[width=\linewidth]{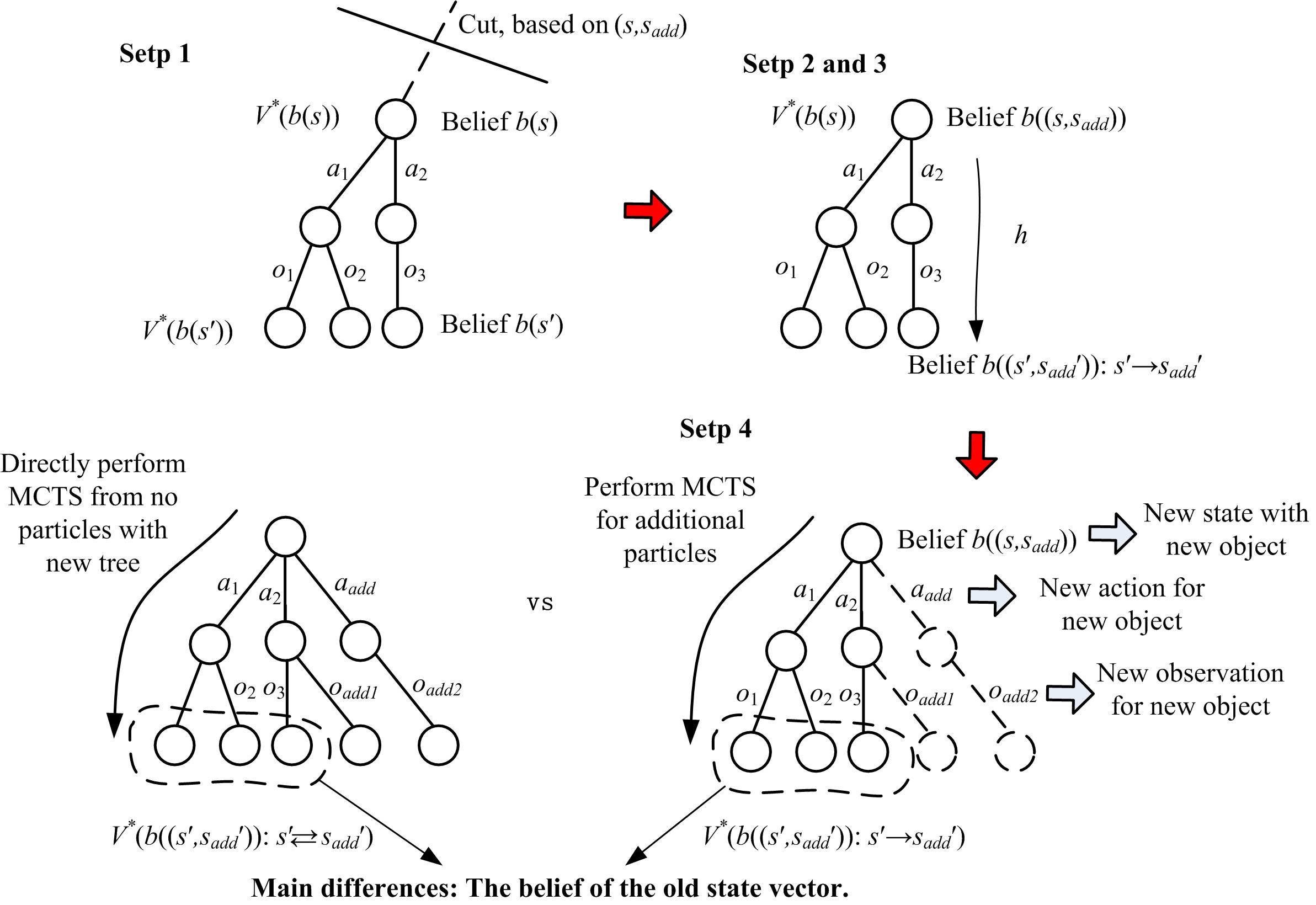}
	\caption{Main steps for growing state space} 
	\label{f2}
\end{figure}

\begin{algorithm}[!ht]  
	\small
	\caption{Grow\_tree$(\mathcal{T},~\bar{\mathcal{H}})$}  
	\label{alg:euclid101}  
 \begin{algorithmic}[1]  
			\Require{The previous belief tree $\mathcal{T}$, the history tuple $\bar{\mathcal{H}}$}
			\Ensure{The updated tree $\mathcal{T}$}
            \State $H,~H_s,~H_r,~dim_r(H_s)\gets{\bar{\mathcal{H}}}$
            \State Get the root node $\bm{node}_o$ from $\mathcal{T}$
            \For{$\bm{h}_s$~in~$H_s$}
            \State Get $key$ for $\bm{h}_s$ in $H_s$ \Comment{\textcolor{mygray}{Get the hash table key.}}
            \State $\bm{h}\gets H[key]$, $\bm{h}_r\gets H_r[key]$ \Comment{\textcolor{mygray}{Get the action-observation history $\bm{h}$ and reward hisotry $\bm{h}_r$ based on hash table key $key$.}}
            \State $i \gets 0$
            \State $\mathcal{T},r\gets$Simulation\_new\_objects($\bm{node}_o$, $\bm{h}_s$,~$\bm{h}_r$,~$\bm{h}$, $dim_r$ $(H_s)[key]$,~$i$)
            \EndFor
	\end{algorithmic}
\end{algorithm}

\begin{algorithm}[!ht]  
	\small
	\caption{Simulation\_new\_objects($\bm{node}_o$, $\bm{h}_s$,~$\bm{h}_r$,~$\bm{h}$,~$i$)}  
	\label{alg:euclid102}  
 \begin{algorithmic}[1]  
			\Require{The previous belief node $\bm{node}_o$, the updated histories in one particle $\bm{h}_s,~\bm{h}_r,~\bm{h}$, the depth value $i$}
			\Ensure{The updating tree $\mathcal{T}$, the discounted accumulated reward $r$}
            \State {$\bm{a}^*\gets \bm{h}[2*i]$}
            \If {$i == dim_r(H_s)[key]$} \Comment{\textcolor{mygray}{Reach the terminal node in tree structure and the later part in this history is corresponding to rollout.}}
            \For{\textbf{all} $\bm{a} \in \mathcal{A}$}
            \State {Action node $\bm{node}_a\leftarrow(N(\bm{node}_o, \bm{a}), $ $V(\bm{node}_o,$ $\bm{a})$, $\varnothing)$} \Comment{\textcolor{mygray}{Generate the empty action nodes.}}
            \EndFor
            \State \Return {Rollout\_new\_objects$(\bm{h}_r, i+1)$} \Comment{\textcolor{mygray}{See Algorithm~\ref{alg:euclid11}.}}
            \Else 
            \If {no action nodes from $\bm{node}_o$ and }
            \For{\textbf{all} $\bm{a} \in \mathcal{A}$}
            \State {$\bm{node}_a \leftarrow (N(\bm{node}_o, \bm{a}), V(\bm{node}_o,$ $\bm{a}), \varnothing)$} \Comment{\textcolor{mygray}{Generate the empty action nodes.}}
            \EndFor
            \EndIf
            \State  $\bm{s}_{new}\gets \bm{h}_s[i]$, $\bm{s}_{new}'\gets \bm{h}_s[i+1]$, $\bm{o}'\gets \bm{h}[2*i+1]$, $R(\bm{s}_{new},~\bm{a}^*,~\bm{s}_{new}')\gets \bm{h}_r[i]$
            \State belief node $\bm{node}_{o'} \gets (N(\bm{node}_{o'}), V(\bm{node}_{o'}))$ based on $\bm{node}_a$ and $\bm{o}'$
            \If {$i < \left|h'\right|_{rollout}$}
            \State  {$r\leftarrow  \gamma$ Simulation\_new\_objects($\bm{node}_{o'}$, $\bm{h}_s$, $\bm{h}_r$, $\bm{h}$, $i+1$)} \Comment{\textcolor{mygray}{Recursively}}
            \State  {Collect $\{\bm{s}_{new}\}$ in the belief node $\bm{node}_{o}$} 
            \State Backup($\mathcal{T}$,~$\bm{node}_o$,~$\bm{a}^*$,~$r$,~$R(\bm{s}_{new},~\bm{a}^*,~\bm{s}_{new}')$)  \Comment{\textcolor{mygray}{See Algorithm~\ref{alg:cap7}.}}
            \EndIf
            \EndIf
	\end{algorithmic}
\end{algorithm} 

\begin{algorithm}[!ht]  
    \small
	\caption{Rollout\_new\_objects$(\bm{h}_r, j)$}
	\label{alg:euclid11}  
	\begin{algorithmic}[1]  
			\Require{The reward history $\bm{h}_r$, the depth $j$}
			\Ensure{Discounted total roll-out reward $r$}
            \If  {$j == \left|\bm{h}_r\right|$}
            \State \Return $\bm{h}_r[j]$
            \Else
            \State \Return $\bm{h}_r[j] + \gamma $Rollout\_new\_objects$(\bm{h}_r, j+1)$
            \EndIf
	\end{algorithmic} 
\end{algorithm} 
\subsection{GNPF-$k$CT: Theoretical analysis}\label{s65}
In this section, we aim to analyze the convergence of the proposed solver with some assumptions to improve the compactness of this work\footnote{In all these discusses, the belief distance term $f_{\text{option}}(BD)$ is set to be 0.}. The focus is to answer:
\begin{question}\label{Q1}
Does the GNPF-$k$CT algorithm converge in probability to the optimal value function in POMDPs?
\end{question}
Let's recall our key steps in GNPF-$k$CT algorithm related to the convergence, including network filtering, $k$ clustering for hierarchical partition, and the revised UCB strategy. It is easy to know that the prediction accuracy of the neural network will directly affect the performance of the method. 

\begin{assumption}\label{a11}
The neural network used for action filtering does not filter the optimal action. The obtained feasible region $\mathcal{X}$ satisfies $\bm{a}^*\in \mathcal{X},~\bm{a}^* = \textbf{argmax}_{\bm{a}\in \mathcal{A}}Q(\bm{b},~\bm{a})$.
\end{assumption}

Satisfying assumption~\ref{a11}, the network filtering will not affect the convergence of the GNPF-$k$CT algorithm to the optimal result. The problem just has a smaller action domain with the same configuration. To answer Question~\ref{Q1} is equal to answer Question~\ref{Q2}:

\begin{question}\label{Q2}
Without considering network filtering, does the GNPF-$k$CT algorithm with $k$-center clustering and the revised UCB strategy converge in probability to the optimal value function in POMDPs?
\end{question}

Based on \cite{silver2010monte}, we can answer 
Question~\ref{Q2} by considering POMDPs as a derived MDP. Let's consider Lemma~\ref{l1} about the value function and  Lemma~\ref{l2} about the rollout distribution~\cite{silver2010monte}:

\begin{lemma}\label{l1}
Given a POMDP $\mathcal{M}=<\mathcal{S},~\mathcal{A},~\mathcal{O},~T,~Z,~R>$ consider the derived MDP with histories $\mathcal{H}$ as states, $\widetilde{\mathcal{M}}=<\mathcal{H},~\mathcal{A},~\widetilde{T},~\widetilde{R}>$, where $\widetilde{T}^{\bm{a}}_{h,hao}=\sum_{\bm{s}\in\mathcal{S}}\sum_{\bm{s}'\in\mathcal{S}}\bm{b}(\bm{s},h)T(\bm{s},~\bm{a},~\bm{s}')Z(\bm{s}',~\bm{a},~\bm{o})$, $\bm{b}(\bm{s},h)$ means the belief over states given history $h$, $hao$ means the updated history pruning the tree by $\bm{a}$ and $\bm{o}$, and $\widetilde{R}({\bm{a}},~$h$)=\sum_{\bm{s}\in\mathcal{S}}\bm{b}(\bm{s},h)R(\bm{s}, \bm{a}, \bm{s}')$. Then the value function $\widetilde{V}^\pi(h)$ of the derived MDP is equal to the value function ${V}^\pi(h)$ of the POMDP, $\forall \pi, \widetilde{V}^\pi(h)= {V}^\pi(h)$, where value function ${V}^\pi(h)$ is the expected return from state
 when following policy $\pi$.~\cite{silver2010monte}
\end{lemma}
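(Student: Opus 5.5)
We prove Lemma~\ref{l1} by induction on the Bellman equations, showing that the derived MDP $\widetilde{\mathcal{M}}$ and the POMDP $\mathcal{M}$ satisfy the same recursion for every fixed policy $\pi$ mapping histories to actions. First we make precise that a history $h$ determines the belief $\bm{b}(\cdot,h)$ through the Bayes filter. The successor belief $\bm{b}(\cdot,hao)$ is obtained from $\bm{b}(\cdot,h)$ by propagating with $T$, weighting with $Z$, and normalizing. The normalizing constant of this update is exactly
\[
Pr(\bm{o}\mid h,\bm{a})=\sum_{\bm{s}}\sum_{\bm{s}'}\bm{b}(\bm{s},h)T(\bm{s},\bm{a},\bm{s}')Z(\bm{s}',\bm{a},\bm{o}),
\]
that is, $\widetilde{T}^{\bm{a}}_{h,hao}$. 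We also read $\widetilde{R}(\bm{a},h)$ as the expected immediate reward under $\bm{b}(\cdot,h)$, with the sum over $\bm{s}'$ weighted by $T$ implicit in the statement.

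\textbf{Step 1: recursion for the POMDP value.} The POMDP value of $\pi$ from history $h$ is $V^\pi(h)=\mathbb{E}[\sum_t\gamma^t R_t\mid h,\pi]$, with the expectation over the hidden state drawn from $\bm{b}(\cdot,h)$ and the future transitions and observations. Conditioning on the first step and using the tower property gives
\[
V^\pi(h)=\widetilde{R}(\pi(h),h)+\gamma\sum_{\bm{o}}Pr(\bm{o}\mid h,\pi(h))\,V^\pi(h\pi(h)\bm{o}).
\]
The key fact here is that, conditioned on the extended history $hao$, the distribution of the new hidden state is exactly $\bm{b}(\cdot,hao)$. This is Bayes' rule together with the Markov property of $T$ and $Z$.

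\textbf{Step 2: identify with the derived MDP.} By definition, the derived MDP value satisfies
\[
\widetilde{V}^\pi(h)=\widetilde{R}(\pi(h),h)+\gamma\sum_{\bm{o}}\widetilde{T}^{\pi(h)}_{h,h\pi(h)\bm{o}}\,\widetilde{V}^\pi(h\pi(h)\bm{o}).
\]
Since the rewards and transitions coincide termwise with those of Step 1, both functions are fixed points of the same Bellman operator. This operator is a $\gamma$-contraction in the sup norm, because rewards are bounded ($R_{ill}\le R\le R_{max}$) and $0<\gamma<1$. The fixed point is therefore unique, so $\widetilde{V}^\pi=V^\pi$. An equivalent argument works with finite horizons: prove equality of the $n$-step truncated values by induction on $n$, then let $n\to\infty$; the tails are bounded by $\gamma^n R_{\max}/(1-\gamma)$.

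\textbf{Main obstacle.} The delicate point is Step 1's conditioning argument: justifying that the joint law of future (state, observation, reward) given $h$ factorizes through $\bm{b}(\cdot,h)$. This requires showing inductively that $Pr(\bm{s}_t\mid h_t)$ equals the filtered belief. Two further issues arise in the paper's setting. Continuous or hybrid spaces require replacing the sums by integrals, which needs measurability of $T$ and $Z$. Zero-probability observations must be handled; the corresponding terms vanish, so the belief there can be defined arbitrarily. The growing state space does not affect this lemma, since it is stated for a fixed $\mathcal{M}$.
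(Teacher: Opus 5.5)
Your proof is correct and takes essentially the same route as the argument the paper relies on: the paper gives no proof of its own and imports the lemma from Silver and Veness (2010), whose proof expands $V^\pi(h)$ by one Bellman step, identifies the coefficients with $\widetilde{R}$ and $\widetilde{T}$, and concludes by backward induction from the horizon (your finite-horizon variant is exactly that). Your contraction/unique-fixed-point closure is an equally valid way to handle the infinite discounted horizon directly, and reading the dangling $\bm{s}'$ in $\widetilde{R}$ as an expectation under $T$ is the right repair of the statement.
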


\begin{lemma}\label{l2}
For any rollout policy $\pi$, the POMDP rollout distribution is equal to the derived MDP rollout distribution, $\forall \pi$, $D^\pi(h_T ) = \widetilde{D}^\pi (h_T )$.~\cite{silver2010monte}
\end{lemma}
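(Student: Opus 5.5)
The statement to prove is Lemma~\ref{l2}: for any rollout policy $\pi$, the POMDP rollout distribution over length-$T$ histories $h_T$ equals the rollout distribution of the derived history-MDP $\widetilde{\mathcal{M}}$ of Lemma~\ref{l1}. I will follow the argument of~\cite{silver2010monte} and proceed by induction on the history length $t$. The claim is that $D^\pi(h_t)=\widetilde{D}^\pi(h_t)$ for all $t\le T$, and that the belief $\bm{b}(\cdot,h_t)$ generated by the POMDP rollout is exactly the Bayes posterior used in defining $\widetilde{T}$.

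\textbf{Setting up the two distributions.} In the POMDP rollout, an initial state is drawn from $\bm{b}_0$. At each step, the rollout picks $\bm{a}_t\sim\pi(\cdot\mid h_t)$, samples $\bm{s}_{t+1}\sim T(\bm{s}_t,\bm{a}_t,\cdot)$, samples $\bm{o}_t\sim Z(\bm{s}_{t+1},\bm{a}_t,\cdot)$, and appends $(\bm{a}_t,\bm{o}_t)$ to the history. The history distribution $D^\pi(h_T)$ is then the marginal of the joint distribution over states and histories, obtained by summing out the hidden states $\bm{s}_0,\dots,\bm{s}_T$. In the derived MDP, the rollout starts at the empty history, chooses actions with the same $\pi(\cdot\mid h)$, and moves to $hao$ with probability $\widetilde{T}^{\bm{a}}_{h,hao}$.

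\textbf{Inductive step.} The base case $t=0$ is trivial, since both rollouts start at the empty history. For the step from $t$ to $t+1$, I factor the POMDP joint probability as
\[
D^\pi(h_t\bm{a}\bm{o})=\sum_{\bm{s},\bm{s}'}Pr(h_t,\bm{s})\,\pi(\bm{a}\mid h_t)\,T(\bm{s},\bm{a},\bm{s}')\,Z(\bm{s}',\bm{a},\bm{o}).
\]
I then write $Pr(h_t,\bm{s})=D^\pi(h_t)\,\bm{b}(\bm{s},h_t)$, which holds by the definition of $\bm{b}(\bm{s},h)$ as the posterior over states given the history. This gives
\[
D^\pi(h_t\bm{a}\bm{o})=D^\pi(h_t)\,\pi(\bm{a}\mid h_t)\,\widetilde{T}^{\bm{a}}_{h_t,h_t\bm{a}\bm{o}}.
\]
By the induction hypothesis $D^\pi(h_t)=\widetilde{D}^\pi(h_t)$, and the right-hand side is exactly the derived-MDP recursion $\widetilde{D}^\pi(h_t\bm{a}\bm{o})=\widetilde{D}^\pi(h_t)\,\pi(\bm{a}\mid h_t)\,\widetilde{T}^{\bm{a}}_{h_t,h_t\bm{a}\bm{o}}$. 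Rewards agree as well, because $\widetilde{R}$ is the belief-weighted expectation of $R$. This is consistent with Lemma~\ref{l1}, although the lemma itself concerns only the distribution over histories.

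\textbf{Main obstacle.} The delicate point is the identity $Pr(h_t,\bm{s})=D^\pi(h_t)\,\bm{b}(\bm{s},h_t)$. It requires that the rollout policy $\pi$ depend only on the observable history $h_t$ and not on the hidden state. Only under this condition does the action factor $\pi(\bm{a}\mid h_t)$ cancel in the Bayes update, so that the posterior $\bm{b}(\cdot,h_t)$ is policy-independent and coincides with the belief used to define $\widetilde{T}$. I will state this measurability condition explicitly and verify the Bayes update by a one-line computation: normalize $\sum_{\bm{s}}\bm{b}(\bm{s},h_t)T(\bm{s},\bm{a},\bm{s}')Z(\bm{s}',\bm{a},\bm{o})$ over $\bm{s}'$.

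\textbf{Possible gap.} For this paper, the hybrid action domain may also need a remark. When an action is sampled from a hypersphere, the sampling can be absorbed into a stochastic history-dependent $\pi$, so the argument above is unchanged. Extending this to growing state spaces is beyond the lemma as stated.
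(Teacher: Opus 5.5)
Your proposal is correct and follows essentially the same approach as the paper. The paper gives no proof beyond the citation, and the cited Silver--Veness argument is exactly your forward induction: factor the joint through $\bm{b}(\bm{s},h)$ to get $D^\pi(h\bm{a}\bm{o})=D^\pi(h)\,\pi(\bm{a}\mid h)\,\widetilde{T}^{\bm{a}}_{h,h\bm{a}\bm{o}}=\widetilde{D}^\pi(h\bm{a}\bm{o})$. Your explicit point that $\pi$ must depend only on the observable history, so that the belief is policy-independent, is a welcome addition. Their rollouts start from an arbitrary tree history $h_t$ with $\bm{s}_t\sim\bm{b}(\cdot,h_t)$ rather than from the empty history, but your induction applies unchanged with that base case.
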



From Lemmas~\ref{l1} and \ref{l2}, proving POMDP solver convergence reduces to proving convergence for the corresponding MDP solver of the driver MDP. Thus, we reformulate Question~\ref{Q2} as:

\begin{question}\label{Q3}
Does the GNPF-$k$CT algorithm converge in probability to the optimal value function in MDPs?
\end{question}

In order to connect our GNPF-$k$CT algorithm with some existing MDP solvers, we present the following assumption:
\begin{assumption}\label{assumption2}
The action partitioning results $\mathcal{X}_{d,i_j},~j=1,\cdots,k$ generated by all $k$ center clustering operations for the action domain $\mathcal{X}_{d-1,i}$ follow the properties of the hierarchical partitioning, satisfying $\mathcal{X}_{d,i_j}\bigcap\mathcal{X}_{d,i_j'}=\emptyset$ for $\forall i_j,i_j' \in \{i_1,\cdots,i_k\}$ and $\bigcup_{j=1,\cdots,k}\mathcal{X}_{d,i_j}=\mathcal{X}_{d-1,i}$.
\end{assumption}

Under Assumption~\ref{assumption2}, the $k$-center clustering can be viewed as hierarchical partitioning. With hierarchical partitioning,  our method follows the same search strategy as the HOO method but with different partitioning ways. Because we limit the refining accuracy $range_i$ and the corresponding list dimension $|\mathcal{L}(\bm{node}_o)|$, we can consider it as the same problem with the finite discrete actions and each action $\bm{a}$ is a range instead of a value. We have the following new question:

\begin{question}\label{Q3}
Does the action selection strategy~Eq.~\eqref{list1} in the GNPF-$k$CT algorithm converge in probability to the optimal value function in MDPs with discrete action domains?
\end{question}

When the number of visits $N(\bm{node}_o)$ approaches infinity, the action range $range_i$ will be the constant limitation $D_{lim}\in\mathbb{R}$. For the action selection strategy~\eqref{list1}, with a given coefficient $\omega_2$, we can ignore the region-related terms $\omega_2 range_i$ due to the same constant value for all candidate action ranges. In this way, the action selection strategy becomes the standard UCB1 bound: $\hat Q (\bm{node}_o, \bm{a}) + \omega_1 \sqrt{\frac{\log N (\bm{node}_o)}{N (\bm{node}_o,~\bm{a})}} + \omega_2 range_i\rightarrow \hat Q (\bm{node}_o, \bm{a}) + \omega_1 \sqrt{\frac{\log N (\bm{node}_o)}{N (\bm{node}_o,~\bm{a})}} + \omega_2 D_{lim}$. So it follows the convergence analysis for the UCB1 in~\cite{Kocsis2006Bandit} and~\cite{silver2010monte}, following:

\begin{lemma}\label{l3}
For a suitable choice of $\omega_1$, the value function constructed by UCT converges
in probability to the optimal value function. As the number of visits $N (node_o)$ approaches infinity, the bias of the value
function is $O(\log~N (node_o)/N (node_o))$.
\end{lemma}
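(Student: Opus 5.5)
Lemma~\ref{l3} is the UCT convergence result of \cite{Kocsis2006Bandit}, restated in \cite{silver2010monte}. My plan is to reproduce its inductive structure on the derived MDP $\widetilde{\mathcal{M}}$ of Lemma~\ref{l1}, treated as a finite-horizon (or $\gamma$-truncated) tree with finitely many actions per node. This finiteness is justified by the discussion before Question~\ref{Q3}: under Assumption~\ref{assumption2} and the cap $range_i\ge D_{lim}$, each list $\mathcal{L}(\bm{node}_o)$ stops growing after finitely many refinements. Eventually every action is a fixed hypersphere of radius $D_{lim}$, and the term $\omega_2 range_i$ becomes a common additive constant that does not affect the $\text{argmax}$ in Eq.~\eqref{list1}.

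\textbf{Base case.} First I handle a single belief node whose children return i.i.d.\ bounded payoffs; rewards are bounded by $R_{max}$ and $R_{ill}$, so they rescale to $[0,1]$. Here the selection rule is exactly UCB1. Following \cite{auer2002finite}, the number of pulls of a suboptimal arm $\bm{a}$ is $O(\log N/\Delta_{\bm{a}}^2)$, where $N=N(\bm{node}_o)$ and $\Delta_{\bm{a}}$ is the gap of $\bm{a}$. Hence the averaged value $\sum_{\bm{a}}N(\bm{node}_o,\bm{a})\hat Q(\bm{node}_o,\bm{a})/N$ has bias $O(\log N/N)$. Convergence in probability follows from a Hoeffding bound on the optimal arm's average.

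\textbf{Inductive step.} I would then induct on depth. At an internal node, the child payoffs are no longer i.i.d.: they drift, because the subtree below keeps adapting. I would invoke the drift-condition version of UCB1 from \cite{Kocsis2006Bandit}. Its hypothesis is that the payoff averages converge, with $\bar X_{\bm{a},n}-\mu_{\bm{a}}=O(\log n/n)$ and tail inequalities holding with an inflated constant. The induction hypothesis supplies exactly this for the children. With a suitable $\omega_1$, chosen large enough to dominate the drift constants, the node again has bias $O(\log N/N)$ and a failure probability tending to zero. Propagating up to the root gives the claim. Finally, Lemmas~\ref{l1} and \ref{l2} transfer the result back to the POMDP, and Assumption~\ref{a11} ensures that filtering does not remove the optimum.

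\textbf{Main obstacle.} I expect the hardest part to be the transient phase before the lists stabilize. The refinements copy subtrees and reassign episode IDs (Algorithm~\ref{alg:cap5}), so the statistics of newly created action nodes are initialized from partial histories rather than fresh samples. I would argue this phase involves only finitely many refinement events, each happening at a finite visit count by Eq.~\eqref{list2}. After the last one, the process is a standard UCT run from some finite, possibly biased, initial statistics. Any such initial bias is $O(1/N)$ and is absorbed into the $O(\log N/N)$ rate.

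One remaining gap is the discretization error: the value of a hypersphere of radius $D_{lim}$ differs from $Q(\bm{b},\bm{a}^*)$. I would bound it by $\eta D_{lim}$ using the Lipschitz assumption. I would then either read the optimality as optimality over the finite discretized action set, or let $D_{lim}\to0$ separately.
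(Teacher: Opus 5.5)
Your proposal is correct and follows the same route as the paper. The paper likewise argues that once every hypersphere has shrunk to radius $D_{lim}$, the term $\omega_2 range_i$ is a common constant, so the selection rule is plain UCB1 over a finite set of range actions in the derived MDP; it then simply invokes the UCT theorem of \cite{Kocsis2006Bandit} and \cite{silver2010monte}, whose depth induction under drift conditions you reproduce. Your treatment of the transient refinement phase and of the $D_{lim}$ discretization error (which the paper defers to Assumption~\ref{a3x}) is added detail the paper omits, not a different argument.
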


This convergence result means that the method can find the optimal range action $\mathcal{A}_{opt}$ that has the largest mean value for all refined ranges with some probability. 

\begin{assumption}\label{a3x}
The obtained  mean values $\widetilde{{Q}}(\bm{b},~\mathcal{A}_{opt})$ and $\widetilde{{Q}}(\bm{b},~\mathcal{A}_{sub})$ corresponding to the optimal action range $\mathcal{A}_{opt}$ and any sub-optimal action range $\mathcal{A}_{sub}$ satisfy:

\begin{equation}\label{list2}
	\begin{aligned}
	\widetilde{{Q}}(\bm{b},~\mathcal{A}_{opt})-\widetilde{{Q}}(\bm{b},~\mathcal{A}_{sub})\geq \eta D_{lim}.
 \end{aligned}
\end{equation}
\end{assumption}

Based on Lipschitz continuous, we have any action $\bm{a}\in \mathcal{A}_{sub}$ in the sub-optimal range $\mathcal{A}_{sub}$ satisfies:  ${Q}(\bm{b},~\bm{a})\leq \widetilde{{Q}}(\bm{b},~\mathcal{A}_{sub}) + \eta D_{lim}$. Then, considering Assumption~\ref{a3x}, we will have: $\widetilde{{Q}}(\bm{b},~\mathcal{A}_{opt})\geq \eta D_{lim}+\widetilde{{Q}}(\bm{b},~\mathcal{A}_{sub})\geq {Q}(\bm{b},~\bm{a})$. Because the best action $\widetilde{\bm{a}}^*\in \mathcal{A}_{opt}$ in the optimal action range $\mathcal{A}_{opt}$ satisfy ${Q}(\bm{b},~\widetilde{\bm{a}}^*) \geq {{Q}}(\bm{b},~\mathcal{A}_{opt})$, finally, for any action $\bm{a}\in \mathcal{A}_{sub}\bigcup \mathcal{A}_{opt}$ in both  sub-optimal range $\mathcal{A}_{sub}$ and optimal range $\mathcal{A}_{opt}$, we have: $\widetilde{\bm{a}}^* = \bm{a}^*$ and ${Q}(\bm{b},~\widetilde{\bm{a}}^*)={Q}(\bm{b},~{\bm{a}}^*)\geq {{Q}}(\bm{b},~\bm{a})$, which means the obtained optimal range $\mathcal{A}_{opt}$ definitely includes the optimal action $\bm{a}^*$. In short, under given assumptions, GNPF-$k$CT probabilistically converges to a small range including the optimal solution for POMDPs with continuous action domains.

The above discussion focuses solely on the case without belief tree reuse, assuming it perfectly approximates traditional MCTS. However, belief tree reuse can be seen as a mis-estimation of action and observation node values in the correct belief tree. To evaluate its approximation performance, we present the following results:

\begin{theorem}\label{t_add_new1a}
We are considering the unweighted particles for approximating the extended full-correct belief $\bm{b}(\bm{s}_{new}': \bm{s}'\rightleftarrows \bm{s}_{add}')=\frac{1}{N}\sum^N_{i=1}\mathbb{I}(\bm{s}_{new}'={\bm{s}_{new}^i}':$ $\bm{s}'\rightleftarrows \bm{s}_{add}')$ and the reused approximated belief $\bm{b}(\bm{s}_{new}': \bm{s}'\rightarrow \bm{s}_{add}')=\frac{1}{N}\sum^N_{i=1}\mathbb{I}(\bm{s}_{new}'={\bm{s}_{new}^i}': \bm{s}'\rightarrow \bm{s}_{add}')$. Assuming $depth=1$ as well as the reward function is Borel measurable and bounded, denoted $R_{max}= \|R\|_\infty$, the belief distance L1 is limited in $\|\bm{b}(\bm{s}_{new}': \bm{s}'\rightleftarrows \bm{s}_{add}')-\bm{b}(\bm{s}_{new}': \bm{s}'\rightarrow \bm{s}_{add}')\|_1\leq \delta$, and the optimal action selected by building the whole tree is same as the one selected by approximated value $\max_{\bm{a}} {V}(\bm{b}(\bm{s}_{new}), \bm{a}) = \max_{\bm{a}} \hat{V}(\bm{b}(\bm{s}_{new}), \bm{a})=\bm{a}^*$, the optimal value function of POMDP problem using our method $V^*(\bm{b}(\bm{s}_{new}))$ and the optimal value function using the direct resampling way $\hat{V}^*(\bm{b}(\bm{s}_{new}))$ will satisfy the following bounding equation:
\begin{equation}\label{EQ_28}
\|V^*(\bm{b}(\bm{s}_{new})) - \hat{V}^*(\bm{b}(\bm{s}_{new}))\|_1 \leq \frac{\gamma R_{max}}{1- \gamma}\delta.
\end{equation}
\end{theorem}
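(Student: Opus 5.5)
Both values are evaluated at the common optimal action $\bm{a}^*$, and by Eq.~\eqref{smooth-1} with $depth=1$ they share everything except the continuation value at the children. The difference is then bounded by the discounted gap between two value functions evaluated on beliefs that are $\delta$-close in $L_1$.

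\textbf{Step 1: reduce to a one-step comparison.} By hypothesis $\max_{\bm{a}} V(\bm{b}(\bm{s}_{new}),\bm{a})=\max_{\bm{a}}\hat V(\bm{b}(\bm{s}_{new}),\bm{a})$ is attained at the same $\bm{a}^*$, so
\[
V^*(\bm{b}(\bm{s}_{new}))-\hat V^*(\bm{b}(\bm{s}_{new}))=V(\bm{b}(\bm{s}_{new}),\bm{a}^*)-\hat V(\bm{b}(\bm{s}_{new}),\bm{a}^*).
\]
Expanding both with Eq.~\eqref{smooth-1}, the immediate-reward term $\sum R\,\bm{b}(\bm{s}_{new})$ is identical. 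The footnote stating that the reward is unchanged by $\bm{s}_{add}$, together with the root belief being the same, justifies this, and the term cancels. What remains is $\gamma\sum_{\bm{o}'}\Pr(\bm{o}'\mid\bm{b},\bm{a}^*)$ times the difference $V^*(\bm{b}(\bm{s}_{new}':\bm{s}'\rightleftarrows\bm{s}_{add}'))-V^*(\bm{b}(\bm{s}_{new}':\bm{s}'\rightarrow\bm{s}_{add}'))$.

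\textbf{Step 2: bound the continuation gap by the belief distance.} This is the main step. The optimal value function of a POMDP with bounded rewards satisfies $|V^*(\bm{b}_1)-V^*(\bm{b}_2)|\le \frac{R_{max}}{1-\gamma}\|\bm{b}_1-\bm{b}_2\|_1$. The reason is that $V^*(\bm{b})=\sup_\pi\sum_{\bm{s}}\bm{b}(\bm{s})V^\pi(\bm{s})$, that this is a supremum of linear functionals (the $\alpha$-vector representation), and that each satisfies $\|V^\pi\|_\infty\le R_{max}/(1-\gamma)$. Hence
\[
\bigl|\textstyle\sum_{\bm{s}}(\bm{b}_1-\bm{b}_2)(\bm{s})V^\pi(\bm{s})\bigr|\le \|V^\pi\|_\infty\|\bm{b}_1-\bm{b}_2\|_1,
\]
and a supremum of such functionals preserves the bound. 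Borel measurability of $R$ is what makes these expectations well defined for the particle (empirical) beliefs.

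\textbf{Step 3: combine.} Apply the hypothesis $\|\bm{b}(\cdot:\rightleftarrows)-\bm{b}(\cdot:\rightarrow)\|_1\le\delta$. The observation probabilities sum to one, so the weighted sum over $\bm{o}'$ gives at most $\gamma\frac{R_{max}}{1-\gamma}\delta$, which is Eq.~\eqref{EQ_28}.

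The main obstacle is Step 2. One must check that the two particle beliefs live on a common state space $\mathcal{S}\times\triangle\mathcal{S}$ so the $\alpha$-vector argument applies, and that the observation weights $Z(\bm{s}_{new}',\bm{a},\bm{o}')$ are the same in both expansions. If they are not, an extra $L_1$ term appears. I would absorb it by reading $\delta$ as a bound on the observation-weighted distance, as in the paper's definition of $BD$.
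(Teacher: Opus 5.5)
Your proposal is correct and follows essentially the same route as the paper's proof in Appendix~\ref{a3}: the reward terms cancel at the common optimal action $\bm{a}^*$, the gap between the child-belief values is bounded by $\frac{R_{max}}{1-\gamma}\delta$ via the $\alpha$-vector representation, and the weights $Z\,T$ sum to one, which gives the extra factor $\gamma$. Your Step~2, which combines a supremum over conditional plans with $|\sum_{\bm{s}}(\bm{b}_1-\bm{b}_2)(\bm{s})\alpha(\bm{s})|\le\|\alpha\|_\infty\|\bm{b}_1-\bm{b}_2\|_1$, is a cleaner version of the paper's step, which reuses a single $\alpha$-vector for both beliefs; your worry about differing observation weights is settled by the paper's premise that at $depth=1$ the two expansions of Eq.~\eqref{smooth-1} differ only in the terminal value terms.
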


\begin{proof}
Please refer to Appendix~\ref{a3}.
\end{proof}

\begin{corollary}\label{t_add_new3}
Considering multiple layers $depth\geq 1$ case, the extended full-correct belief $\bm{b}(\bm{s}_{new}^{depth}: \bm{s}'\rightleftarrows \bm{s}_{add}')$ and the reused approximated belief $\bm{b}(\bm{s}_{new}^{depth}: \bm{s}'\rightarrow \bm{s}_{add}')$ are defined for terminal nodes in $depth$ layer. Assuming the reward function is Borel measurable and bounded, the belief L1 distance  for each belief node is limited in $\|\bm{b}(\bm{s}_{new}^{(depth)}: \bm{s}'\rightleftarrows \bm{s}_{add}')-\bm{b}(\bm{s}_{new}^{(depth)}: \bm{s}'\rightarrow \bm{s}_{add}')\|_1\leq \delta$, and the optimal action for each observation node selected by building the whole tree are same as the ones selected by approximated value, the optimal value function of POMDP problem using our method $V^*(\bm{b}(\bm{s}_{new}))$ and the optimal value function using the direct resampling way $\hat{V}^*(\bm{b}(\bm{s}_{new}))$ will satisfy the following bounding equation:
\begin{equation}\label{EQ_28}
\|V^*(\bm{b}(\bm{s}_{new})) - \hat{V}^*(\bm{b}(\bm{s}_{new}))\|_1 \leq \frac{\gamma R_{max}}{1- \gamma}\delta.
\end{equation}
\end{corollary}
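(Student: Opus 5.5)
The plan is to prove Corollary~\ref{t_add_new3} by induction on the depth of the reused sub-tree. The single-layer estimate of Theorem~\ref{t_add_new1a} serves as both the base case and the engine of the inductive step. Fix the common optimal action $\bm{a}^*$ at every observation node; by hypothesis it is the same for the full rebuilt tree and for the reused tree. Then both $V^*$ and $\hat V^*$ at any belief node are Bellman backups with the same action. The two backups differ only in the beliefs that reach the next layer, $\bm{b}(\cdot:\bm{s}'\rightleftarrows\bm{s}_{add}')$ versus $\bm{b}(\cdot:\bm{s}'\rightarrow\bm{s}_{add}')$, and in the continuation values attached to those beliefs. The immediate reward term is identical in both backups, since the footnote in Section~\ref{s64} gives $R((\bm{s},\bm{s}_{add}),\bm{a},(\bm{s}',\bm{s}_{add}'))=R(\bm{s},\bm{a},\bm{s}')$. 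So only the discounted future term contributes to the error.

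\textbf{Base case.} For $depth=1$ the claim is exactly Theorem~\ref{t_add_new1a}, which we take as given. It gives the bound $\frac{\gamma R_{max}}{1-\gamma}\delta$. Its mechanism is that $V^*$ is bounded by $R_{max}/(1-\gamma)$, so integrating it against two particle beliefs at $L_1$ distance at most $\delta$ changes the value by at most $\frac{R_{max}}{1-\gamma}\delta$, and this is then multiplied by $\gamma$.

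\textbf{Inductive step.} Write the error at a node of layer $d$ as $e_d$. Add and subtract the expectation of the reused continuation values under the correct belief. The error then splits into two parts:
\begin{equation*}
e_d \le \gamma\sum_{\bm{o}}Z\,\bigl|\langle \bm{b}^{\rightleftarrows}-\bm{b}^{\rightarrow}, \hat V^*\rangle\bigr| + \gamma\sum_{\bm{o}}Z\,\bigl|V^*-\hat V^*\bigr|_{\text{layer }d+1}.
\end{equation*}
The first (belief-mismatch) term is at most $\gamma\frac{R_{max}}{1-\gamma}\delta$, using the per-node hypothesis $\|\cdot\|_1\le\delta$ and the sup-norm bound on values. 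The second term is $\gamma$ times the error one layer deeper. By the structure of the reuse described in Section~\ref{s642}, beliefs outside the sub-tree of the given depth are exact, and (per the footnote) belief differences vanish once the declaration threshold is reached. So the deepest continuation error is $0$, and the recursion is $e_d\le\gamma\frac{R_{max}}{1-\gamma}\delta+\gamma e_{d+1}$ with terminal value $0$.

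\textbf{Main obstacle.} Unrolling this recursion naively gives $\frac{R_{max}}{1-\gamma}\delta(\gamma+\gamma^2+\cdots)$, which is of order $\frac{\gamma R_{max}}{(1-\gamma)^2}\delta$, not the claimed $\frac{\gamma R_{max}}{1-\gamma}\delta$. To get the sharper bound I would avoid summing layer by layer. Instead, I would compare the two trees as a whole: under the fixed action sequence, each is the expectation of the discounted return $\sum_t\gamma^tR_t$ under a trajectory distribution. The only discrepancy is in the distributions over reused states, which enter through the terminal-layer beliefs and whose $L_1$ distance is at most $\delta$ by hypothesis. Since rewards before the first reused layer coincide, the return difference is bounded by $\gamma\cdot\sup|\text{return}|\cdot\delta\le\frac{\gamma R_{max}}{1-\gamma}\delta$. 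This is the same argument as the base case, applied once to the whole tree rather than iterated.

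The delicate point is justifying that the per-node $\delta$-bounds combine into a single $\delta$-bound on the joint trajectory distribution. If that fails, I would fall back on the geometric-sum bound above, with the constant adjusted accordingly.
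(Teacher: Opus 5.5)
Your argument does not reach the stated constant for $depth\ge 2$. The recursion $e_d\le\gamma\frac{R_{max}}{1-\gamma}\delta+\gamma e_{d+1}$ yields $\frac{\gamma R_{max}}{1-\gamma}\delta$ only when $depth=1$. The proposed rescue, a single $\delta$-bound on the joint trajectory distribution, does not follow from the hypothesis. The hypothesis constrains each belief node separately, and the $L_1$ discrepancy between joint laws can approach the \emph{sum} of the per-layer discrepancies. So the step you flag as delicate is exactly where the proof breaks, and your fallback proves a different, weaker inequality (of order $(1-\gamma)^{-2}$). The loss comes from your accounting. You charge the mismatch at layer $d+1$ against the entire continuation value $\hat V^*$ and then add the deeper error on top. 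As a result, the reward at step $t$ is penalized once for every reused layer $j\le t$.

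The paper avoids this by comparing the two trees directly at every layer, using the $\delta$-hypothesis there instead of propagating mismatches forward. At a non-root node, both values are expanded via Eq.~\eqref{smooth-1} with the same $\bm{a}^*$ and the same weights $Z\,T$. The node's own belief mismatch is charged only to its expected immediate reward (at most $R_{max}\delta$), and the continuation contributes $\gamma$ times the next layer's error. At the deepest layer, the $\alpha$-vector step from the proof of Theorem~\ref{t_add_new1a} gives error at most $\frac{R_{max}}{1-\gamma}\delta$. This is a fixed point, since $R_{max}\delta+\gamma\frac{R_{max}}{1-\gamma}\delta=\frac{R_{max}}{1-\gamma}\delta$, so the same bound holds at every layer $d\ge 1$. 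At the root the two beliefs coincide, the reward terms cancel, and one gets $\gamma\cdot\frac{R_{max}}{1-\gamma}\delta$.

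Your trajectory idea works if you aim it at per-step marginals rather than the joint law. The return is $\sum_t\gamma^t\mathbb{E}[R_t]$, and each term depends only on the layer-$t$ beliefs. These terms differ by at most $R_{max}\delta$ for $t\ge 1$ and not at all for $t=0$, which sums to exactly the claimed bound. This sharper constant relies on the paper's convention that the branch weights $Z\,T$ are common to both trees, so you should adopt that convention explicitly. If the observation probabilities were instead computed from the mismatched parent beliefs, the path weights would drift, and the natural bound is your $(1-\gamma)^{-2}$ one.
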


\begin{proof}
Please refer to Appendix~\ref{a4}.
\end{proof}

It is noted that Theorem~\ref{t_add_new1a} and Controllary~\ref{t_add_new3} are correct only under the assumption that each optimal action selected for each node using the approximation way is the same as the one using directly fully rebuilding the belief tree, which means that there are some bounds for the performance guarantee shown in Theorem~\ref{t_add_new1a} and Controllary~\ref{t_add_new3} using the reuse approximation.

\begin{discussion}\label{t_add_new1}
As mentioned before, our belief tree reuse can be considered as the wrong estimation $\triangle$ of the action and observation node values in the correct belief tree. For all observation nodes with belief $\bm{b}$, when the distance between the value function for the optimal action ${V}(\bm{b}, \bm{a}^*)$ and the value function for any sub-optimal action ${V}(\bm{b}, \bm{a}_{sub}^*)$ is relatively large satisfying ${V}(\bm{b}, \bm{a}^*)-{V}(\bm{b}, \bm{a}_{sub}^*) >> \triangle$,the estimation error $\triangle$ will not affect the final solution and the convergence of our solver, which follows the method directly rebuilding the whole belief tree. 
\end{discussion}

Following Discussion 1, we would like to get the exact performance bound to ensure that the obtained solution remains the same as the optimal solution using the fully-rebuild belief tree. Firstly, we have the following corollary:

\begin{corollary}\label{t_add_new2s}
With the same assumption as Corollary~\ref{t_add_new3}, given the action-observation sequence $\{\bm{a}_1,~\bm{o}_1,~\bm{a}_2,~\bm{o}_2,$ $\cdots,~\bm{o}_i\},~i< depth$ and reach the observation node with belief $\bm{b}$, the optimal action-value function of POMDP problem using our method $V^*(\bm{b},~\bm{a})$ and the optimal value function using the direct resampling way $\hat{V}^*(\bm{b},~\bm{a})$ for any given action $\bm{a}$ will satisfy the following bounding equation:
\begin{equation}\label{EQ_28}
\|V^*(\bm{b},~\bm{a}) - \hat{V}^*(\bm{b},~\bm{a})\|_1 \leq \frac{\gamma R_{max}}{1- \gamma}\delta.
\end{equation}
\end{corollary}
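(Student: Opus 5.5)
The plan is to reduce Corollary~\ref{t_add_new2s} to the argument behind Theorem~\ref{t_add_new1a} and Corollary~\ref{t_add_new3}, applied at an intermediate observation node rather than at the root. Fix the history $\{\bm{a}_1,\bm{o}_1,\cdots,\bm{o}_i\}$ with $i<depth$, reaching the node with belief $\bm{b}$, and fix an arbitrary action $\bm{a}$. I will not maximise over $\bm{a}$. Instead, I write both action values through one Bellman expansion:
\begin{equation*}
V^*(\bm{b},\bm{a}) = \sum_{\bm{s}} R(\bm{s},\bm{a},\bm{s}')\bm{b}(\bm{s}) + \gamma\sum_{\bm{o}}\sum_{\bm{s}'} Z(\bm{s}',\bm{a},\bm{o}')T(\bm{s},\bm{a},\bm{s}')V^*(\bm{b}')
\end{equation*}
The analogous identity holds for $\hat{V}^*(\bm{b},\bm{a})$. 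The immediate-reward terms coincide exactly. The footnote on the reward function states that $R((\bm{s},\bm{s}_{add}),\bm{a},(\bm{s}',\bm{s}_{add}'))=R(\bm{s},\bm{a},\bm{s}')$, and reuse never alters the old components $\bm{s}$. So the whole difference lives in the discounted continuation term.

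Next, I bound the continuation difference by recursion down the remaining $depth-i$ layers. At each successor node $\bm{b}'$ the optimal actions agree by hypothesis, so $V^*(\bm{b}')-\hat V^*(\bm{b}')=V^*(\bm{b}',\bm{a}'^*)-\hat V^*(\bm{b}',\bm{a}'^*)$. This lets me apply the same one-step decomposition to the subtree at $\bm{b}'$. At the terminal layer, the two beliefs differ by at most $\delta$ in $L_1$. The value there is bounded by $R_{max}/(1-\gamma)$, being a discounted sum of rewards bounded by $\|R\|_\infty$. Hence the terminal-value gap is at most $\frac{R_{max}}{1-\gamma}\delta$, exactly as in the proof of Theorem~\ref{t_add_new1a}. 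The observation weights $\sum_{\bm{o}} Z\,T$ form a probability distribution, so they do not enlarge this bound.

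Finally, I collect the discount factors. The first layer contributes the factor $\gamma$, and deeper layers contribute further factors $\gamma^{j}\le\gamma$. I will control this in the same way as in Corollary~\ref{t_add_new3}: carry one uniform bound $\frac{\gamma R_{max}}{1-\gamma}\delta$ as an induction hypothesis on the node value, and show that one more expansion step preserves it, since it multiplies by $\gamma\le 1$ and adds nothing from rewards. This yields the stated bound for every $\bm{a}$.

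The main obstacle is keeping the per-layer errors from accumulating. Naively each layer's belief mismatch adds a $\delta$-term, giving something like $\sum_j\gamma^j\delta$ with extra constants. I will argue, following the corollary's hypothesis, that the mismatch is charged only at the terminal layer. Inner nodes contribute identical rewards and identical optimal actions, so errors propagate without being added. If that fails, I would fall back on the paper's footnote assumption that the belief approximation is perfect outside the depth-$depth$ subtree, which confines the error to a single layer.
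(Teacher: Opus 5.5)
Your key step is not supported by the hypotheses. You claim that the immediate-reward terms coincide at every inner node below $\bm{b}$, so that the mismatch is charged only at the terminal layer. But Corollary~\ref{t_add_new3} allows the full-correct and reused beliefs to differ by up to $\delta$ in $L_1$ at \emph{every} belief node of the depth-$depth$ subtree. The reward footnote does not rescue you: it only says $R$ ignores the new-object components. The point of $\bm{s}'\rightleftarrows\bm{s}_{add}'$ versus $\bm{s}'\rightarrow\bm{s}_{add}'$ is that in the correct belief the new objects \emph{do} change the old components (grid odds, declaration outcomes), and $R$ depends on those. So at a successor $\bm{b}'$ the expected immediate rewards can differ by up to $R_{max}\delta$. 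Your uniform hypothesis $\frac{\gamma R_{max}}{1-\gamma}\delta$ is then not preserved: one expansion gives $R_{max}\delta+\gamma\cdot\frac{\gamma R_{max}}{1-\gamma}\delta$, which exceeds it by $(1-\gamma)R_{max}\delta$. It also fails as a base case, since the terminal gap is $\frac{R_{max}}{1-\gamma}\delta$. The fallback does not work either. The footnote only removes error \emph{beyond} layer $depth$, so every layer between $\bm{b}$ and the terminal layer can still carry a mismatch. You get a single erroneous layer only when $i=depth-1$. At $\bm{b}$ itself the cancellation is fine. That is because both action values are taken at the same belief $\bm{b}$ with the same fixed $\bm{a}$, as at the root in Theorem~\ref{t_add_new1a}, not because of the footnote.

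The missing idea is that the per-layer accumulation is harmless, so you should not try to avoid it. This is what the proof of Corollary~\ref{t_add_new3} (Appendix~\ref{a4}) does, and the one-line proof of this corollary points back to it. Let $e_j$ bound the value gap at layer $j$ below $\bm{b}$. The $\alpha$-vector argument gives $e_{depth}\le\frac{R_{max}}{1-\gamma}\delta$. At an inner layer, expand both sides with the common optimal action and use that the weights $\sum Z\,T$ sum to at most one; this gives $e_j\le R_{max}\delta+\gamma e_{j+1}$. Since $R_{max}\delta+\gamma\frac{R_{max}}{1-\gamma}\delta=\frac{R_{max}}{1-\gamma}\delta$, this bound is a fixed point. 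Hence every node strictly below $\bm{b}$ has gap at most $\frac{R_{max}}{1-\gamma}\delta$. The factor $\gamma$ in the statement then comes only from the one expansion at $\bm{b}$, where the reward term genuinely cancels. This gives $\|V^*(\bm{b},\bm{a})-\hat{V}^*(\bm{b},\bm{a})\|_1\le\frac{\gamma R_{max}}{1-\gamma}\delta$ for every $\bm{a}$.
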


\begin{proof}
Just follows the similar proof process shown in Corollary~\ref{t_add_new1}, so we ignore it here.
\end{proof}

\begin{corollary}\label{t_add_new3z}
With the same assumption as Corollary~\ref{t_add_new3}, for any observation node with belief $\bm{b}$ within $depth$ layer, if the distance between the value function ${V}(\bm{b}, \bm{a}^*)$ for the optimal action $\bm{a}^*$ and the value function ${V}(\bm{b}, \bm{a}_{sub}^*)$ for any sub-optimal action $\bm{a}_{sub}^*$ is larger than $\frac{2\gamma R_{max}}{1-\gamma} \delta$, the optimal action obtained by the tree reuse approximation way will be same as the optimal action using the direct resampling way $\max_{\bm{a}} {V}(\bm{b}, \bm{a}) = \max_{\bm{a}} \hat{V}(\bm{b}, \bm{a})=\bm{a}^*$.
\end{corollary}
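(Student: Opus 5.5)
The argument is a standard gap argument. Corollary~\ref{t_add_new2s} gives a uniform perturbation bound on the action values, and a value gap larger than twice that bound keeps the maximizer in place. Fix any observation node with belief $\bm{b}$ within the $depth$ layers, reached by some action-observation sequence $\{\bm{a}_1,\bm{o}_1,\cdots,\bm{o}_i\}$ with $i<depth$. Write $\varepsilon \triangleq \frac{\gamma R_{max}}{1-\gamma}\delta$. Under the assumptions of Corollary~\ref{t_add_new3}, Corollary~\ref{t_add_new2s} gives, for every action $\bm{a}$ (in particular for $\bm{a}^*$ and every sub-optimal $\bm{a}_{sub}^*$),
\begin{equation*}
|V(\bm{b},\bm{a})-\hat{V}(\bm{b},\bm{a})|\leq \varepsilon .
\end{equation*}
I will use this uniform bound as the only analytic input. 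The remaining work is purely an ordering argument.

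The key step is the chain of inequalities. Let $\bm{a}^*=\textbf{argmax}_{\bm{a}}V(\bm{b},\bm{a})$, and assume $V(\bm{b},\bm{a}^*)-V(\bm{b},\bm{a}_{sub}^*)>2\varepsilon$ for every $\bm{a}_{sub}^*\neq\bm{a}^*$. Then
\begin{equation*}
\hat{V}(\bm{b},\bm{a}^*)\geq V(\bm{b},\bm{a}^*)-\varepsilon > V(\bm{b},\bm{a}_{sub}^*)+\varepsilon\geq \hat{V}(\bm{b},\bm{a}_{sub}^*).
\end{equation*}
So $\bm{a}^*$ strictly beats every sub-optimal action under $\hat{V}$, which means $\textbf{argmax}_{\bm{a}}\hat{V}(\bm{b},\bm{a})=\bm{a}^*$ as well. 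Since the node was arbitrary within the $depth$ layers, the same optimal action is selected at every observation node. This also shows that the hypothesis "the optimal action selected by building the whole tree is the same as by the approximation", used in Theorem~\ref{t_add_new1a} and Corollary~\ref{t_add_new3}, is self-consistently satisfied.

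The main obstacle is a possible circularity. Corollary~\ref{t_add_new2s} itself assumes agreement of optimal actions at the nodes below the one under consideration, so it cannot be invoked at a node before that agreement is known for its descendants. To handle this, I would argue by backward induction on depth. At terminal nodes in the $depth$ layer, the only discrepancy is the belief difference, which is bounded by $\delta$. There the bound $|V-\hat{V}|\leq\varepsilon$ holds directly, via the $\|\cdot\|_1$ estimate with $R_{max}$ and the geometric series from Theorem~\ref{t_add_new1a}, with no action-agreement assumption needed. The gap condition then yields agreement of the optimal actions at that layer. Moving up one layer, the children now satisfy the agreement hypothesis, so Corollary~\ref{t_add_new2s} applies at the parent, and the gap condition gives agreement there too. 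Iterating up to the root completes the argument.

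I would also note that the bound is a two-sided absolute-value bound, so the $\|\cdot\|_1$ notation on scalar values reduces to absolute value. Ties are excluded by the strict inequality in the gap assumption.
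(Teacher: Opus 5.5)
Your core argument is exactly the paper's proof in Appendix~\ref{a5}. Corollary~\ref{t_add_new2s} gives $\hat{V}(\bm{b},\bm{a}_{sub}^*)\leq V(\bm{b},\bm{a}_{sub}^*)+\varepsilon$ and $V(\bm{b},\bm{a}^*)-\varepsilon\leq\hat{V}(\bm{b},\bm{a}^*)$, with $\varepsilon=\frac{\gamma R_{max}}{1-\gamma}\delta$, and a gap larger than $2\varepsilon$ separates the two. The statement explicitly inherits the assumptions of Corollary~\ref{t_add_new3}, and these already include agreement of the optimal actions at every observation node. So the paper simply invokes Corollary~\ref{t_add_new2s} as a black box, and your sandwich alone is a complete proof of the statement as posed.

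The backward-induction repair does not work as written. You should drop it, together with the claim that the agreement hypothesis is ``self-consistently satisfied.''

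\begin{itemize}
\item \emph{The base-case bound is too small.} The base case asserts $|V(\bm{b},\bm{a})-\hat{V}(\bm{b},\bm{a})|\leq\varepsilon$ at terminal nodes of the $depth$ layer. As you say, the two beliefs $\bm{b}_1,\bm{b}_2$ there differ by up to $\delta$. So the action value contains the undiscounted term $\sum_{\bm{s}}R(\bm{s},\bm{a})\bigl(\bm{b}_1(\bm{s})-\bm{b}_2(\bm{s})\bigr)$, which can be as large as $R_{max}\delta$.
\item \emph{What is actually available there is $\varepsilon/\gamma$.} The estimate the paper gives (first display in the proof of Theorem~\ref{t_add_new1a}, or Eq.~\eqref{boundxx}) is $\frac{R_{max}}{1-\gamma}\delta=\varepsilon/\gamma$. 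The extra factor $\gamma$ in Corollary~\ref{t_add_new2s} arises only because that node's belief is common to both trees, so the immediate-reward terms cancel, as at the root in Theorem~\ref{t_add_new1a}.
\item \emph{Consequence.} With a gap of only $2\varepsilon$ you cannot force agreement at a layer whose beliefs differ; you would need a gap above $2\varepsilon/\gamma$. So the induction never gets started.
\end{itemize}

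If you want to remove the circularity rather than inherit the assumption, use $|\max_{\bm{a}}f(\bm{a})-\max_{\bm{a}}g(\bm{a})|\leq\max_{\bm{a}}|f(\bm{a})-g(\bm{a})|$. Combine it with $V^*(\bm{b})=\max_{\alpha}\alpha\cdot\bm{b}$ and $\|\alpha\|_\infty\leq\frac{R_{max}}{1-\gamma}$. These show that the value bounds in Theorem~\ref{t_add_new1a} and Corollary~\ref{t_add_new3} never needed action agreement; it only let the paper write both sides with one common $\bm{a}^*$. Your sandwich then yields the claim unconditionally at any node whose belief is shared by the two trees.
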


\begin{proof}
Please refer to Appendix~\ref{a5}.
\end{proof}

\section{Action execution and perception}
\label{s5}
Unlike many state-of-the-art methods~\cite{zheng2021multi, Zheng2022Towards} that assume static objects and no robot-object interactions, our approach enables the robotic arm to actively clear obstacles, uncovering hidden spaces. Our perception module combines point cloud and image data to estimate object poses, sizes, movability, and detection, eliminating the need for manual markers like QR codes.

\subsection{Action execution}
Our object search framework supports mobile robots with 2D LiDAR and RGB-D cameras, focusing on the Fetch robot. Navigation relies on the ROS move\_base interface with an AMCL-based stack~\cite{Quigley2009ROS}. Lift height and head joints are controlled via ros\_control~\cite{Chitta2017ros}, while object removal uses moveit~\cite{Sucan2012The} to pick and place items outside the workspace. The basic actions are shown in Fig.~\ref{fig:graph5and8}.

\begin{figure}[!htb]
	\minipage{\columnwidth}
	\centering
	\hspace{.1cm}
     \subfloat[Changing robot configuration (Move base) ]{\includegraphics[width=0.30\columnwidth, height=1.1in]{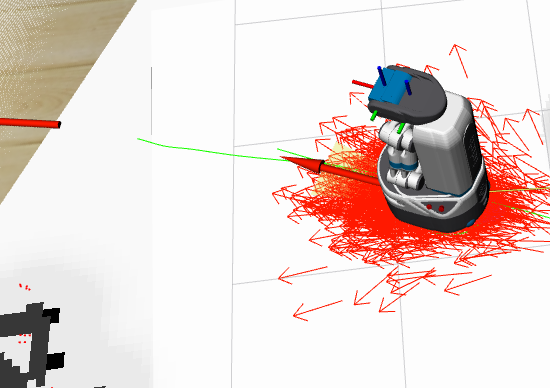}%
    \label{fig_msg_img}}
 	\hspace{.1cm}
     \subfloat[Changing robot configuration (Joints controller)]{\includegraphics[width=0.27\columnwidth, height=1.1in]{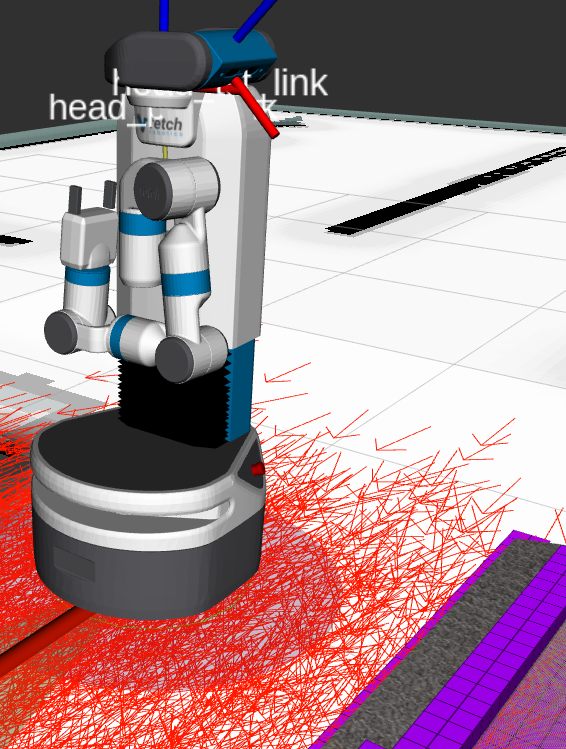}%
    \label{fig_msg_img}}
 	\hspace{.1cm}
      \subfloat[Removing action]{ \includegraphics[width=0.3\columnwidth, height=1.1in]{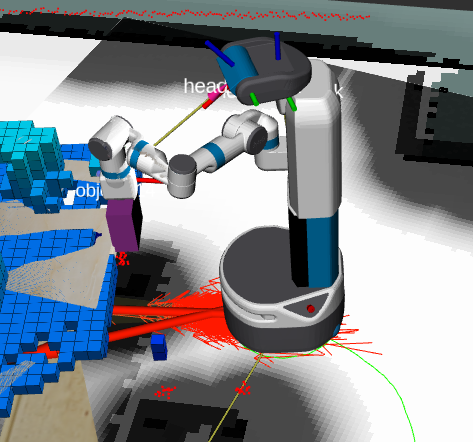}%
    \label{fig_msg_img}}
 	\hspace{.1cm}
	\caption{{Actions execution}}
	\label{fig:graph5and8}
	\endminipage
\end{figure}

\subsection{Sensor data operations}

To simplify the scenario with limited resources—a 2D LiDAR, a 3D RGB-D camera, and a 7-DoF robotic arm on the Fetch robot—we assume the availability of a pre-built point cloud map $\mathcal{M}$ and occupancy grid map of the environment~\footnote{For the Stretch robot, which also has a camera and LiDAR but only a 3-DoF manipulator, we adapt to its constraints by using the IKPy tool~\cite{Manceron2017IKPy} for inverse kinematics combined with base motion to achieve the required grasp poses.}. These maps include furniture and known objects but exclude unknown objects with unknown poses. This assumption is realistic, as the maps can be created beforehand and reused during planning. During online planning and execution, the depth camera’s point cloud $\mathcal{F}_j$ from the $j$-th frame is fused with $\mathcal{M}$ using Iterative Closest Point (ICP), initialized by 2D LiDAR matching~\cite{Rusu20113d}. ICP provides a noisy robot pose measurement $\bm{Z}_{icp}\sim \mathcal{N}(\bar{\bm{Z}}_{icp},\bm{\Sigma}_{icp})$, fused with AMCL localization $\bm{Z}_{amcl}\sim \mathcal{N}(\bar{\bm{Z}}_{amcl},\bm{\Sigma}_{amcl})$ in the filter part of the POMDP framework, where $\mathcal{N}(\star,~\bullet)$ means the one-dimension or multi-dimension normal distribution with mean value/vector $\star$ and variance value/covariance matrix $\bullet$. As frames accumulate, the combined point cloud $\mathcal{P}_j=\mathcal{M}\bigcup\mathcal{F}_0\bigcup\cdots\bigcup\mathcal{F}_j$ grows more complete. Subtracting the original map yields $\mathcal{M}'_j = \mathcal{P}_j / \mathcal{M}$, isolating $n$ newly detected objects $\{o_0,~o_1, \dots,~o_n\} \in \mathcal{M}'_j$ via Euclidean cluster extraction, as shown in Fig.~\ref{1xxxada}. Object poses and sizes are estimated by fitting minimum bounding boxes using principal component analysis. Data association matches local and global objects based on Mahalanobis and point-wise mean distances.


\begin{figure}[!th]
	\centering
	\includegraphics[width=8cm]{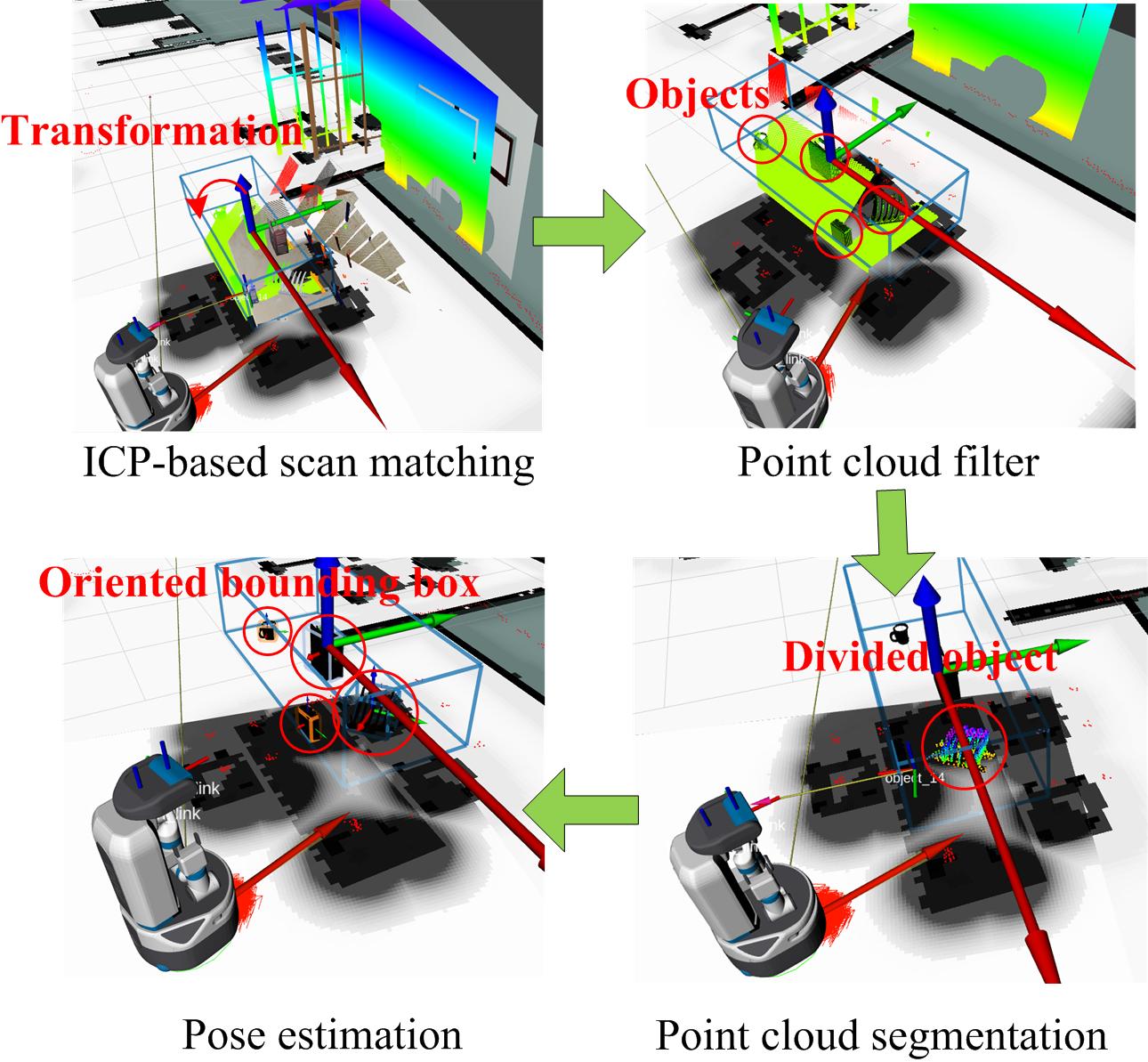}
	\caption{Measurement from point cloud} 
	\label{1xxxada}
\end{figure}

\subsection{Object detector}

State-of-the-art real-time object detection systems like YOLO classify objects into predefined categories, which may not match specific targets. Meanwhile, 3D point clouds provide valuable data for image segmentation. To tackle object detection with limited images and optional semantic cues, we combine traditional feature matching with YOLO, leveraging their strengths for accurate and efficient detection.

Building on point cloud segmentation, we project segmented local frame point clouds onto the image using the camera configuration, creating sub-images. YOLOv5~\cite{Jocher2020ultralytics} generates sub-images with semantic scores $\{s^y_i,~i=~1,~2,\cdots,~m\}$, where $m\neq n$ typically. A nearest-neighbor-based method aligns sub-images by matching those with sufficient overlap. Matched pairs link local sub-images to objects, and SIFT descriptors match these to the target object. Detection probability is defined as the ratio of matched features to total features, $\{s^d_i,~i=~1,~2,\cdots,~n\}$. For specified target types (e.g., a cup), semantic scores $s^y_i$ and detection probabilities $s^d_i$ are averaged. As real-world object matching can lack robustness and perception is not our main focus, especially for uniform-colored objects (e.g., a red bottle), we add color matching. The main object detection process is shown in Fig.\ref{7}.

\begin{figure}[!th]
	\centering
	\includegraphics[width=\columnwidth]{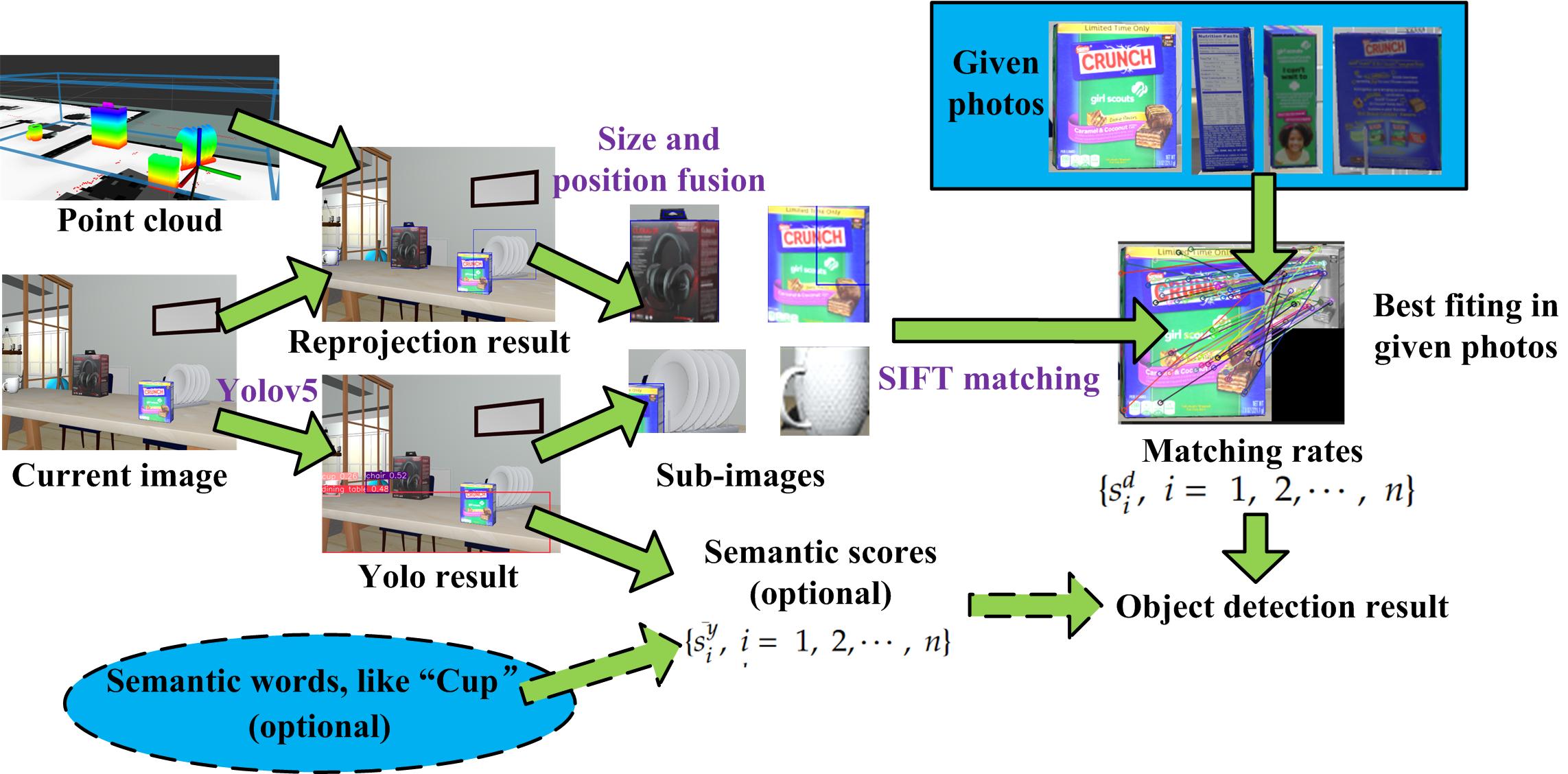}
	\caption{Object detector} 
	\label{7}
\end{figure}

\subsection{Move-ability estimation}
In real-world settings, some objects may be immovable due to size, manipulator limits, or base motion constraints. Our framework focuses on clearing the FOV by manipulating objects, requiring the estimation of movability probabilities and updating these beliefs within the POMDP planning.

Using point cloud segmentation on the fused global cloud, we isolate objects and use the Grasp Pose Detection (GPD) toolbox~\cite{Pas2017Grasp} to predict multiple grasp poses. To reduce complexity, k-means clustering selects $k$ diverse, high-success poses $\bm{p}^g_i,~i=1,2,\cdots,k$. These poses and nearby obstacles are transformed into local frames $T^g_r(\bm{p}^g_i)$ based on pre-visited robot poses $\bm{p}^r_i$, ensuring feasibility. The transformed poses are used as planning targets for the robot manipulator via the moveit toolbox, within a time limit $t_m$, but without execution. Feedback from moveit determines the movability probability $0.5<m_{o^i}\leq 1$. If no solution is found, $m_{o^i}=0.5$~\footnote{Our move-ability estimation is not very efficient to various environments due to the robot pose selection, so we still keep large guess for the object move-ability.}, and objects above a size threshold are non-movable ($m_{o^i}=0$). During planning, movability is assessed by sampling and comparing with $m_{o^i}$. Fig.~\ref{8ss} illustrates candidate grasp poses (blue forks).

\begin{figure}[!th]
	\centering
	\includegraphics[width=5cm]{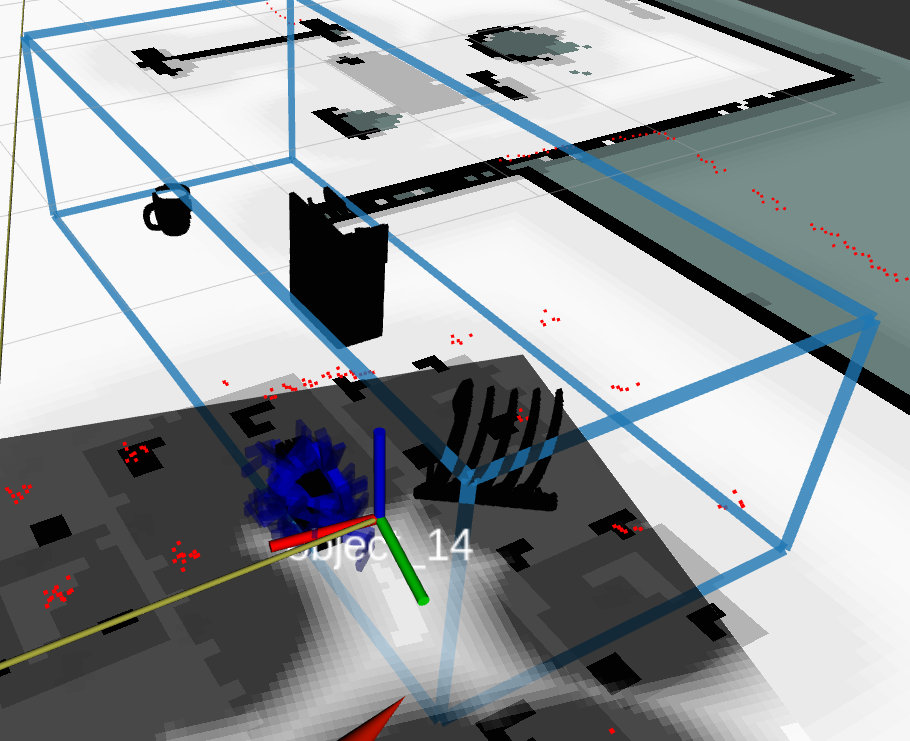}
	\caption{Candidate grasp poses} 
	\label{8ss}
\end{figure}

\section{Simulations and real-world experiments}
\label{s7}
\subsection{Simulations}

We validate our approach using Fetch and Stretch robot simulators in the Gazebo environment, implemented by C++ and Python. The neural process network is trained for 3000 iterations (approximately 4 hours) on a single NVIDIA 3090 GPU. Post-training, we execute our project on a desktop machine, utilizing only the CPU, operating on Ubuntu 18.04, and powered by an Intel Core i7-13700k processor. Our evaluation spans diverse object configurations and scenarios, comparing our method against benchmarks, including continuous action domain benchmark methods (POMCPOW and VOMCPOW) and classical POMDP methods with manual-setting discrete action domain (POMCP, GPOMCP~\cite{chen2024pomdp}, and NPF-$k$CT, which is sub-version of our proposed GNPF-$k$CT method without belief tree reuse~\cite{anonymous2024npfkct}.).

\subsubsection{Problem settings.}\label{s711}
The simulations occur in a realistic Gazebo living environment with diverse object configurations. Navigation relies on a point cloud map and a 2D occupancy grid, featuring furniture and some predefined objects, while most objects are unknown. Large furniture surfaces serve as workspaces, with a target object (a blue snack box) and various obstacles placed randomly. Methods are evaluated across five scenarios, with robot candidate positions defined by four red rectangles around each workspace, unrestricted in orientation. Except for parameter sensitivity analysis, the cluster number $k$ for all simulations and experiments is set as 3~\footnote{In fact, because the workspace is located in 4 areas, the suitable parameters $k$ should be equal to 4, as demonstrated in Section~\ref{s713}. Without loss of genrality, we use 3 to get our main results, which still shows the dominant performance.}. The robot's head motion is constrained to $l_p\in (-\frac{\pi}{12},~\frac{\pi}{12})$, $l_t\in (-0.5,~0.5)$, and the lift motion $l_h$ is limited to $(0.0,~0.4)$.

 In the context of POMDP models for exploration, based on the Fetch robot's RGBD camera setting, the hexahedron FOV is defined by a 60-degree horizontal view angle and a vertical range with a height-width ratio of 480/600. The nearest and farthest planes to the camera center are set at 0.5 meters and 1.7 meters, respectively. The grid size of the grid world for the guessed target object is set as 2 cm. Each planning step is allotted a maximum of 60 seconds. Key parameters include grid updating thresholds $v_p=0.1$ and $v_n=-0.1$, re-initialized grid values for the guessed target object set at 0.2, and reward values are assigned as follows $R_{max}=10^5$, $R_{ct}=5\times10^4$, $R_{co}=10^4$, $R_{min}=-1$, and $R_{ill}=-10^3$.

\subsubsection{NP for primitive action.}
We evaluate the trained neural network's ability to filter meaningless primitives using the PyTorch framework. We present the training loss (Fig.~\ref{fig:sidelossa}) and validation loss (Fig.~\ref{fig:sidelossb}) over epochs, highlighting the model's performance and convergence behavior across 1000 iterations. Validation loss, calculated on testing data, is measured as the median log-likelihood loss, representing the log of the probability density or mass function (higher values indicate better performance). The observed increase in validation loss, alongside the decrease in training loss, demonstrates effective learning and confirms the absence of overfitting issues.

\begin{figure}[!htb]
	\minipage{\columnwidth}
	\centering
     \subfloat[Training loss]{\includegraphics[width=0.48\columnwidth]{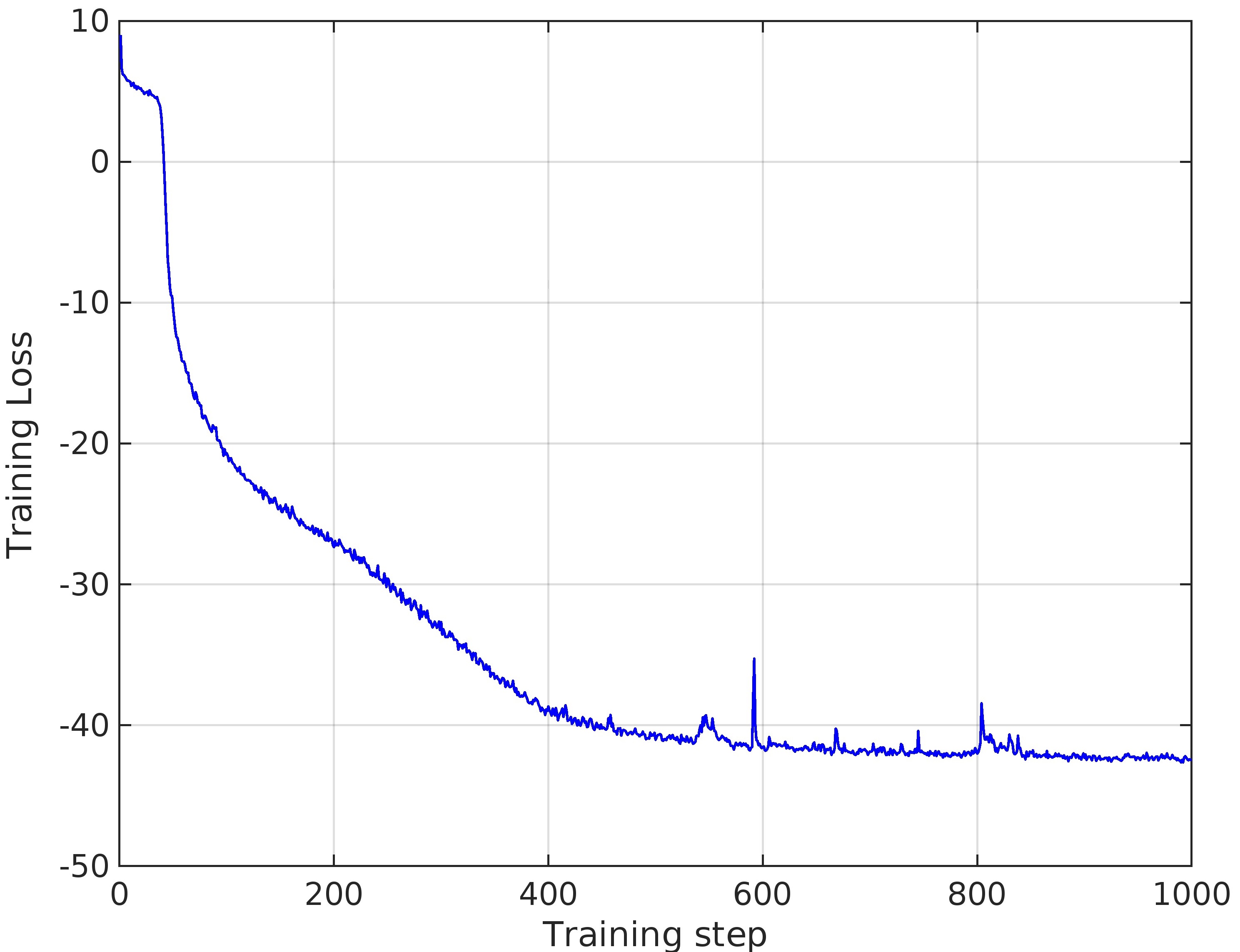}%
    \label{fig:sidelossa}}
 	\hspace{.1cm}
     \subfloat[Validation loss]{\includegraphics[width=0.48\columnwidth]{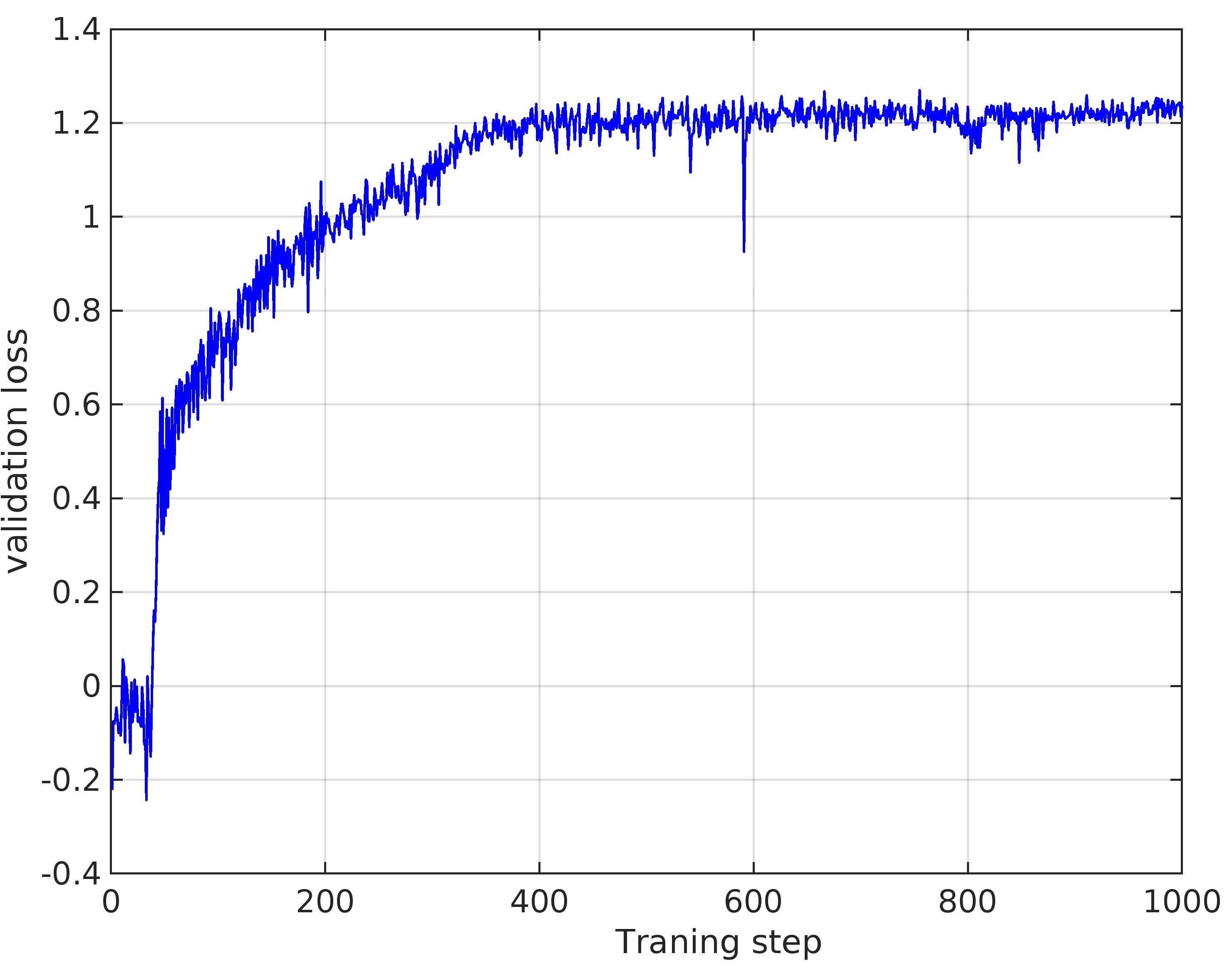}%
    \label{fig:sidelossb}}
	\caption{Loss Trends}
\label{fig:sideloss}
	\endminipage
\end{figure}


Fig.~\ref{3} shows the performance of the trained network after 3000 iterations, comparing observed test data (red line) with the predicted Gaussian distribution's 2-$\sigma$ bound (pink area, truncated to [0,~1]). Data are reordered by the robot's position to highlight trends. Predicted accuracy $Acc$ is defined as:
\begin{equation}\label{sim1}
	Acc = 1- Pro_w
\end{equation}
where $Pro_w$ is the probability of considering \emph{efficient} actions (observed probability is larger than 0.05) as the useless actions (mean value is smaller than 0.05); is $99.02\%$ for the test dataset. We present visual results for two test samples, demonstrating that the NP network effectively filters useless actions, aiding the GNPF-$k$CT solver in reducing the continuous action domain and enhancing efficiency.

\begin{figure}[!htb]
\centering
\includegraphics[width=0.9\linewidth]{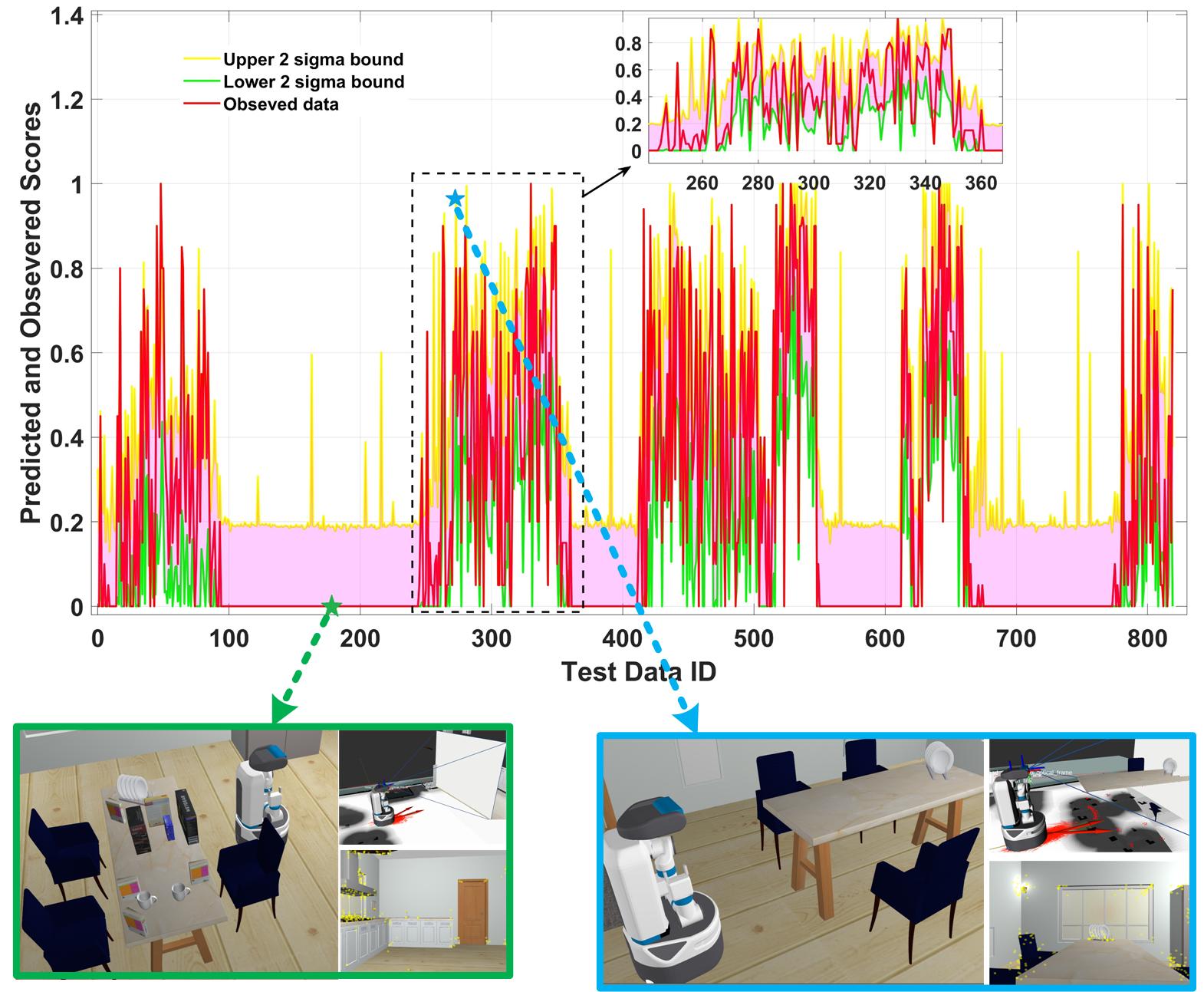}
\caption{The predicted and observed scores.}
\label{3}
 \end{figure}

\subsubsection{GNPF-$k$CT framework.}\label{s713}
\paragraph{Comparison results}
\begin{figure}[!htb]
	\minipage{\columnwidth}
	\centering
	\hspace{.1cm}
     \subfloat[Loose1]{\includegraphics[width=2.1cm,height=1.5cm]{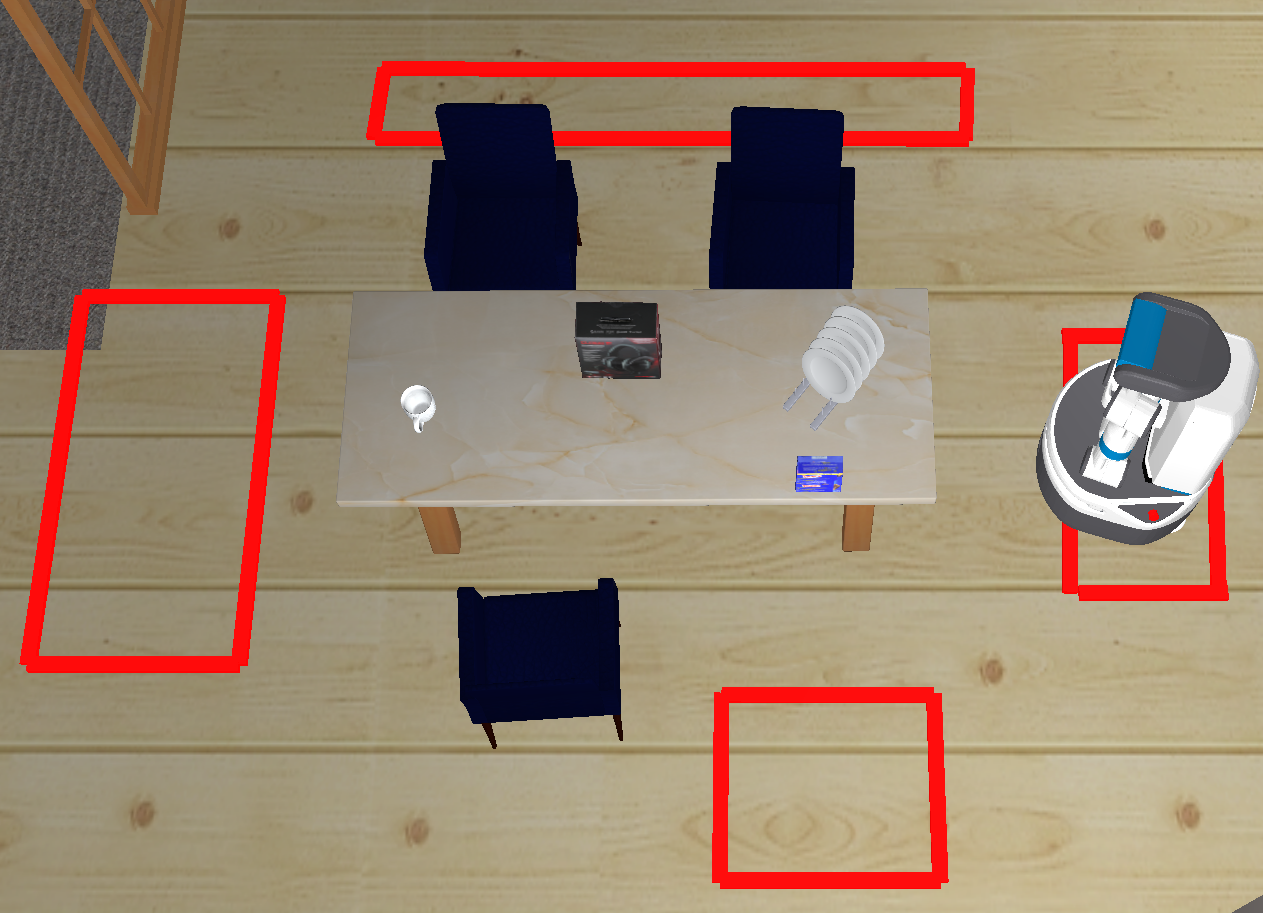}%
    \label{fig1_msg_img_ga}}
 	\hspace{.2cm}
     \subfloat[Loose2]{\includegraphics[width=3.2cm,height=1.5cm]{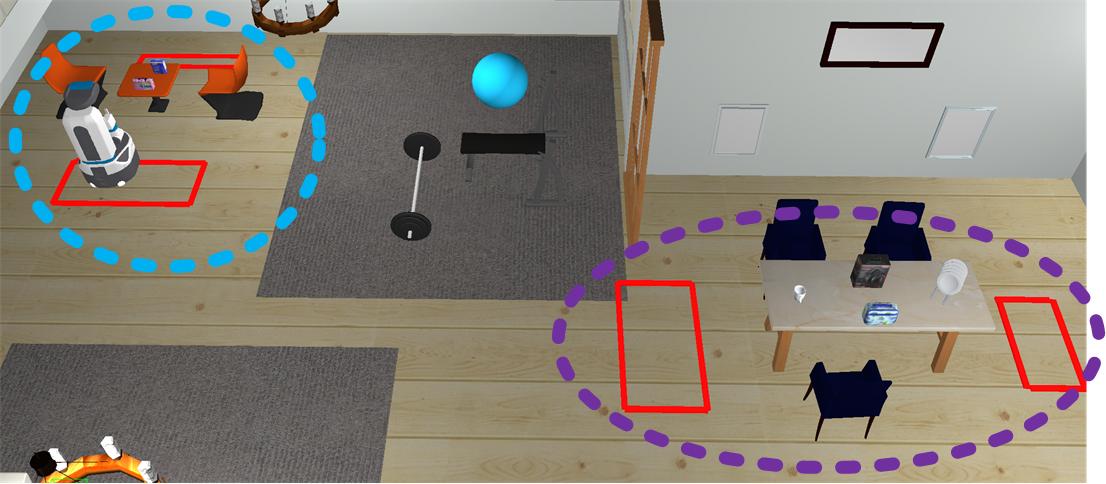}%
    \label{fig1_msg_img_gb}}
 	\hspace{.2cm}
     \subfloat[Hidden1]{\includegraphics[width=2.1cm,height=1.5cm]{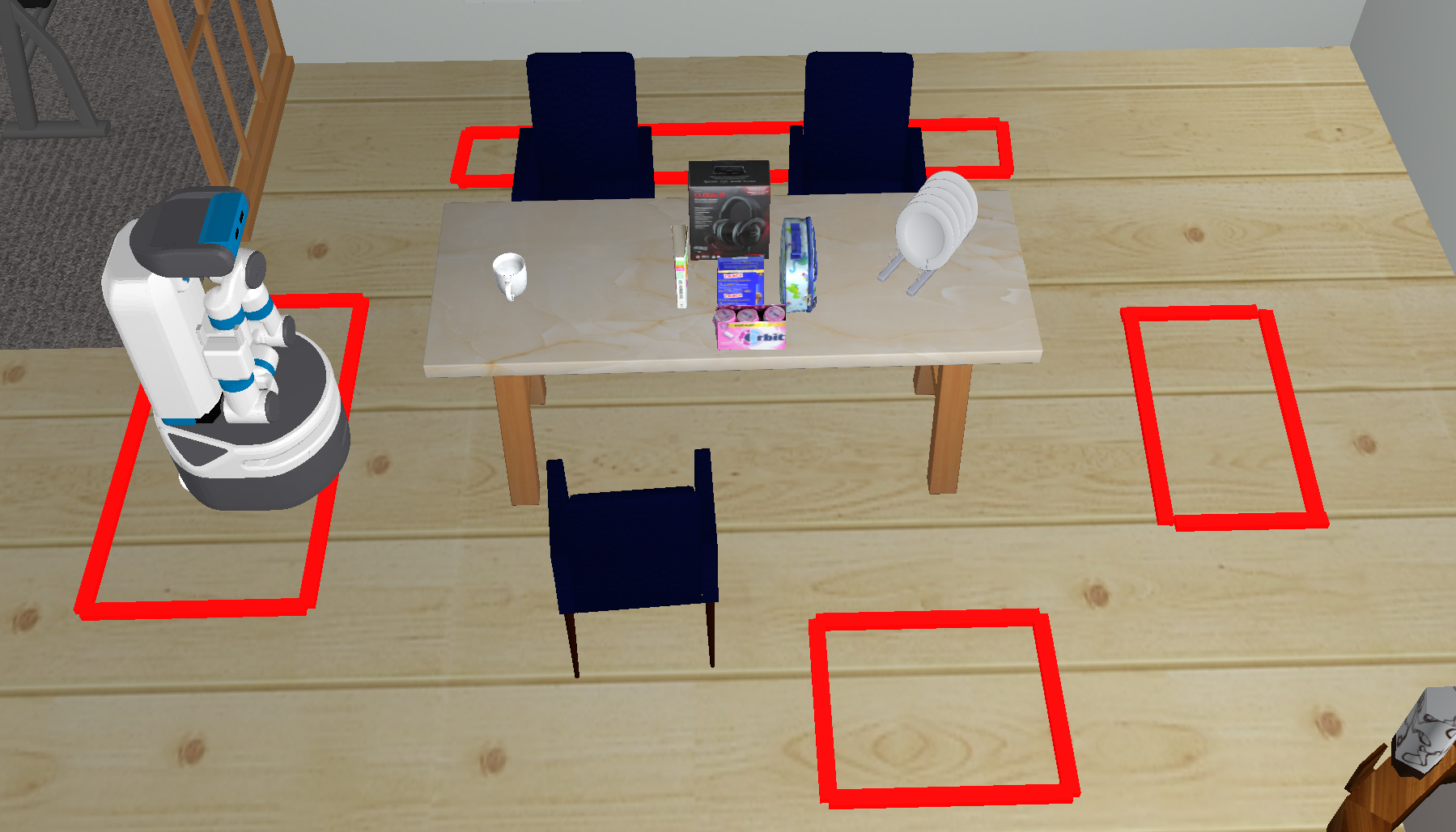}%
    \label{fig1_msg_img_gc}}
 	\hspace{.2cm}
     \subfloat[Covered1]{\includegraphics[width=2.1cm,height=1.5cm]{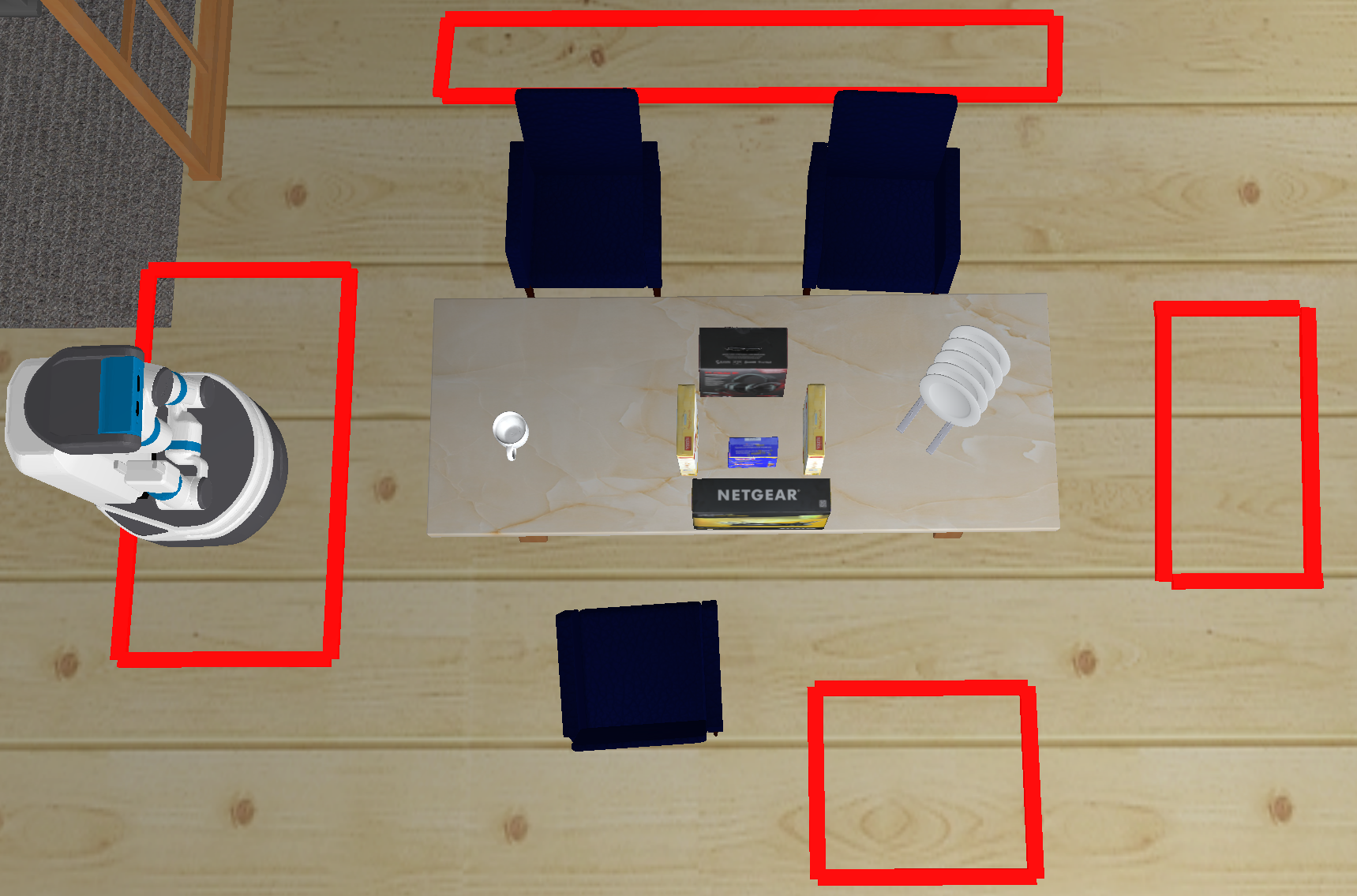}%
    \label{fig1_msg_img_gd}}
 	\hspace{.2cm}
     \subfloat[Complex1]{\includegraphics[width=2.1cm,height=1.5cm]{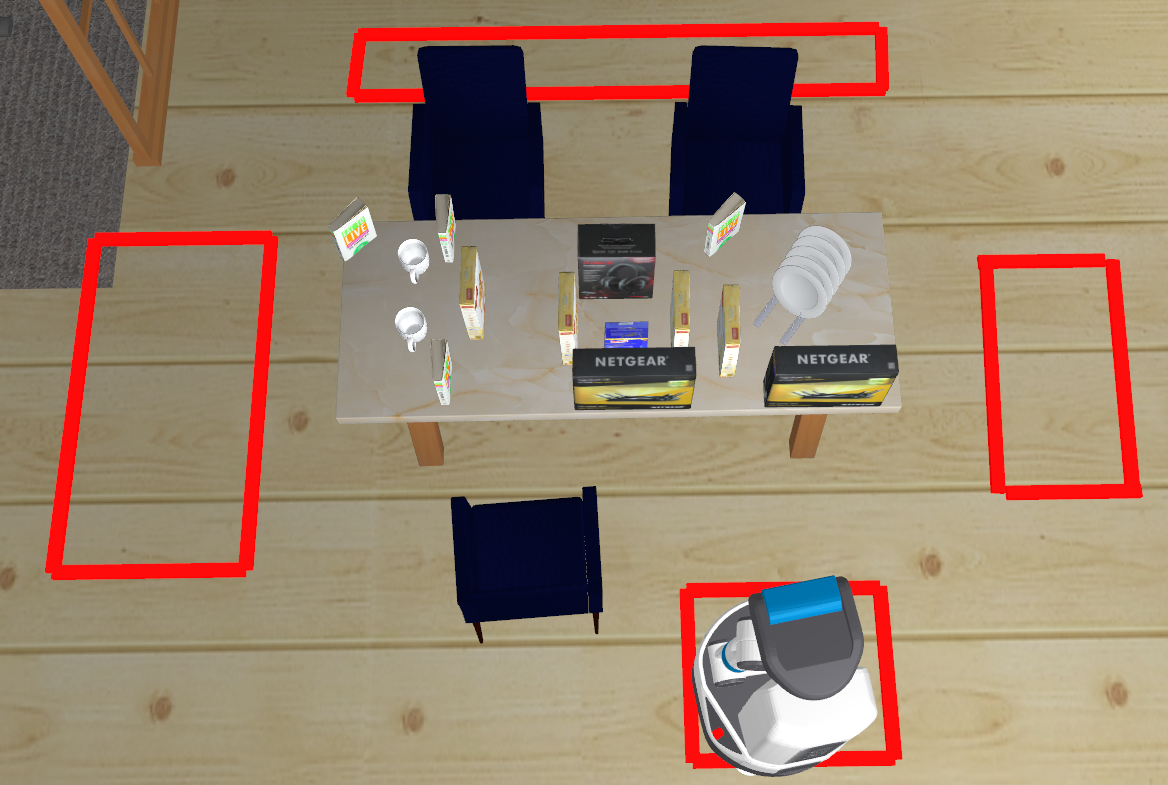}%
    \label{fig1_msg_img_gd}}
	\caption{The planning environment}
 \label{fig1_msg_img_gd_all}
	\endminipage
\end{figure}

Many existing object search methods rely on classical POMCP, which operates in a discrete action domain. We evaluate our method against both discrete-action POMDP solvers (POMCP, GPOMCP) and continuous-action solvers (POMCPOW, VOMCPOW, NPF-$k$CT). For POMCP and GPOMCP, robot configurations $\bm{a}_r$ are manually limited to combinations of 4 selected poses, 3 lift heights, and 9 head orientations, reducing coverage of the continuous action domain. Although this creates an unfair comparison, it highlights the benefits of broader continuous action domains. All methods address object search with a guessed target object and are tested across five scenarios of varying complexity:  Loose1 (4 objects, 1 workspace), Loose2 (6 objects, 2 workspaces), Hidden1 (7 objects, 1 workspace), Covered1 (7 objects, 1 workspace), and Complex1 (15 objects, 1 workspace), as shown in Fig.~\ref{fig1_msg_img_gd_all}. The Covered1 and Complex1 scenarios are particularly challenging, as the robot is unable to observe the target object from most configurations, even when operating at the maximum lift height. Several representative cases illustrating occluded observations, even under favorable robot configurations, are shown in Fig.~\ref{fig1_msg_img_gd_all_xxxz}. Each scenario involves 20 trials, and results, including the $95\%$ confidence interval for discounted cumulative reward, steps, and success rate within 50 steps, are reported in Table~\ref{sample-table} using the Fetch robot simulator. Because our GNPF-$k$CT solver is the combination of the GPOMCP and NPF-$k$CT, this result can also be considered as one of the ablation studies of our framework. Additionally, we know that the neural process is definitely useful for the task, so we do not remove the NP part and the rest ablation is implemented for the $k$-center clustering. In the NPF-$k$CT variant, we remove $k$-center clustering, which refines the continuous action domain (via Refine in Algorithm~\ref{alg:cap4}), while retaining action clustering and initial list updates before MCTS to summarize NP samples. Table~\ref{sample-table211} compares Loose1 and Hidden1 cases, demonstrating that $k$-center clustering significantly enhances the solver performance.

\begin{figure}[!htb]
\centering
\includegraphics[width=0.9\linewidth]{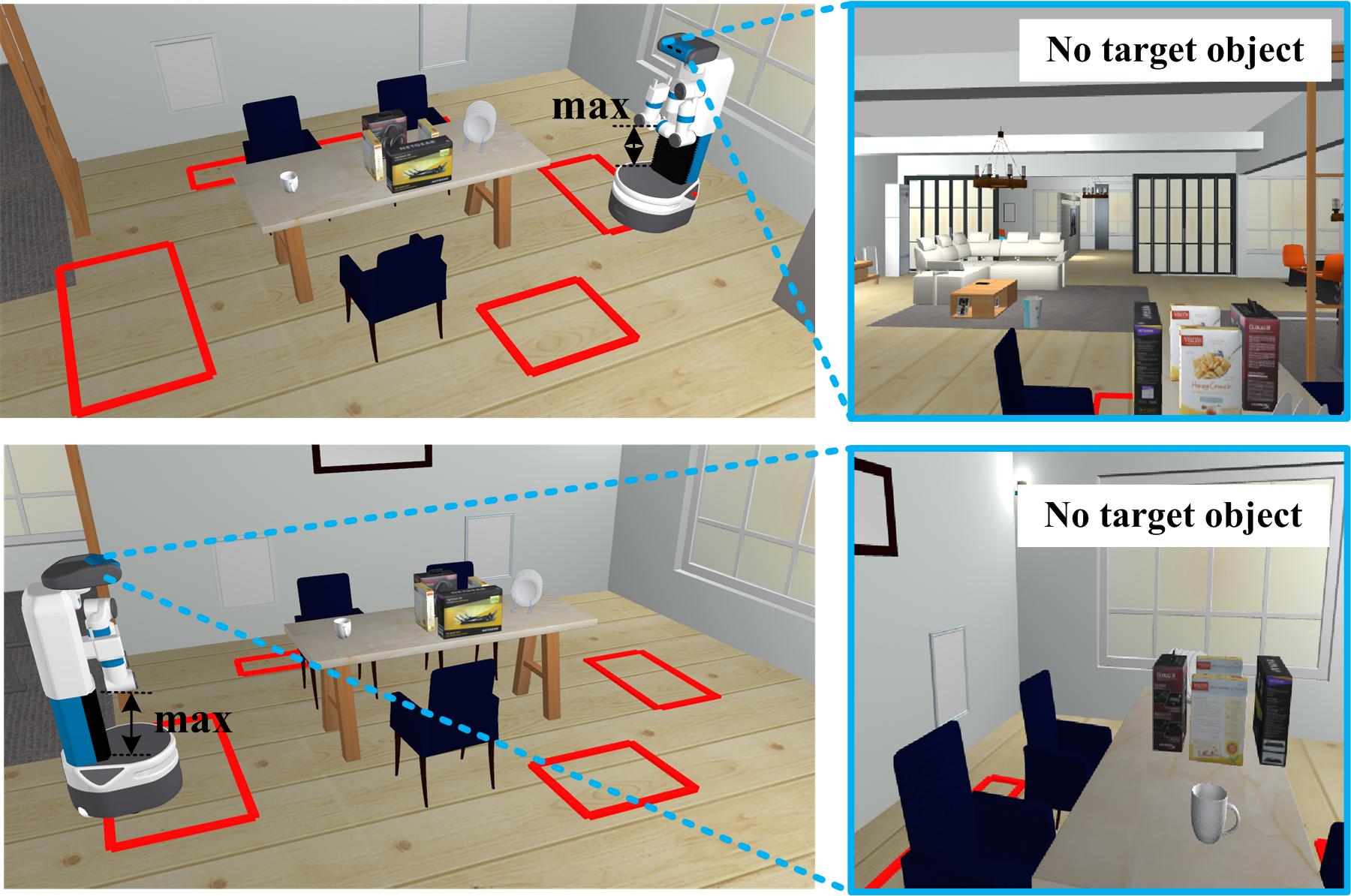}
\caption{Occluded observations even under good robot configurations.}
\label{fig1_msg_img_gd_all_xxxz}
 \end{figure}

\begin{table}[!ht]
	\caption{Ablation study based on NPF method}
	\label{sample-table211}
	\centering
	\begin{tabular}{ll}
		\toprule
		Scenarios&NPF\\
		\midrule
     Loose1 & ${88653.8}\pm{6812.4}~|~{6.7}\pm {0.9}|{100\%}$ \\
     Hidden1 &  ${75139.5}\pm{11521.5}~|~{11.4}\pm {2.7}|{100\%}$\\
		\bottomrule
	\end{tabular}
\end{table}

Our method and NPF-$k$CT exhibit a significant advantage across all scenarios due to the efficient NP filtering and robust clustering refinement. These mechanisms optimize robot configurations—poses, lift heights, and head orientations—for effective object observation. The continuous action refinement further facilitates the discovery of rare and challenging configurations, which are typically difficult to pinpoint in a large continuous action domain. This capability allows the robot to effectively locate and observe target objects, as illustrated in Fig.~\ref{rare}.

\begin{figure}[!htb]
\centering
\includegraphics[width=0.9\linewidth]{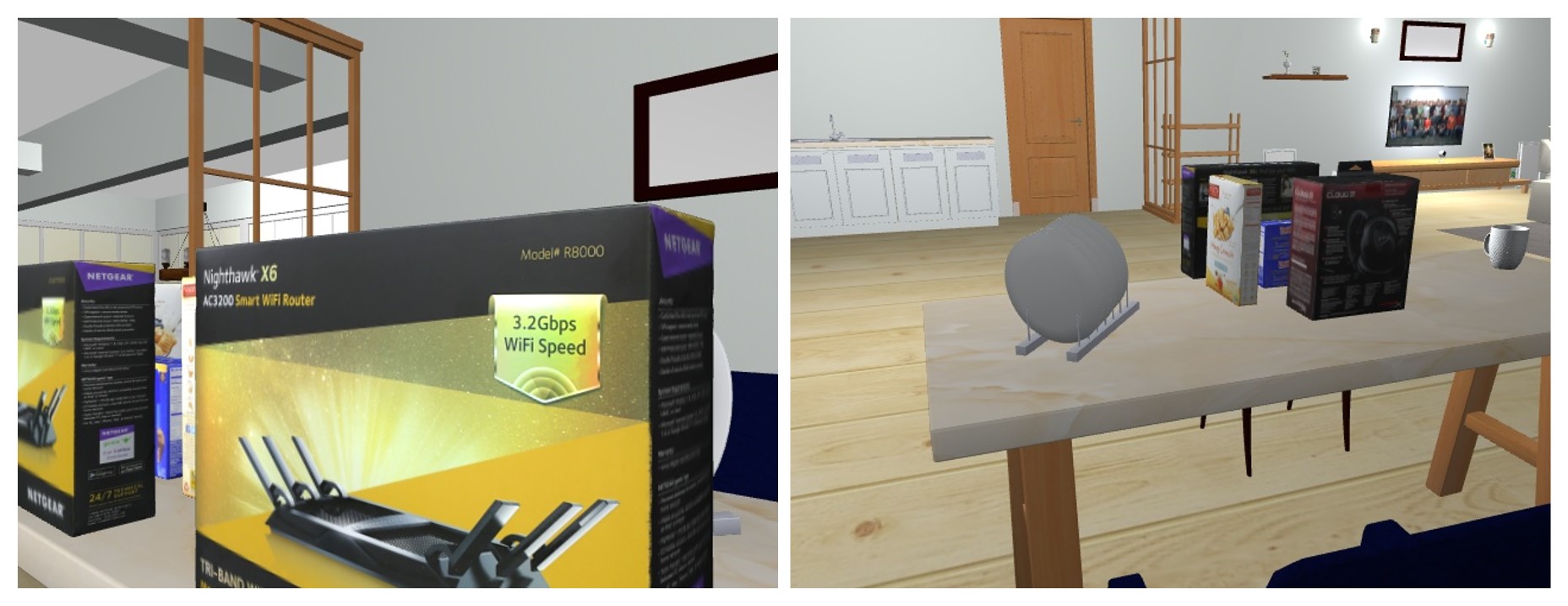}
\caption{Good observations based on tricky robot configurations (very common to obtain based on our method).}
\label{rare}
 \end{figure}
In simpler scenarios like Loose1 and Hidden1, NP filtering and refined clustering enable the robot's FOV to quickly cover the workspace, often observing all objects in the first step. In these cases, belief tree reuse is less effective or counterproductive due to added complexity (Section~\ref{s64}), no new observations, and the more predictable measurements within the belief tree.  However, belief tree reuse becomes advantageous in complex scenarios like Complex1 and Covered1, where the robot encounters numerous unexpected objects during the search process.

We illustrate object search execution in the Covered1 scenario with a 16-step sequence of actions, including adjustments to the robot's configuration, identification of obstacles and target objects, and their removal (highlighted with yellow dashed circles), as shown in Fig.~\ref{4}. For each step, we display the grid world’s odds value, $Odd(\mathcal{G})$, in the lower-left red image and the detected camera image in the lower-right blue image.  Odds values consistently converge on the target object.

\begin{table*}
  \caption{Discounted cumulative reward, steps, and successful rate for POMDP-based methods (within 50 steps)}
  \centering
  \label{sample-table}
  \small{
  \begin{tabular}{lllll}
    \cmidrule(lr){1-4}
    Scenarios&Loose1&\multirow{6}{*}{}& Complex1&\multirow{6}{*}{} \\
    \cmidrule(lr){1-2} \cmidrule(lr){4-4} 
    POMCP& $78629.6\pm 8472.1~|~7.2\pm0.7$~$|$~$\bm{100\%}$& & $27564.2\pm  7591.0~|~47.9\pm3.2$~$|$~${25\%}$ \\
    GPOMCP& $79788.0\pm 4787.4~|~7.4\pm0.6$~$|$~$\bm{100\%}$& &$40278.0\pm  4957.7~|~42.9\pm3.8$~$|$~${50\%}$\\
    POMCPOW& $72612.0\pm 11102.2~|~9.4\pm2.1$~$|$~$\bm{100\%}$ & & ${37023.4}\pm {6951.9}~|~{44.2}\pm{3.4}$~$|$~${75\%}$  &\\
    VOMCPOW&  $77622.9\pm 11406.1~|~8.5\pm1.6$~$|$~$\bm{100\%}$& & ${40555.1}\pm {6830.9}~|~{41.1}\pm{4.4}$~$|$~${90\%}$ &\\
    NPF-$k$CT& $\bm{94795.1}\pm \bm{6350.6}~|~\bm{6.0}\pm \bm{0.7}$~$|$~$\bm{100\%}$ & & ${44737.1}\pm {6669.1}~|~{36.3}\pm {5.1}$~$|$~$ \bm{95\%}$  & \\
    \cmidrule(lr){1-2} \cmidrule(lr){4-4}
    GNPF-$k$CT& $ 89548.1\pm 7391.9 ~|~ 6.7\pm 0.8 $~$|$~$\bm{100\%}$ & & $ \bm{50527.8}\pm \bm{5259.7} ~|~ \bm{33.8}\pm \bm{5.0} $~$|$~$\bm{95\%}$  & \\
    \cmidrule(lr){1-4}
    Scenarios&Hidden1&\multirow{6}{*}{}& Covered1&\multirow{6}{*}{}\\
    \cmidrule(lr){1-2} \cmidrule(lr){4-4}
    POMCP& $ 45815.2\pm7260.9~|~19.4\pm2.8$~$|$~$\bm{100\%}$&  &$31506.9\pm 6249.7~|~ 32.3\pm 6.0$~$|$~80\% & \\
    GPOMCP& $55574.8\pm6225.3~|~15.3\pm2.0$~$|$~$\bm{100\%}$ &  &$34397.8\pm 7381.6~|~26.7\pm 4.8$~$|$~$95\%$ &\\
    POMCPOW& $61728.4\pm8791.3~|~12.9\pm 2.7$~$|$~$\bm{100\%}$ & &$40762.1\pm8401.1~|~23.9\pm 6.3$~$|$~$90\%$&\\
    VOMCPOW& $58286.8\pm10101.1~|~14.9\pm 3.1$~$|$~$\bm{100\%}$ & &$35725.4\pm9880.5~|~25.2\pm 5.9$~$|$~$90\%$& \\
    NPF-$k$CT&$\bm{83377.1}\pm \bm{6427.3}~|~8.5\pm 1.4$~$|$~$\bm{100\%}$ & &${44966.1}\pm{6340.2}~|~{21.7}\pm {3.0}$~$|$~$\bm{100\%}$&  \\
    \cmidrule(lr){1-2} \cmidrule(lr){4-4}
    GNPF-$k$CT&$ 81270.1\pm 5947.1~|~\bm{8.3} \pm \bm{1.3} $~$|$~$\bm{100\%}$ & &$ \bm{55596.9}\pm \bm{8602.0}~|~ \bm{19.4}\pm \bm{3.8} $~$|$~$\bm{100\%}$&  \\
    \cmidrule(lr){1-4}
    Scenarios&Loose2 &\multirow{7}{*}{} &\multirow{7}{*}{\includegraphics[width=0.15\linewidth,height=0.8in]{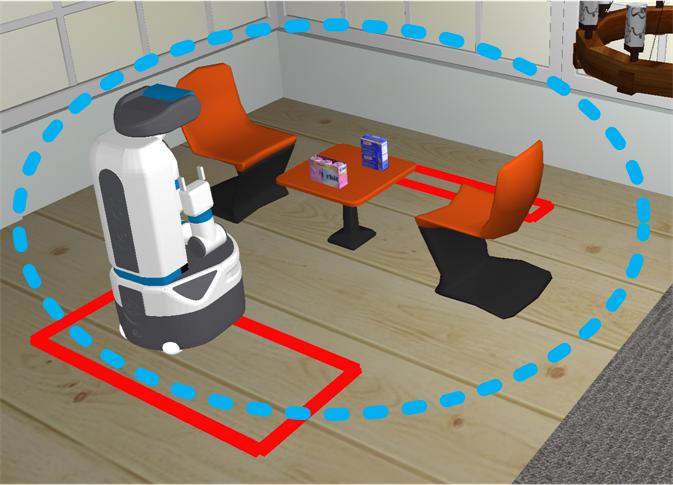}~~\includegraphics[width=0.15\linewidth,height=0.8 in]{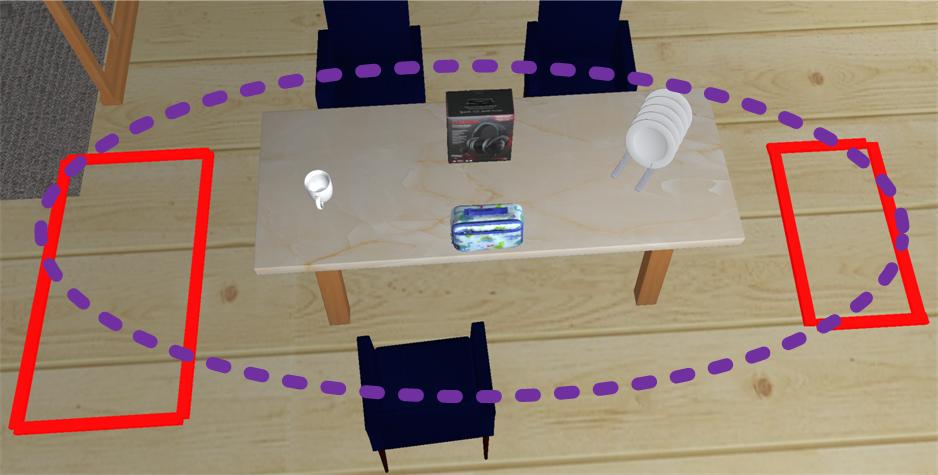}}~&\multirow{7}{*}{}\\
    \cmidrule(lr){1-2} 
    POMCP& $38462.6 \pm 11221.2$~$|$~$23.2\pm6.6$~$|$~95\%  &&\\
    GPOMCP& ${51574.1 \pm 17930.9}$~$|$~${17.4 \pm 2.9}$~$|$~$\bm{100\%}$ & &\\
    POMCPOW& ${{21785.1} \pm {6783.1}}$~$|$~${{32.7} \pm {6.5}}$~$|$~${70\%}$&  &\\
    VOMCPOW&  ${{26860.0} \pm {5779.8}}$~$|$~${{28.2} \pm {6.0}}$~$|$~${85\%}$ &  &\\
    NPF-$k$CT&  ${\bm{69992.7} \pm \bm{8185.8}}$~$|$~${{11.4} \pm {2.0}}$~$|$~$\bm{100\%}$ & &  \\
    \cmidrule(lr){1-2} 
    GNPF-$k$CT&  ${68989.1}\pm {8585.6}$~$|$~$\bm{ 11.1}\pm  \bm{1.7}$~$|$~$\bm{100\%}$ & & ~~~~~~~~~~~~~~(b) Loose2 \\
    \cmidrule(lr){1-4} 
  \end{tabular}}
\end{table*}

\begin{figure*}[!htb]
  \centering
\includegraphics[width=\linewidth]{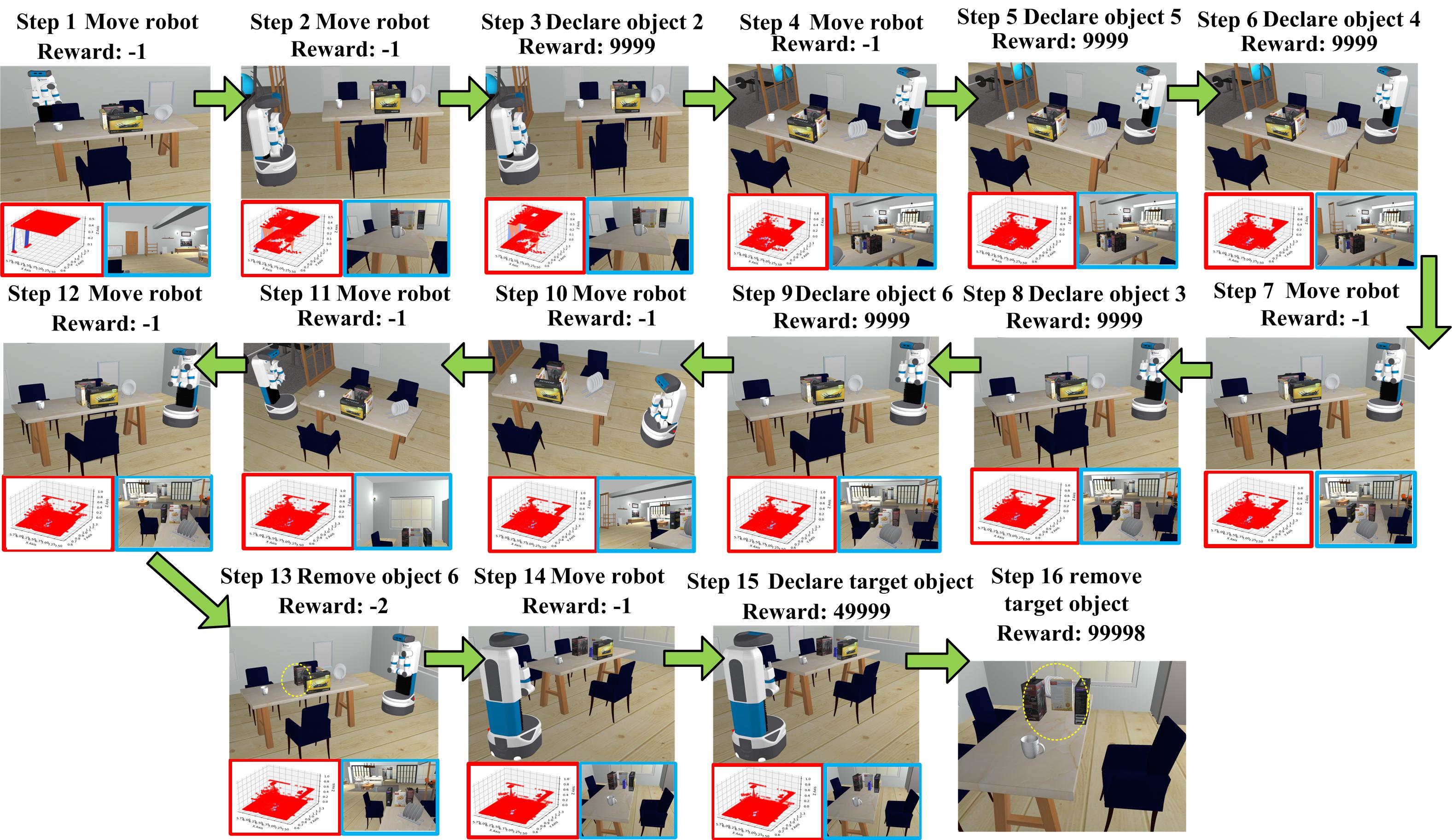}
\caption{The visual progress for Covered1 scenario.}
\label{4}
 \end{figure*}
 
\paragraph{Results with different parameters}

Our method demonstrates robust performance against manual parameter settings. For example, as long as $R_{max}\gg R_{min}$, performance remains consistent. When $R_{min}$ changes from -1 to -20, the final performance shows minimal variation. Comparative results for Hidden1 emphasize this stability, attributed to the compact and well-structured POMDP design  (Table~\ref{sample-table2}).

\begin{table}[!ht]
	\caption{95\% confidence interval of discounted cumulative reward, steps, and successful rate (within 50 steps)}
	\label{sample-table2}
	\centering
    \small{
	\begin{tabular}{ll}
		\toprule
		Scenarios&Hidden\_1\\
		\midrule
        $R_{min}=-1$ &  $89548.1\pm 7391.9 ~|~ 6.7\pm 0.8 $~$|$~$\bm{100\%}$  \\
        $R_{min}=-20$ &  $91579.4\pm 7536.1 ~|~ 6.5\pm 0.9 $~$|$~$\bm{100\%}$     \\
		\bottomrule
	\end{tabular}}
\end{table}

We also highlight several cases where certain parameters can significantly impact performance, such as the threshold for declaring actions $n_{odds}$. This threshold determines the grid values used for comparison, crucial for assessing declaring action success. Smaller $n_{odds}$ increases task difficulty, requiring object detection from multiple orientations. We conducted statistical analyses in Fig.~\ref{fig:side} with $n_{odds}=2,~4,~6$ for several representative methods in a scenario featuring 6 objects (Fig.~\ref{5}). While all methods achieved 100\% success, the advantage of GNPF-$k$CT and NPF-$k$CT diminished with easier $n_{odds}$ setting. In our real-world experiments with the Stretch robot (Fig.~\ref{6a}), $n_{odds}$ was set to 6.

\begin{figure}[!th]
	\centering
	\begin{subfigure}[b]{0.48\linewidth}
		\centering
  \includegraphics[width=1.6in,height=1.5in]{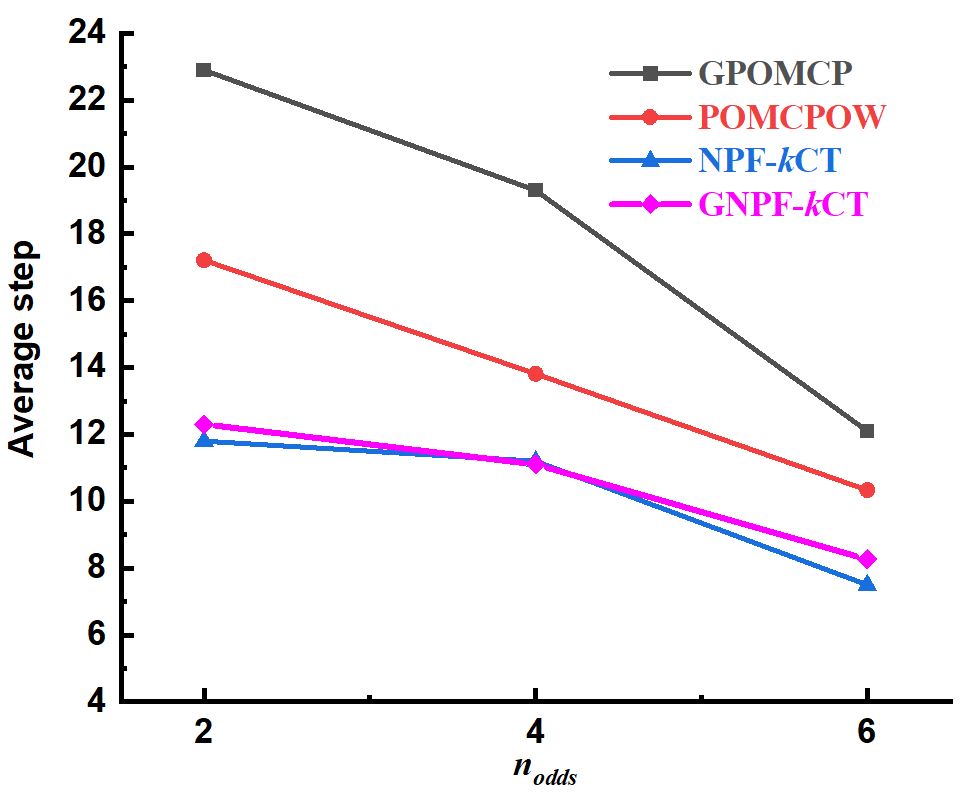}
		\caption{Average step}
	\end{subfigure}%
	\begin{subfigure}[b]{0.48\linewidth}
		\centering
  \includegraphics[width=1.6in,height=1.5in]{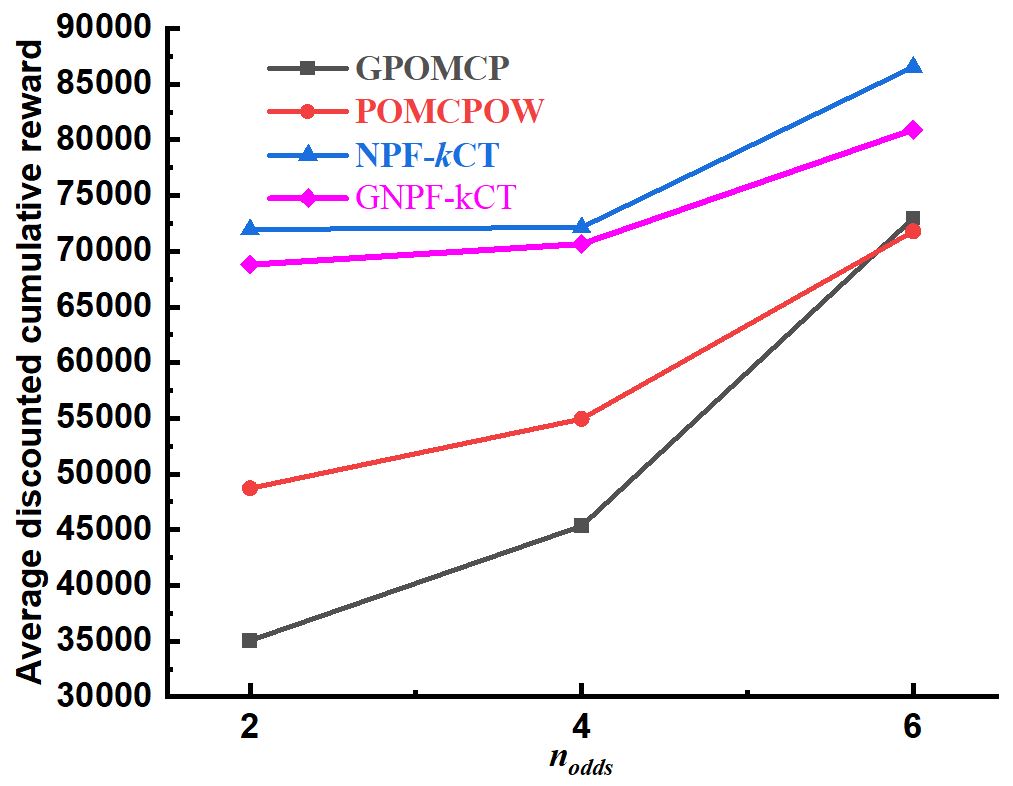}
		\caption{Average reward}
	\end{subfigure}
	\caption{Results for different $n_{odd}$ values.}\label{fig:side}
\end{figure}

\begin{figure}[!htb]
	\centering
\includegraphics[width=0.7\linewidth]{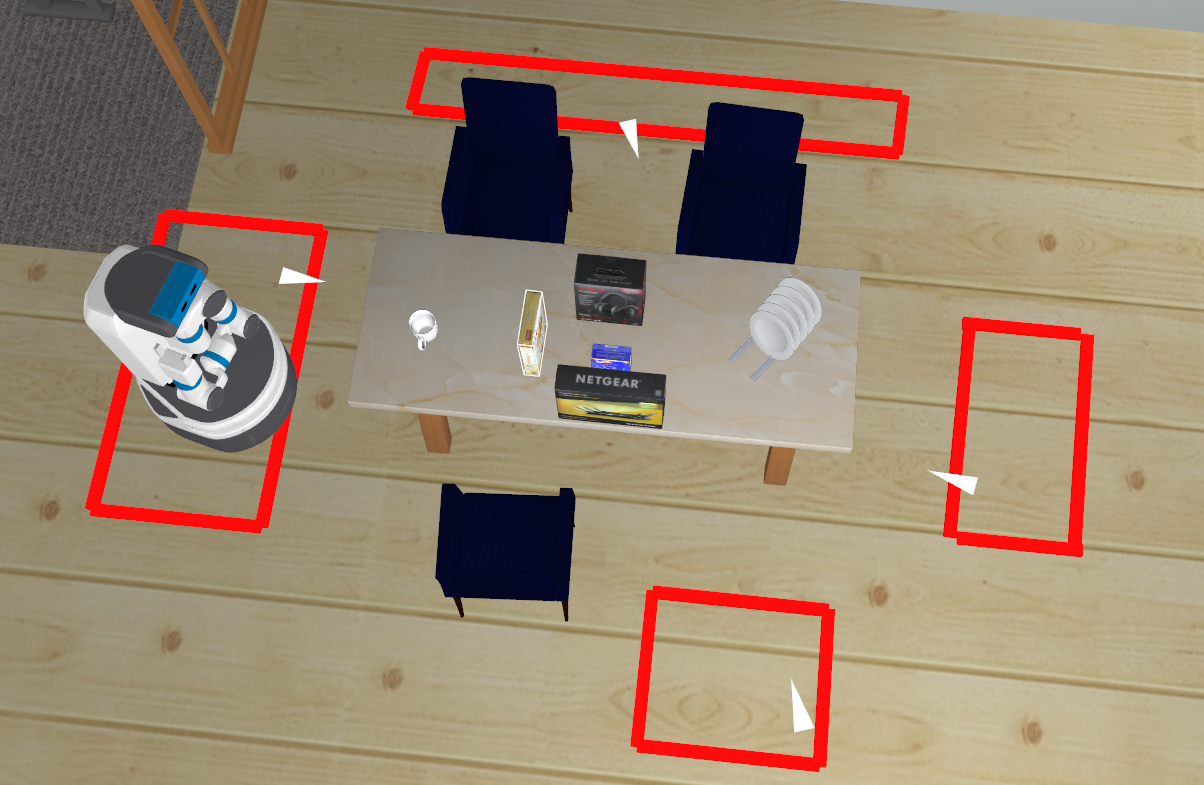}
\caption{The scenario with 6 objects with candidate continuous position domain (red rectangles) and discrete pose domain (white triangles).}
\label{5}
 \end{figure}

The parameter $k$ significantly influences the refining speed of the continuous action domain, affecting performance across tasks. To assess its impact, we varied $k$ from 2 to 8 for the 6-object scenario (Fig.~\ref{5}). Fig.~\ref{fig:side11} shows the 95\% confidence intervals for discounted cumulative rewards (black line) and steps (red line), with results grouped by colored dashed boxes for corresponding $k$ values. The analysis indicates optimal clustering at $k=4$ and $k=7$, while other values reduce solver performance.

\begin{figure}[!htb]
	\centering
\includegraphics[width=0.8\linewidth]{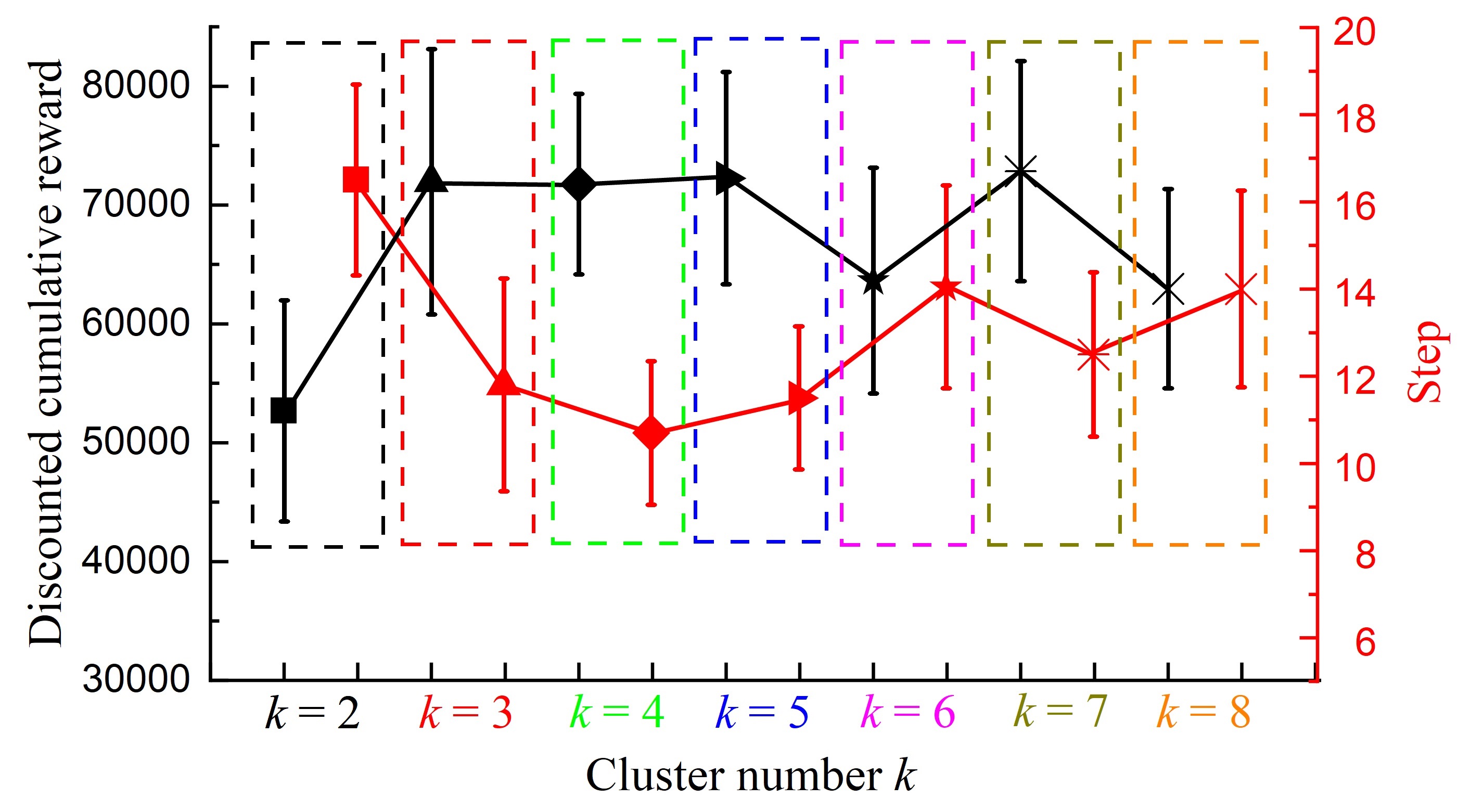}
\caption{Results for different cluster numbers $k$.}
\label{fig:side11}
 \end{figure}

\paragraph{Results with another simulator:}

Our method is adaptable to any mobile robot with similar sensors and configurations, not limited to the Fetch robot. As an example, we integrated it with the Stretch robot simulator to further validate the framework's applicability. Despite structural differences between the Stretch and Fetch robots, we tested the pre-trained Fetch network on Stretch without new data to evaluate its generalization in updating grid belief via the head camera. Problem settings from Section~\ref{s711} were retained, except for removing lift motion $l_h$. In Loose1 and Fig.~\ref{5} scenarios, the Stretch robot achieved discounted cumulative rewards, steps, and success rates of ${85181.2}\pm{9092.8}~|~{7.4}\pm {1.3}$~$|$~${100\%}$ (Loose1) and {${52558.4}\pm{5255.8}~|~{17.5}\pm {3.0}$~$|$~${100\%}$} (Fig.~\ref{5}), slightly lower than Fetch results (Table~\ref{sample-table}, Fig.~\ref{fig:side}) but showing robustness. We recollect the data using the Stretch robot simulator, re-train the network with the same settings, and finally re-run the whole planning. The updated results are ${88813.3}\pm{6480.2}~|~{6.8}\pm {1.1}$~$|$~${100\%}$ (Loose1) and ${65189.3}\pm{9459.3}~|~{13.8}\pm {3.0}$~$|$~${100\%}$ (Fig.~\ref{5}), aligning closely with the Fetch robot outcomes. A visual example of the Stretch robot completing Loose1 in 6 steps is shown in Fig.~\ref{8}.

\begin{figure}[!htb]
\centering
\includegraphics[width=0.9\linewidth]{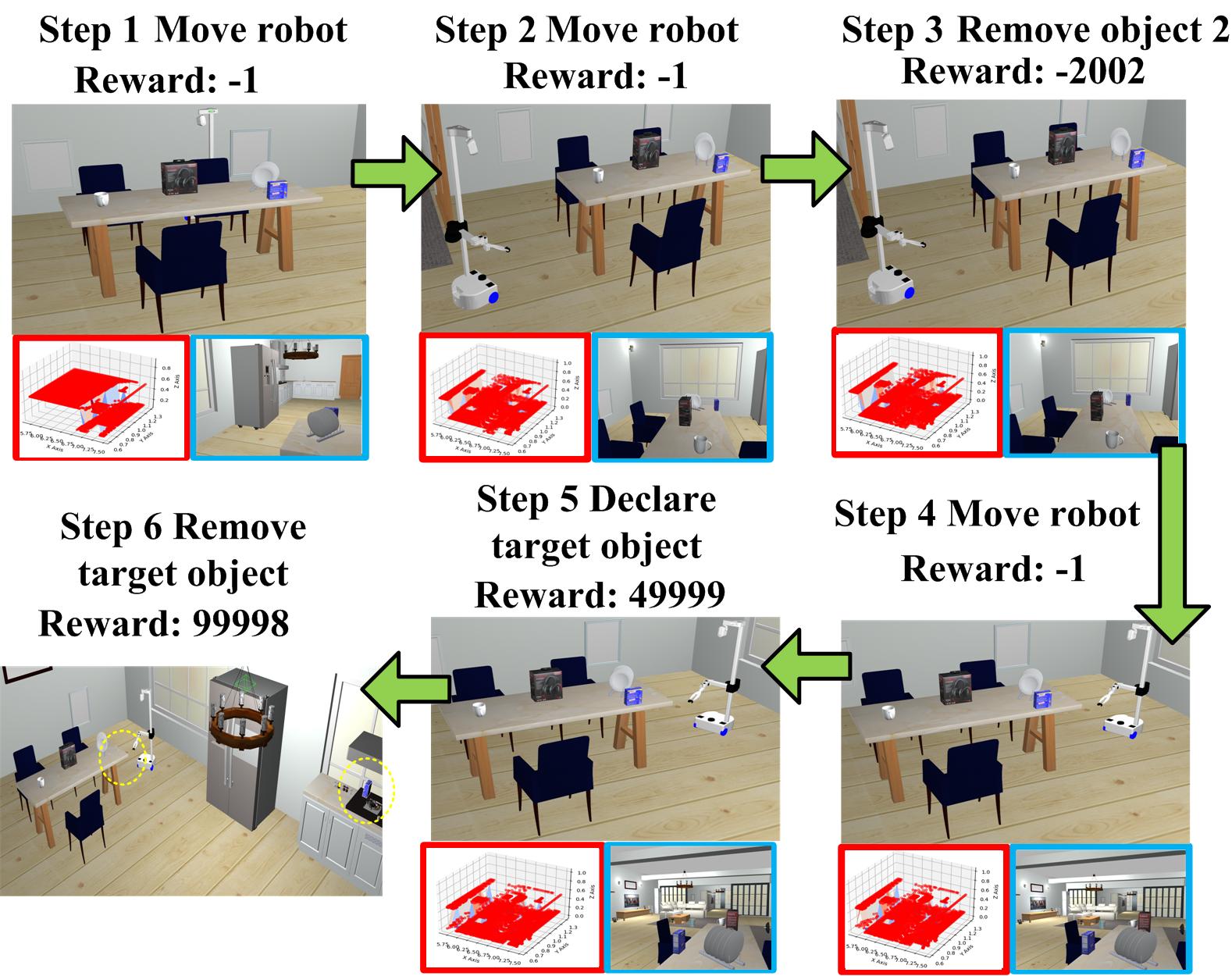}
\caption{Results for Loose1 scenario using Stretch robot.}
\label{8}
 \end{figure}

\subsection{Real-world experiments}

\begin{figure}[!htb]
\centering
\includegraphics[width=0.6\linewidth]{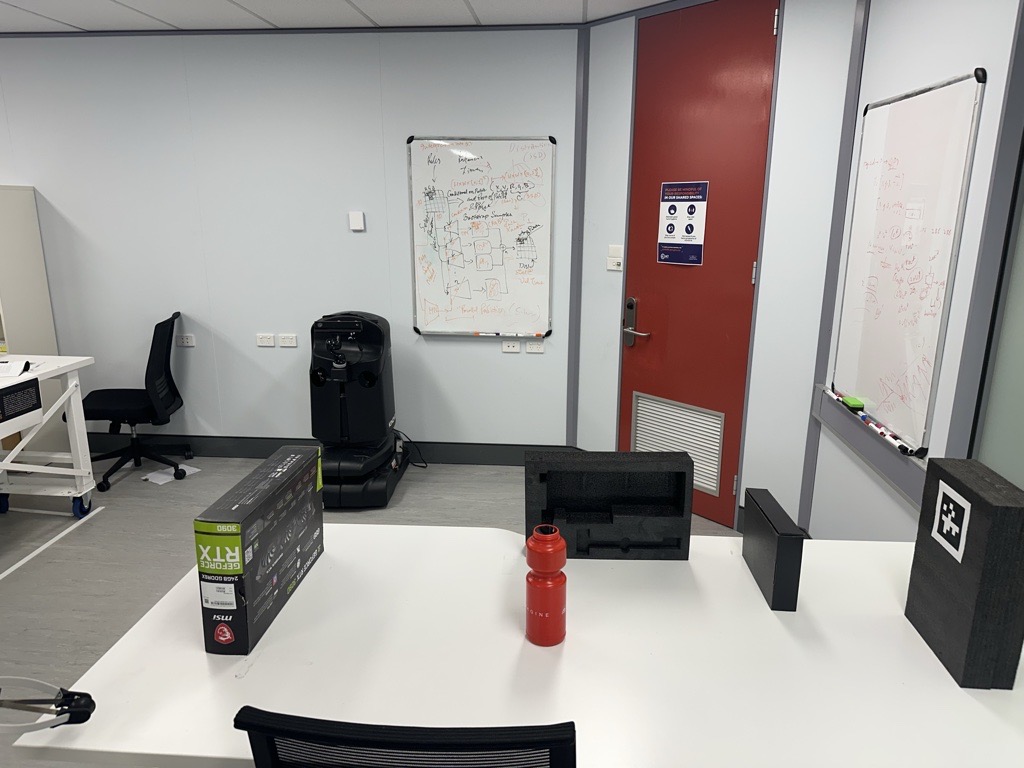}
\caption{The real-world planning environment.}\label{6a}
\end{figure}

\begin{figure}[!htb]
\centering
	\includegraphics[width=0.8\linewidth]{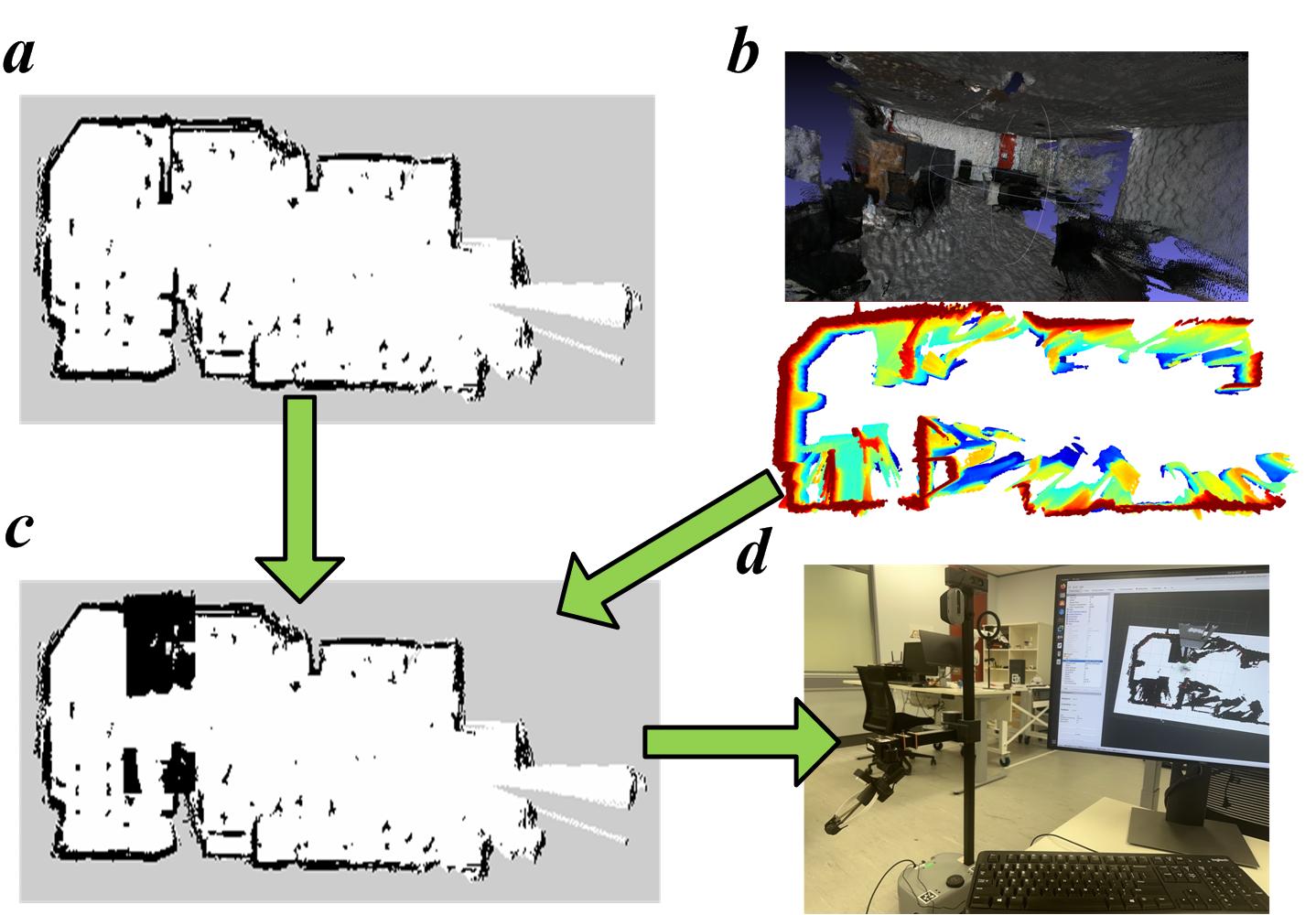}
	\caption{The used maps in real world experiments.}\label{6}
\end{figure}

We validate our method on the real-world Stretch platform (Fig.~\ref{6a}). The environment's 2D occupancy grid and point cloud maps, generated with RTAB-SLAM~\cite{labbe2019rtab}, are shown in Fig.~\ref{6} a and b for localization. These maps are fused to create a safer navigation map with expanded obstacle areas (Fig.~\ref{6} c and d). While object-level primitives are assumed to be fully implemented (which is challenging in reality), performance may decrease due to failed actions, resulting in a success rate below 100\% (6 successes in 20 trials). The robot's goal is to locate a red bottle. In 20 real-world trials, all attempts with successful primitive actions were successful, while others failed due to the failure of the primitives because of hardware, communication, or perception issues. To illustrate the object search process, Fig.~\ref{4xxx} shows the real robot's action sequence in 5 steps, including pose adjustments, camera orientation changes, and declarations of obstacles and the target object (yellow dashed circle). Fig.~\ref{6b} highlights four cases involving primitives, with additional experimental results available in the attached video.  

\begin{figure}[!htb]
\centering
	\includegraphics[width=0.9\linewidth]{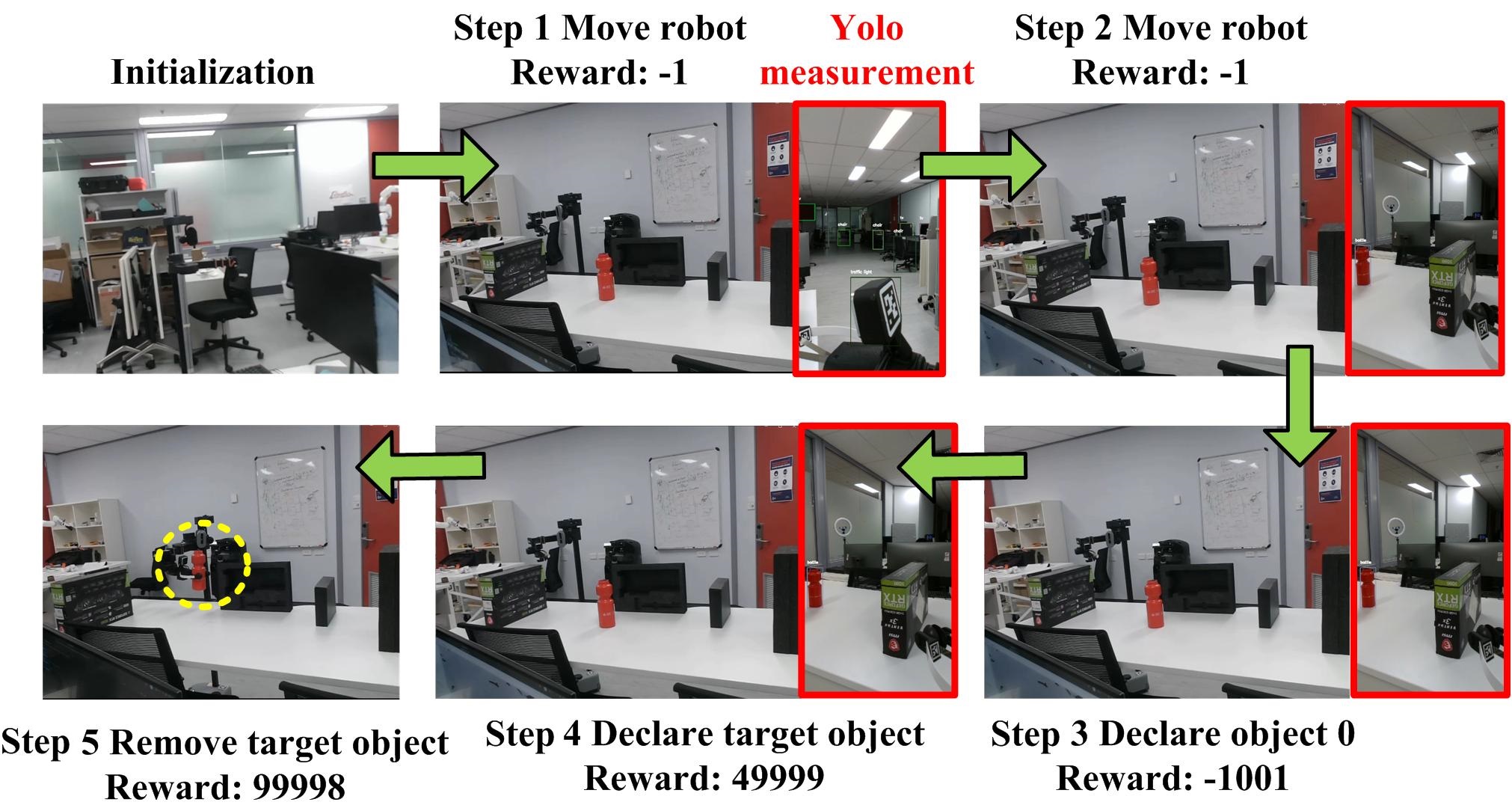}
	\caption{Real-world experimental results.}\label{4xxx}
\end{figure}

\begin{figure}[!htb]
\centering
	\includegraphics[width=0.9\linewidth]{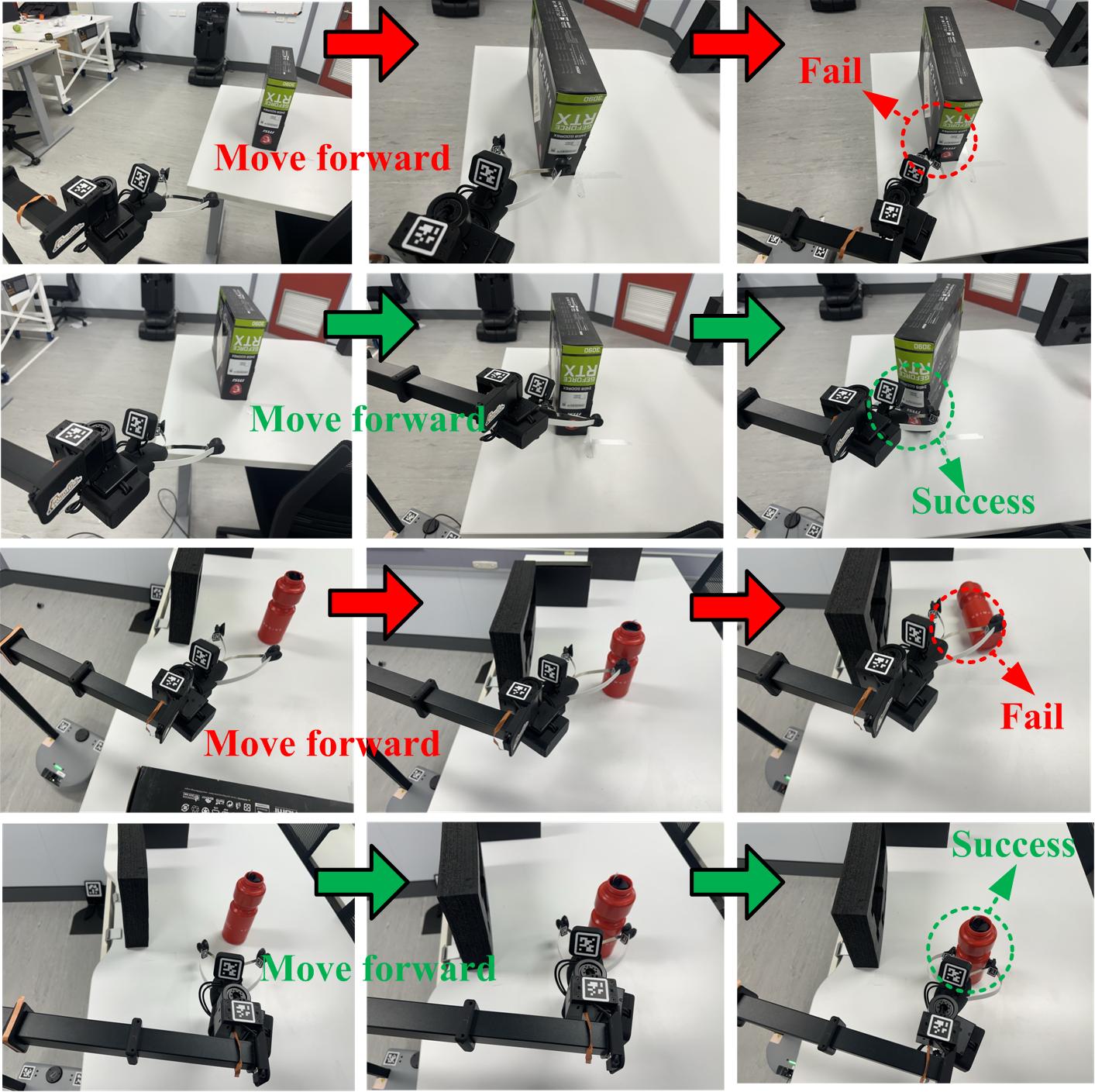}
	\caption{Failed and successful real-world primitive actions.}\label{6b}
\end{figure}

\subsection{Comparison with non-POMDP framework}
All previous results were obtained using our proposed POMDP framework. In this section, we compare our method with several representative non-POMDP planning frameworks, including Random, NPF-G, SGoLAM~\cite{KimJ}, SayPlan~\cite{Rana1953Sayplan}, and MoMa-LLM~\cite{HonerkampHonerkamp}. Since these planning methods differ substantially in their original problem formulations and configuration settings, we adapt them to our task by minimally revising their action spaces and unifying the perception module across all methods, ensuring a fair comparison. We first provide a qualitative summary of the original capabilities of these methods, as reported in the corresponding references, in Table~\ref{sample-table211xxxx}.


\begin{table*}[!ht]
	\caption{Ablation study based on NPF method}
	\label{sample-table211xxxx}
	\centering
	\begin{tabular}{lllll}
		\toprule
		Scenarios&Simulator& Semantic Perception& Interactive actions& Uncertainty\\
		\midrule
     Ours & Gazebo & Real (match, pose, size) & True(remove objects)& True\\
     SGoLAM &  Habitat & Real (match, position)& False& True\\
     Sayplan & No mention & Pre-given& False& False\\
     MoMa-LLM &   iGibson&Pre-given(QR code)& True(open cover)& False\\
		\bottomrule
	\end{tabular}
\end{table*}
As shown in Table~\ref{sample-table211xxxx}, our framework is implemented in the high-fidelity Gazebo environment and operates with minimal prior information about the scene. The entire pipeline, including perception and manipulation actions, is designed to be directly transferable to real robotic systems with very little modification. In contrast, LLM-based methods such as SayPlan and MoMa-LLM typically rely on pre-given semantic annotations or simulator-provided ground-truth information. For example, MoMa-LLM directly accesses pixel-level semantic perception from the iGibson simulator and employs QR-code markers in real-world experiments, both of which are treated as uncertainty-free prior knowledge (marked as Pre-given in Table~\ref{sample-table211xxxx}). These simplified assumptions make certain actions easier, for instance, exploration can be performed by a single rotation that immediately reveals all object labels. Moreover, SGoLAM and SayPlan do not support interactive actions that modify the environment configuration. While MoMa-LLM includes limited interactions such as opening doors or refrigerators, it does not allow interactive manipulation of detected objects. In contrast, our framework explicitly models environment interaction and uncertainty, enabling more realistic and transferable decision-making. We next detail the compared methods:

\paragraph{Random} This baseline adopts a random high-level policy that selects actions uniformly at random from both discrete and continuous action spaces. Specifically, we first sample a scalar $\gamma_s \sim \mathcal{U}(0,1)$. If $\gamma_s > 0.3$, a continuous action is selected, and its parameters are sampled uniformly from the predefined continuous ranges. If $\gamma_s \leq 0.3$, we check whether any objects are available for declaration or removal. If such objects exist, one is randomly selected for declaration or removal; otherwise, a continuous action is selected as in the case of $\gamma_s > 0.3$.

\paragraph{NPF-G} This baseline employs a greedy high-level decision-making strategy over both discrete and continuous actions. Since object declaration and target-object removal consistently yield positive rewards, these actions are always selected whenever available. Otherwise, a scalar $\gamma_s \sim \mathcal{U}(0,1)$ is sampled and compared with 0.1. If $\gamma_s > 0.1$, a continuous action is selected by choosing the one with the highest signal-to-noise ratio among actions predicted as efficient by the trained neural network, with a likelihood of at least 20\%. If $\gamma_s \leq 0.1$, the policy checks for available removal actions and randomly selects one if present; if none are available, it defaults to the same continuous action selection strategy used when $\gamma_s > 0.1$.

\paragraph{SGoLAM} SGoLAM~\cite{KimJ} achieved top performance in the CVPR 2021 MultiON Challenge using a purely non-learning-based strategy. The method first conducts frontier-based exploration to search the environment and locate the target object, and then switches to a dedicated planner for efficient navigation. To ensure a fair comparison with our framework, which explicitly provides the workspace area, we reduce the frontier exploration step size and perform exploration directly in the continuous position domain (highlighted by red rectangles in Fig.~\ref{5}). Without this modification, SGoLAM exhibits significantly degraded performance in our setting. Moreover, since the continuous exploration positions are already close to the objects, no additional navigation step is required to further increase the target object’s pixel size after frontier exploration. For perception, we use the same configuration as our method. At each frontier point, exploration is performed by selecting the three most distinctive orientations from 100 uniformly sampled candidates.

\paragraph{Sayplan}: This baseline follows a prompt structure adapted from SayPlan~\cite{Rana1953Sayplan}, reformulated to use instruction-based reasoning and a scene graph represented in JSON format. To better fit our task setting, we extend the original action set by introducing additional primitive actions, including object declaration and removal of declared objects. Example prompts for high-level LLM-based reasoning and low-level navigation toward known objects are provided in Appendix~\ref{appendix_g}. The original SayPlan framework was evaluated using GPT-3 and GPT-4~\cite{OpenAI_GPT}. To assess the performance of this framework across different LLMs under identical prompts, we evaluate three models: gpt-3.5-turbo, gpt-4, and gpt-5, denoted as SayPlan3, SayPlan4, and SayPlan5, respectively, in Table~\ref{sample-table21222}.Each method, except ours (20 times), is repeated 10 times because of the high cost of API usage. For perception, we adopt the same configuration as SGoLAM, selecting the three most distinctive orientations at each exploration point.

\paragraph{MoMa-LLM} This method~\cite{HonerkampHonerkamp} employs high-level LLM-based reasoning over scalable and dynamic scene graphs, combined with efficient low-level policies for interactive manipulation tasks. An example of the high-level reasoning prompt is provided in Appendix~\ref{appendix_g}. For navigation actions, MoMa-LLM uses the same low-level prompt structure as SayPlan, as shown in Appendix~\ref{appendix_g}. The original MoMa-LLM framework was evaluated using GPT-3 and GPT-4~\cite{OpenAI_GPT}. To assess its performance across different LLM backends under identical prompts, we evaluate three models, including gpt-3.5-turbo, gpt-4, and gpt-5, and denote them as MoMa3, MoMa4, and MoMa5, respectively, in Table~\ref{sample-table21222}. The original MoMa-LLM method does not add the geometry properties of the objects, like the poses and sizes of the objects, in the high-level reasoning prompt. So as to further improve the capacity of this framework in this task, we add these information and the revised prompts are shown in Appendix~\ref{appendix_g}. The additional results using gpt-5 are shown in Table~\ref{sample-table21222} and denote it as MoMa5\_g.
For perception, we adopt the same configuration as SGoLAM and SayPlan, selecting the three most distinctive orientations at each exploration point.




The comparison results are shown in Table~\ref{sample-table21222}. Because the discounted cumulative rewards do not make sense here, we ignore them in this table. We can find that our method shows the better performance than the others.



\begin{table*}
  \caption{Steps and successful rate (within 50 steps) for non-POMDP framework}
  \centering
  \label{sample-table21222}
  \small{
  \begin{tabular}{l|l|l|l|l}
    \cmidrule(lr){1-5}
    Scenarios&Loose1& Hidden1& Covered1&Complex1 \\
    \cmidrule(lr){1-5}
    Random& $ 45.0\pm  {7.4}~|~{30\%}$ &  $ 50.0\pm  {0.0}~|~{0\%}$  & $50.0\pm  {0.0}~|~{0\%}$ &  $ 50.0\pm  {0.0}~|~{0\%}$\\
    SGoLAM& $  21.8  \pm  { 8.9 }~|~{ 90\%}$    & $27.7 \pm  {12.4}~|~{80 \%}$ &  $  48.1  \pm  {  3.4 }~|~ { 20\%}$&  $50.0\pm  {0.0}~|~{0\%}$ \\
    NPF-G&  $  11.2 \pm  { 3.4 }~|~\bm{100\%}$   & ${14.4}\pm {8.1}~|~\bm{100\%}$ & $\bm{18.0}\pm \bm{5.7}~|~\bm{100\%}$ &  ${35.7}\pm {10.2}~|~{60\%}$\\
    Sayplan3&  $ 42.5\pm  {9.0}~|~{30\%}$ & $46.6\pm  {7.7}~|~{10\%}$  & $ 50.0\pm  {0.0}~|~{0\%}$ & $ 50.0\pm  {0.0}~|~{0\%}$ \\
    Sayplan4&  $ 21.0\pm  {7.3}~|~{90\%}$ &  $ 35.9\pm  {13.1}~|~{40\%}$ & $ 46.7\pm  {5.9}~|~{20\%}$ & $ 50.0\pm  {0.0}~|~{0\%}$ \\
    Sayplan5&  $  12.2\pm  { 3.5}~|~\bm{100\%}$   & $ 20.5  \pm  { 10.0}~|~{90\%}$ & $ 47.4\pm  {5.8}~|~{10\%}$ &  $ 50.0\pm  {0.0}~|~{0\%}$\\
    MoMa3&  $ 32.8\pm  {11.1}~|~{60\%}$   & $ 42.3\pm  {9.4}~|~{30\%}$ &  $ 47.8\pm  {5.0}~|~{10\%}$& $ 50.0\pm  {0.0}~|~{0\%}$ \\
    MoMa4&   $ 19.7\pm  {7.0}~|~\bm{100\%}$  & $46.8\pm  {5.4}~|~{20\%}$ & $ 45.5\pm  {5.6}~|~{30\%}$ & $ 50.0\pm  {0.0}~|~{0\%}$ \\
    MoMa5&    $10.7\pm  {2.7}~|~\bm{100\%}$   & $30.1\pm  {11.2}~|~{70\%}$ & $38.4\pm  {9.4}~|~{50\%}$ &  $48.2\pm  {4.1}~|~{10\%}$\\
    MoMa5\_g&   $ 10.6\pm  {1.8 }~|~\bm{100\%}$   & $27.7\pm  {10.4}~|~{80\%}$ & $32.0\pm  {9.9}~|~{70\%}$ & $49.0\pm  {2.3}~|~{10\%}$ \\
    \cmidrule(lr){1-5}
    Ours& $ \bm{6.7}\pm  \bm{0.8}~|~\bm{100\%}$  & $  \bm{8.3} \pm \bm{1.3}~|~\bm{100\%}$&${19.4}\pm {3.8}~|~\bm{100\%}$ & $\bm{33.8}\pm \bm{5.0}~|~\bm{95\%}$ \\
    \cmidrule(lr){1-5}
  \end{tabular}}
\end{table*}
Most of these methods are developed and evaluated in large-scale environments using simulators such as iGibson and Habitat. When applied to our single-room setting, they exhibit substantially degraded performance. This motivates a discussion of several common misconceptions in the literature, which we summarize as follows:

\begin{question}\label{Q5}
Is a single-room environment with clustered objects inherently easier for mobile robots than multi-room, multi-level environments with sparse objects? Are existing LLM-based methods currently superior to model-based approaches?
\end{question}

\begin{discussion}
Our answers to both questions are \textbf{no}, as demonstrated by the experimental results presented earlier. 

A key reason is that the field of view (FOV) of common mobile robots becomes severely limited when operating close to objects, making visual blind spots frequent and unavoidable. At the same time, operating at longer distances is unfavorable for reliable object matching and precise manipulation, necessitating close-range interactions. Many existing works adopt simplified assumptions, for example, treating the object search task as complete once the target object enters a predefined sensing range, which are often unrealistic in practice. Such approaches also fail to fully exploit changes in the robot’s configuration, such as head orientation or lift height, thereby limiting their effectiveness in realistic object search scenarios, particularly when objects are densely clustered.

Another contributing factor to the inferior performance of LLM-based methods in our setting is their limited capacity to faithfully represent physical constraints and spatial uncertainty. Natural language abstractions are inherently ambiguous and lack the precision required for fine-grained robotic execution. Importantly, object search in a cluttered single-room environment constitutes a physically grounded task involving perception, manipulation, and configuration changes, rather than a purely topological navigation problem commonly studied in large-scale, sparsely populated environments.

Furthermore, we observe that for LLM-based methods, it remains challenging to design prompts that guarantee not only the feasibility but also the optimality of both the problem formulation and the resulting solution. Most existing studies primarily focus on whether a task can be completed, with limited analysis of optimal decision-making under uncertainty. This limitation is closely related to the absence of an explicit internal world model in current LLMs, which hinders long-horizon action–outcome simulation.

Nevertheless, our results indicate a clear improvement in long-horizon planning performance from GPT-3.5 to GPT-5 (Table~\ref{sample-table21222}), suggesting that progress in this direction is ongoing. Overall, we believe that integrating model-based reasoning tailored to single-room physical interactions with learning-based global reasoning mechanisms represents a more promising research direction, and is likely to lead to more realistic and robust solutions for real-world object search tasks.
\end{discussion}

\subsection{Limitation} 

Our approach faces several limitations. First, reliance on a pre-existing map is impractical for real-world robots, but this can be mitigated with advanced SLAM techniques. Second, achieving a 100\% success rate for primitive actions in real-world scenarios is unrealistic, affecting overall performance. Additionally, our use of point cloud segmentation may generate incorrect bounding boxes for objects with large contact areas, leading to errors in data association and belief updates. Object detection methods like YOLO, SIFT, and color matching also face challenges in environments with limited features, low coverage, or poor lighting. But we believe these issues can be addressed through advances in perception, navigation, and execution, which are outside the scope of this paper.

\section{Conclusion and future work}
\label{s8}
This paper introduces GNPF-$k$CT, a novel POMDP-based framework and solver for object search tasks in a 3D environment using a mobile robot manipulator. The problem is modeled as an 8-tuple POMDP, focusing on FOV adjustments and belief updates for observed objects. By incorporating a guessed target object, our approach efficiently locates hidden objects by removing obstacles, a limitation in existing methods. Our framework combines a pre-trained NP network to select potential primitive actions from Gazebo simulation data with adaptive $k$-clustering hypersphere partitioning to refine the continuous action space for efficient MCTS exploration. This integrates short-term learning with a compact long-term POMDP framework for improved performance. To address challenges from growing state spaces, we reuse the previous belief tree and apply a revised UCB1 bound, incorporating hypersphere range and belief distance for efficient and robust decision-making. Experiments show that, our solver surpasses classical methods in cumulative reward, task steps, and success rate under identical computational constraints. Notably, our framework requires only maps, photos, and onboard sensors, eliminating the need for geometric models or additional object-specific data, and making it highly practical for real-world applications.

This paper lays the groundwork for autonomous object search. Future work will integrate advanced methods, such as LLMs~\cite{Zirui2024Large} to embed expert knowledge via prompts~\cite{Tang2025Task}, or world model~\cite{Zheng_Yupeng} to fast predict system in latent space, and enable autonomous map creation and workspace generation. More importantly, in experimental results, we find that one of the main limitations is the success rate of the complicated primitive actions. Even though we can consider these uncertainties in the planning framework, they will still cause the low success rate of the whole system with poor preventive implementation. The well-signed Vision-Language-Action (VLA) method, like RT-2~\cite{Zitkovich_B} and Octo~\cite{Team_Octo}, may be helpful in completing better primitive actions. These advancements aim to develop fully autonomous, end-to-end systems capable of identifying any target in unknown environments with a single robotic platform and onboard sensors.

\section*{Appendix}
\appendix
\section{Proof for Theorem~~\ref{t_add_new}}\label{a1}
We rewrite the formulation by $g'(\theta)\triangleq g(\bm{a},~\bm{s},~\alpha(\mathcal{P}))$ given fixed $\bm{s}$ and $\alpha(\mathcal{P})$. Based on $g'(\theta) \sim nn(\mu(\theta),~\sigma(\theta))$, with the fixed input $\theta_i$ and $i\leq 1$, we have the Gaussian distribution $g'(\theta_i)\sim \mathcal{N}(\mu(\theta_i),~\sigma(\theta_i))$. Let $z_i = \frac{g'(\theta_i) - \mu(\theta_i)}{\sigma(\theta_i)}\sim \mathcal{N}(0,1)$. For a Gaussian distribution with mean 0 and variance 1, we have:
\begin{equation}\label{smooth-constraint}
	\begin{aligned}
	&Pr(z_i > \eta_i)\\ & = \int_{\eta_i}^{+\infty} \frac{1}{\sqrt{2\pi}} \exp^{-z^2/2}dz\\
 & = \int_{\eta_i}^{+\infty} \frac{1}{\sqrt{2\pi}} \exp^{-(z-\eta_i)^2/2- z\eta_i +\eta_i^2/2}dz\\
 & =\exp^{-\eta_i^2/2} \int_{\eta_i}^{+\infty} \frac{1}{\sqrt{2\pi}} \exp^{-(z-\eta_i)^2/2- z\eta_i +\eta_i^2}dz\\
 & =\exp^{-\eta_i^2/2} \int_{\eta_i}^{+\infty} \frac{1}{\sqrt{2\pi}} \exp^{-(z-\eta_i)^2/2}\exp^{- z\eta_i +\eta_i^2}dz.
	\end{aligned}
\end{equation} 
were  $Pr(\star)$ is the probability of event $\star$. Because we can set $\eta_i> 0$ and $z_i > \eta_i$, we have: $- z\eta_i +\eta_i^2 <0$. So, we have:
\begin{equation}\label{smooth-constraint}
	\begin{aligned}
	&Pr(z_i > \eta_i) \leq \exp^{-\eta_i^2/2} \int_{\eta_i}^{+\infty} \frac{1}{\sqrt{2\pi}} \exp^{-(z-\eta_i)^2/2}dz\\& = \exp^{-\eta_i^2/2}/2.
	\end{aligned}
\end{equation} 

Introduce $z_i = \frac{g'(\theta_i) - \mu(\theta_i)}{\sigma(\theta_i)}$, we have:
\begin{equation}\label{smooth-constraint}
	\begin{aligned}
	&Pr(g'(\theta_i)-\mu(\theta_i)>\eta_i\sigma(\theta_i)) \leq \exp^{-\eta_i^2/2}/2,\\
& Pr(g'(\theta_i)-\mu(\theta_i)<-\eta_i\sigma(\theta_i)) \leq \exp^{-\eta_i^2/2}/2.
	\end{aligned}
\end{equation} 

We have:
\begin{equation}\label{smooth-constraint}
	\begin{aligned}
	&Pr(|g'(\theta_i)-\mu(\theta_i)|>\eta_i\sigma(\theta_i)) \leq  \exp^{-\eta_i^2/2}.
	\end{aligned}
\end{equation}

The complementary set of $|g'(\theta_i)-\mu(\theta_i)|>\eta_i\sigma(\theta_i)$ is $|g'(\theta_i)-\mu(\theta_i)|\leq\eta_i\sigma(\theta_i)$ and its corresponding probability is bigger than $1-\exp^{-\eta_i^2/2}$. Let $\delta =\exp^{-\eta_i^2/2} \Rightarrow \eta_i = \sqrt{-2\ln(\delta)}$, we have:

\begin{equation}\label{smooth-constraint}
	\begin{aligned}
	&Pr(|g'(\theta_i)-\mu(\theta_i)|\leq \eta_i\sigma(\theta_i)) \geq 1-\delta.
	\end{aligned}
\end{equation} 

We have the scoring function $g'(\theta_i)$ is bounded by $(\mu(\theta_i) - \sqrt{-2\ln(\delta)}\sigma(\theta_i),~\mu(\theta_i) + \sqrt{-2\ln(\delta)}\sigma(\theta_i))$ with a probability bigger than $1 - \delta$. So, if its lower bound $\mu(\theta_i) - \sqrt{-2\ln(\delta)}\sigma(\theta_i)> 0 \Leftrightarrow \mu(\theta_i) >\sqrt{-2\ln(\delta)}\sigma(\theta_i)$, the scoring function satisfying:
\begin{equation}\label{smooth-constraint}
	\begin{aligned}
	&Pr(g'(\theta_i)>0) > 1 - \delta.
	\end{aligned}
\end{equation}

\section{Network structure for scoring function}\label{ab2}
The structure of the used NPs model for the scoring function is shown in Fig.~\ref{fig:landscape}.

    \begin{figure*}[htb]
\rotatebox{90}{\includegraphics[width=7in]{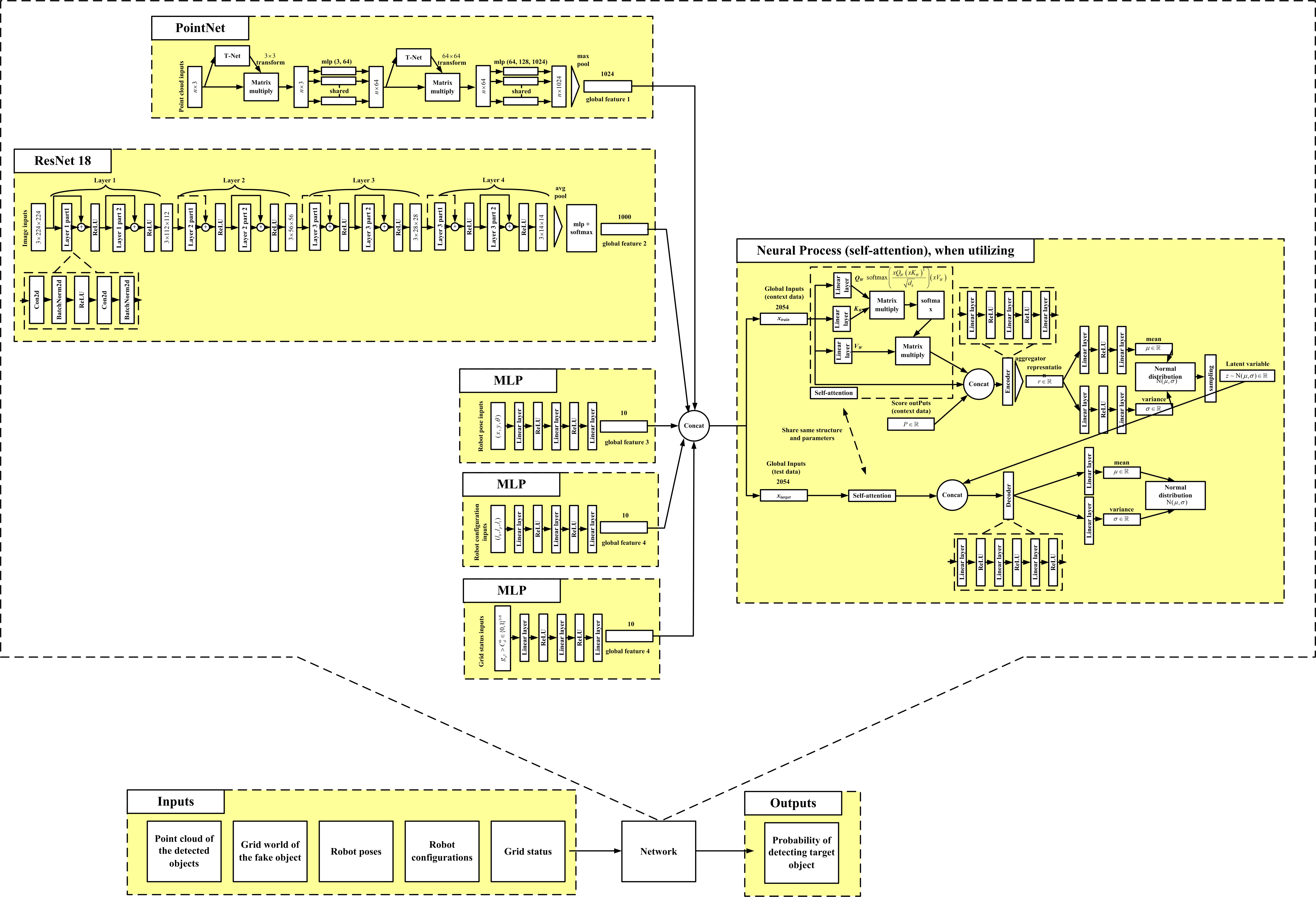}}
\caption{Architecture of the used network. }
\label{fig:landscape}
    \end{figure*}

\section{Proof for Theorem~~\ref{t_add_new1a}}\label{a3}
Let's consider one of the new extended beliefs $\bm{b}(\bm{s}_{new}': \bm{s}'\rightleftarrows \bm{s}_{add}')$. Based on the well-known $\alpha$-vector and $depth=1$, we have the optimal value of belief $\bm{b}(\bm{s}_{new}': \bm{s}'\rightleftarrows \bm{s}_{add}')$ can be written as:
\begin{equation}\label{bound}
	\begin{aligned}
&V^*(\bm{b}(\bm{s}_{new}': \bm{s}'\rightleftarrows \bm{s}_{add}'))\\
= &\max_{\bm{a}} Q^*(\bm{b}(\bm{s}_{new}': \bm{s}'\rightleftarrows \bm{s}_{add}'),~\bm{a})\\
=&\sum_{\bm{s}_{new}'\in S}
\alpha(\bm{s}_{new}')\bm{b}(\bm{s}_{new}': \bm{s}'\rightleftarrows \bm{s}_{add}'),
	\end{aligned}
\end{equation}
where $\alpha(\bm{s}_{new}')=R(\bm{s}'_{new},~\bm{a}^*) + \gamma V^*(\bm{s}'_{new},~\bm{b}(\bm{s}_{new}': \bm{s}'\rightleftarrows \bm{s}_{add}'),~\bm{a}^*)$ and $\bm{a}^*$ is the optimal action. It is noted that the $\alpha$-vector $\alpha(\bm{s}_{new}')$ is bounded by $\frac{R_{max}}{1-\gamma}$, introducing the particle representation for the belief $\bm{b}(\bm{s}_{new}': \bm{s}'\rightleftarrows \bm{s}_{add}')=\frac{1}{N}\sum^N_{i=1}\mathbb{I}(\bm{s}_{new}'={\bm{s}_{new}^i}': \bm{s}'\rightleftarrows \bm{s}_{add}')$, so we have:
\begin{equation}\label{bound}
	\begin{aligned}
&\|V^*(\bm{b}(\bm{s}_{new}': \bm{s}'\rightleftarrows \bm{s}_{add}'))-V^*(\bm{b}(\bm{s}_{new}': \bm{s}'\rightarrow \\&\bm{s}_{add}'))\|_1\\
&= \|\frac{1}{N}\sum^N_{i=1}\alpha(\bm{s}_{new}') \mathbb{I}(\bm{s}_{new}'={\bm{s}_{new}^i}': \bm{s}'\rightleftarrows \bm{s}_{add}')\\
&-\frac{1}{N}\sum^N_{i=1}\alpha(\bm{s}_{new}') \mathbb{I}(\bm{s}_{new}'={\bm{s}_{new}^i}': \bm{s}'\rightarrow \bm{s}_{add}')\|_1\\
&\leq \|\frac{1}{N}\sum^N_{i=1}\frac{R_{max}}{1-\gamma} \mathbb{I}(\bm{s}_{new}'={\bm{s}_{new}^i}': \bm{s}'\rightleftarrows \bm{s}_{add}')\\ 
&-\frac{1}{N}\sum^N_{i=1}\frac{R_{max}}{1-\gamma} \mathbb{I}(\bm{s}_{new}'={\bm{s}_{new}^i}': \bm{s}'\rightarrow \bm{s}_{add}')\|_1
	\end{aligned}
\end{equation}

\begin{equation}\nonumber
	\begin{aligned}
&=\frac{R_{max}}{1-\gamma}\|\bm{b}(\bm{s}_{new}': \bm{s}'\rightleftarrows \bm{s}_{add}')-\bm{b}(\bm{s}_{new}': \bm{s}'\rightarrow \\&\bm{s}_{add}')\|_1\\
& \leq \frac{R_{max}}{1-\gamma} \delta.
	\end{aligned}
\end{equation}

Based on Eq.~\eqref{smooth-1} and the optimal action $\bm{a}^*$, we have:

\begin{equation}\nonumber
\begin{aligned}
&\|V^*(\bm{b}(\bm{s}_{new})) - \hat{V}^*(\bm{b}(\bm{s}_{new}))\|_1\\
&=\| \sum_{\bm{s}_{new}\in S} R(\bm{s}_{new}, \bm{a}^*, \bm{s}_{new}') \bm{b}(\bm{s}_{new})+  \gamma\sum_{o‘\in O}
\end{aligned}
\end{equation}

\begin{equation}\nonumber
\begin{aligned}
&\sum_{\bm{s}_{new}'\in S}Z(\bm{s}_{new}', \bm{a}^*, \bm{o}')T(\bm{s}_{new}, \bm{a}^*, \bm{s}_{new}')V^*(\bm{b}(\bm{s}_{new}'\\
& :\bm{s}'\rightleftarrows \bm{s}_{add}'))-\sum_{\bm{s}_{new}\in S} R(\bm{s}_{new}, \bm{a}^*, \bm{s}_{new}') \bm{b}(\bm{s}_{new})\\& - \gamma\sum_{o’\in O}\sum_{\bm{s}_{new}'\in S}Z(\bm{s}_{new}', \bm{a}^*, \bm{o}')T(\bm{s}_{new}, \bm{a}^*, \bm{s}_{new}')\\
&V^*(\bm{b}(\bm{s}_{new}': \bm{s}'\rightarrow \bm{s}_{add}'))\|_1\\
&=\| \gamma\sum_{o‘\in O}\sum_{\bm{s}_{new}'\in S}Z(\bm{s}_{new}', \bm{a}^*, \bm{o}')T(\bm{s}_{new}, \bm{a}^*, \bm{s}_{new}')\\&V^*(\bm{b}(\bm{s}_{new}': \bm{s}'\rightleftarrows \bm{s}_{add}'))- \gamma\sum_{o’\in O}\sum_{\bm{s}_{new}'\in S}Z(\bm{s}_{new}', \bm{a}^*\\
&, \bm{o}')
\end{aligned}
\end{equation}

\begin{equation}\label{EQ_28}
\begin{aligned}
&\left.T(\bm{s}_{new}, \bm{a}^*, \bm{s}_{new}')V^*(\bm{b}(\bm{s}_{new}': \bm{s}'\rightarrow \bm{s}_{add}'))\right\|_1\\
&\leq  \gamma\sum_{o‘\in O}\sum_{\bm{s}_{new}'\in S}Z(\bm{s}_{new}', \bm{a}^*, \bm{o}')T(\bm{s}_{new}, \bm{a}^*, \bm{s}_{new}')\\
&\frac{R_{max}}{1-\gamma} \delta\\
&\leq\frac{\gamma R_{max}}{1-\gamma} \delta.
\end{aligned}
\end{equation}

The proof is completed.

\section{Proof for Corollary~~\ref{t_add_new3}}\label{a4}
Let's consider one of the approximated belief nodes in the $depth$-1-th and $depth$-th layers with belief $\bm{b}(\bm{s}_{new}^{(depth-1)}: \bm{s}^{(depth-1)}\rightarrow \bm{s}_{add}^{(depth-1)})$ and $\bm{b}(\bm{s}_{new}^{(depth)}: \bm{s}^{(depth)}\rightarrow \bm{s}_{add}^{(depth)})$. Based on Eq.~\ref{EQ_28}, compared between the approximate $\bm{b}(\bm{s}_{new}^{(depth)}: \bm{s}^{(depth)}\rightarrow \bm{s}_{add}^{(depth)})$ and the correct belief $\bm{b}(\bm{s}_{new}^{(depth)}: \bm{s}^{(depth)}\rightleftarrows \bm{s}_{add}^{(depth)})$, its optimal value will satisfy:

\begin{equation}\label{boundxx}
	\begin{aligned}
&\|V^*(\bm{b}(\bm{s}_{new}^{(depth)}: \bm{s}^{(depth)}\rightleftarrows \bm{s}_{add}^{(depth)}))-\\&V^*(\bm{b}(\bm{s}_{new}^{(depth)}: \bm{s}^{(depth)}\rightarrow \bm{s}_{add}^{(depth)}))\|_1
\leq  \frac{ R_{max}}{1-\gamma} \delta.
	\end{aligned}
\end{equation}

Based on Eq.~\eqref{smooth-1} and the optimal action $\bm{a}^*$ and the received observation $\bm{o}'$, we have:

\begin{equation}\label{EQ_28}
\begin{aligned}
&\|V^*(\bm{b}(\bm{s}^{(depth-1)}_{new})) - \hat{V}^*(\bm{b}(\bm{s}^{(depth-1)}_{new}))\|_1\\
&=\| \sum_{\bm{s}^{(depth-1)}_{new}\in S} R(\bm{s}^{(depth)}_{new}, \bm{a}^*, \bm{s}^{(depth-1)}_{new})\\& \bm{b}(\bm{s}^{(depth-1)}_{new}: \bm{s}^{(depth-1)}\rightleftarrows \bm{s}_{add}^{(depth-1)})\\&+  \gamma\sum_{o’\in O}\sum_{\bm{s}_{new}^{(depth)}\in S}Z(\bm{s}_{new}^{(depth)}, \bm{a}^*, \bm{o}')T(\bm{s}_{new}^{(depth-1)},\\&\bm{a}^*, \bm{s}_{new}^{(depth)})V^*(\bm{b}(\bm{s}_{new}^{(depth)}: \bm{s}^{(depth)}\rightleftarrows \bm{s}_{add}^{(depth)}))\\
&-\sum_{\bm{s}^{(depth-1)}_{new}\in S} R(\bm{s}^{(depth)}_{new}, \bm{a}^*, \bm{s}^{(depth-1)}_{new})\\& \bm{b}(\bm{s}^{(depth-1)}_{new}: \bm{s}^{(depth-1)}\rightarrow \bm{s}_{add}^{(depth-1)})\\&-  \gamma\sum_{o’\in O}\sum_{\bm{s}_{new}^{(depth)}\in S}Z(\bm{s}_{new}^{(depth)}, \bm{a}^*, \bm{o}')T(\bm{s}_{new}^{(depth-1)},\\&\bm{a}^*, \bm{s}_{new}^{(depth)})V^*(\bm{b}(\bm{s}_{new}^{(depth)}: \bm{s}^{(depth)}\rightarrow \bm{s}_{add}^{(depth)}))\|_1\\
&\leq \| \sum_{\bm{s}^{(depth-1)}_{new}\in S} R(\bm{s}^{(depth)}_{new}, \bm{a}^*, \bm{s}^{(depth-1)}_{new})\\& \bm{b}(\bm{s}^{(depth-1)}_{new}: \bm{s}^{(depth-1)}\rightleftarrows \bm{s}_{add}^{(depth-1)})\\
&-\sum_{\bm{s}^{(depth-1)}_{new}\in S} R(\bm{s}^{(depth)}_{new}, \bm{a}^*, \bm{s}^{(depth-1)}_{new})\\& \bm{b}(\bm{s}^{(depth-1)}_{new}: \bm{s}^{(depth-1)}\rightarrow \bm{s}_{add}^{(depth-1)})\|_1+
\end{aligned}
\end{equation}

\begin{equation}\nonumber
\begin{aligned}
&\| \gamma\sum_{o’\in O}\sum_{\bm{s}_{new}^{(depth)}\in S}Z(\bm{s}_{new}^{(depth)}, \bm{a}^*, \bm{o}')T(\bm{s}_{new}^{(depth-1)},\\&\bm{a}^*, \bm{s}_{new}^{(depth)})V^*(\bm{b}(\bm{s}_{new}^{(depth)}: \bm{s}^{(depth)}\rightleftarrows \bm{s}_{add}^{(depth)}))\\
&-  \gamma\sum_{o’\in O}\sum_{\bm{s}_{new}^{(depth)}\in S}Z(\bm{s}_{new}^{(depth)}, \bm{a}^*, \bm{o}')T(\bm{s}_{new}^{(depth-1)},\\&\bm{a}^*, \bm{s}_{new}^{(depth)})V^*(\bm{b}(\bm{s}_{new}^{(depth)}: \bm{s}^{(depth)}\rightarrow \bm{s}_{add}^{(depth)}))\|_1\\
&\leq   R_{max} \| \sum_{\bm{s}^{(depth-1)}_{new}\in S}\bm{b}(\bm{s}^{(depth-1)}_{new}: \bm{s}^{(depth-1)}\rightleftarrows \\
&\bm{s}_{add}^{(depth-1)})-  \sum_{\bm{s}^{(depth-1)}_{new}\in S}\bm{b}(\bm{s}^{(depth-1)}_{new}: \bm{s}^{(depth-1)}\\&\rightarrow \bm{s}_{add}^{(depth-1)})\|_1+\gamma\sum_{o’\in O}\sum_{\bm{s}_{new}^{(depth)}\in S}Z(\bm{s}_{new}^{(depth)}, \bm{a}^*,\\
& \bm{o}') T(\bm{s}_{new}^{(depth-1)},\bm{a}^*, \bm{s}_{new}^{(depth)})\|V^*(\bm{b}(\bm{s}_{new}^{(depth)}: \bm{s}^{(depth)}\\& \rightleftarrows\bm{s}_{add}^{(depth)}))-  V^*(\bm{b}(\bm{s}_{new}^{(depth)}: \bm{s}^{(depth)}\rightarrow \\&\bm{s}_{add}^{(depth)}))\|_1\\
&\leq   R_{max}\delta+ \gamma\sum_{o’\in O}\sum_{\bm{s}_{new}^{(depth)}\in S}Z(\bm{s}_{new}^{(depth)}, \bm{a}^*, \bm{o}')\\&T(\bm{s}_{new}^{(depth-1)},~\bm{a}^*, \bm{s}_{new}^{(depth)})\frac{ R_{max}}{1-\gamma} \delta\\
&\leq R_{max}\delta + \gamma \frac{ R_{max}}{1-\gamma} \delta\\
&=\frac{ R_{max}}{1-\gamma} \delta.
\end{aligned}
\end{equation}
This formulation is the same as the one in~\eqref{boundxx} in format. The $\delta$ is the upper bound of the belief distances of different belief nodes. Following a similar recurrence process until the first layer, we can get:

\begin{equation}\label{EQ_28}
\begin{aligned}
&\|V^*(\bm{b}(\bm{s}_{new})) - \hat{V}^*(\bm{b}(\bm{s}_{new}))\|_1\\&=\| \sum_{\bm{s}_{new}\in S} R(\bm{s}_{new}, \bm{a}^*, \bm{s}_{new}') \bm{b}(\bm{s}_{new})+  \gamma\sum_{o‘\in O}
\\&\sum_{\bm{s}_{new}'\in S}Z(\bm{s}_{new}', \bm{a}^*, \bm{o}')T(\bm{s}_{new}, \bm{a}^*, \bm{s}_{new}')V^*(\bm{b}(\bm{s}_{new}'\\
&: \bm{s}'\rightleftarrows \bm{s}_{add}'))-\sum_{\bm{s}_{new}\in S} R(\bm{s}_{new}, \bm{a}^*, \bm{s}_{new}') \bm{b}(\bm{s}_{new}) \\
&- \gamma\sum_{o’\in O}\sum_{\bm{s}_{new}'\in S}Z(\bm{s}_{new}', \bm{a}^*, \bm{o}')T(\bm{s}_{new}, \bm{a}^*, \bm{s}_{new}')V^*\\
&(\bm{b}(\bm{s}_{new}': \bm{s}'\rightarrow \bm{s}_{add}'))\|_1\\
&=\|  \gamma\sum_{o‘\in O}\sum_{\bm{s}_{new}'\in S}Z(\bm{s}_{new}', \bm{a}^*, \bm{o}')T(\bm{s}_{new}, \bm{a}^*, \bm{s}_{new}')\\&V^*(\bm{b}(\bm{s}_{new}': \bm{s}'\rightleftarrows \bm{s}_{add}'))- \gamma\sum_{o’\in O}\sum_{\bm{s}_{new}'\in S}Z(\bm{s}_{new}', \bm{a}^*\\
&, \bm{o}')T(\bm{s}_{new}, \bm{a}^*, \bm{s}_{new}')V^*(\bm{b}(\bm{s}_{new}': \bm{s}'\rightarrow \bm{s}_{add}'))\|_1
\end{aligned}
\end{equation}

\begin{equation}\nonumber
\begin{aligned}
&\leq  \gamma\sum_{o‘\in O}\sum_{\bm{s}_{new}'\in S}Z(\bm{s}_{new}', \bm{a}^*, \bm{o}')T(\bm{s}_{new}, \bm{a}^*, \bm{s}_{new}')\\
&\frac{R_{max}}{1-\gamma} \delta\\
&\leq\frac{\gamma R_{max}}{1-\gamma} \delta.
\end{aligned}
\end{equation}
The proof is completed.

\section{Proof for Corollary~~\ref{t_add_new3z}}\label{a5}

\begin{proof}
Based on Corollary~\ref{t_add_new2s}, we have:
\begin{equation}\label{EQ_281}
\begin{aligned}
&\hat{V}(\bm{b}, \bm{a}_{sub}^*)\leq {V}(\bm{b}, \bm{a}_{sub}^*) + \frac{\gamma R_{max}}{1-\gamma} \delta
\end{aligned}
\end{equation}

and 

\begin{equation}\label{EQ_282}
\begin{aligned}
&{V}(\bm{b}, \bm{a}^*)-\frac{\gamma R_{max}}{1-\gamma} \delta \leq \hat{V}(\bm{b}, \bm{a}^*).
\end{aligned}
\end{equation}

The solution does not change, which means that the approximated value function using sub-optimal action $\bm{a}_{sub}$ is smaller than the one with optimal action, satisfying  $\hat{V}(\bm{b}, \bm{a}^*)> \hat{V}(\bm{b}, \bm{a}_{sub}^*)$. Here, considering Eq.~\eqref{EQ_281} and Eq.~\eqref{EQ_282}, if the upper bound of $\hat{V}(\bm{b}, \bm{a}_{sub}^*)$ is smaller than the lower bound of $\hat{V}(\bm{b}, \bm{a}^*)$, satisfying:

\begin{equation}\label{EQ_2821}
\begin{aligned}
&{V}(\bm{b}, \bm{a}^*) - {V}(\bm{b}, \bm{a}_{sub}^*) > \frac{2\gamma R_{max}}{1-\gamma} \delta \\
&\Rightarrow {V}(\bm{b}, \bm{a}_{sub}^*) + \frac{\gamma R_{max}}{1-\gamma} \delta < {V}(\bm{b}, \bm{a}^*)-\frac{\gamma R_{max}}{1-\gamma} \delta.
\end{aligned}
\end{equation}
We have:
\begin{equation}\label{EQ_2811}
\begin{aligned}
&\hat{V}(\bm{b}, \bm{a}_{sub}^*)< \hat{V}(\bm{b}, \bm{a}^*).
\end{aligned}
\end{equation}

The proof is completed.
\end{proof}

\section{Parameter selection}
Our solver relies on 8 parameters, including the refining clustering number $k$, coefficients $\omega_1$, $\omega_2$, and $\omega_3$ for the MCTS action selection strategy, a self-defined exploration constant $C_r$, the minimum radius for partitioning $D_{lim}$, and two coefficients controlling the refining velocity, with $0 \leq \bar{\omega}_2 < \bar{\omega}_1 \leq 1$. Most of these parameters were not fine-tuned for optimal performance; instead, they were quickly identified or chosen intuitively. Here, we would like to add some explanations to help users quickly determine the parameters within a short time, like 20 minutes. Let’s go through each parameter: 

\textbf{Refining Clustering Number} $k$: 

Fig.~\ref{fig:side11} shows that $k=3, 4$, and 5 result in similar performance, indicating robustness and flexibility in selecting this parameter. For general POMDP problems, I recommend using the default value $k=3$ (chosen arbitrarily before parameter experiments) or the optimal clustering value, obtained during the list initialization update before MCTS, which $k_i=4$ in our paper. If the value $k_i=4$ obtained in the list initial update before MCTS is used, this parameter $k$ becomes non-heuristic and consistently achieves good performance.

\textbf{Coefficients} $\omega_1$, $\omega_2$, and $\omega_3$: 

These coefficients are straightforward to select. Without specific domain knowledge, I use the common value $\omega_1=\sqrt{2}$, derived theoretically from the multi-armed bandit problem based on Hoeffding's inequality. For $\omega_2$ and $\omega_3$, ensure that $\omega_2range_i$ and $\omega_3 BD$ are comparable to another two terms: $\hat Q (\bm{node}_o, \bm{a})$ and $\omega_1 \sqrt{\frac{\ln N (\bm{node}_o)}{N (\bm{node}_o,~\bm{a})}}$. Because adding or removing the $BD$ term does not make too large differences and the final result is similar, $\omega_3$ can be set smaller than $\omega_2$.

\textbf{Self-Defined Exploration Constant} $C_r$:  

To determine $C_r$, users can follow this simple process: 

\begin{itemize}
     \item Run the POMDP problem for one step within the time limit and identify the number of their commonly used particles, like $N_p=150$. 

     \item Estimate the mean partitioning radius based on the problem setting, like $mean(range^*)$ is about 0.5. 

     \item The selection of $C_r$ is to make sure that $1/(C_rmean(range^*)^2)$ is about 30\%-50\% of the particle number $N_p$.  This ensures partitioning refines the continuous action domain at least 3-4 times. 

     \item  We have the selection $C_r$ is set as $1/((0.3~\text{to}~0.5)N_pmean(range^*)^2)$. 
\end{itemize}

\textbf{Coefficients for Refining Velocity} $\bar{\omega}_2$ and $\bar{\omega}_1$:

The coefficients $\bar{\omega}_2 =0.3$ and $\bar{\omega}_1=0.6$ were chosen arbitrarily, without extensive consideration. Other similar settings should also work well. 

\textbf{Minimum Radius for Partitioning} $D_{lim}$:

In this paper, $D_{lim}$ is set to 0.2 without significant adjustments. This value covers a small region in $\mathbb{R}^9$, including 6D base pose, $p$, lift height $l_h$, and the pan and tilt angles $l_p$ and $l_t$ of the robot’s head, as demonstrated with the Fetch robot. For example, two configurations with a distance of 0.2 might differ by 0.05 meters in $x$, $y$ axis, 0.1 radians (about 5.7 degrees) in orientation, 0.05 meters in $l_h$, 0. 1 radians in $l_p$, 0. 1 radians in $l_t$. The resulting Euclidean distance is $\sqrt{(0.05)^2\cdot2+(0.1)^2\cdot3+ (0.05)^2}= 0.194$, which are within this small range. For other POMDP problems, users can adjust $D_{lim}$ to ensure actions within this range have similar physical meanings with acceptable differences.  

\section{LLM prompts example for Sayplan}\label{appendix_g}
The high-level LLM prompts example for Sayplan is shown in this section.
\begin{figure*}[!ht]
\centering
\begin{tcolorbox}[
  enhanced,
  sharp corners,
  colback=white,
  colframe=gray!50,
  drop shadow southeast,
  fontupper=\ttfamily\small,
  boxrule=0.5pt,
  arc=2mm,
  left=5mm,
  right=5mm,
  top=3mm,
  bottom=3mm,
  title={\textbf{LLM prompts example for Sayplan}},
  coltitle=black
]

\textbf{Content:} You are a Fetch robot in an unexplored house. Your task is to find and remove a blue snack box, its corresponding property is labelled as target$\_$object, based on your head camera. 

You have the coordinates of the four vertices of each workspace, and all objects are limited to these known workspaces:

Workspace 1: (5.7, 0.61), (5.66, 1.29), (7.5, 1.29), (7.5, 0.61).

You know the move ranges of your pan (radian), tilt (radian), and lift (meter) joints are: (-0.261666667, 0.261666667), (0.0, 0.5), (0.0, 0.4)

\vspace{1em}
\textbf{You have the following actions available that you can use to achieve this task:}
\begin{enumerate}\itemsep 0pt
  \item \texttt{navigate(room\_name, object\_name)}: navigate to a configuration (pose and joint angles) that can observe this object in this room.
  \item \texttt{claim\_to\_obstacle(room\_name, object\_name)}: claim this object is the obstacle object, when the object has a high possibility to be the obstacle object.
  \item \texttt{claim\_to\_target(room\_name, object\_name)}: claim this object is the target object, when the object has a high possibility to be the obstacle object.
  \item \texttt{remove(room\_name, object\_name)}: remove this target or obstacle object to complete task or free some field of view for better exploration.
  \item \texttt{explore(room\_name)}: randomly select a frontier to explore the unknown space near one of the rooms that is not fully explored yet.
  \item \texttt{done()}: call when the task (target object is removed) is completed or if you are unable to take any further actions.
\end{enumerate}
\vspace{0.5em}
\textbf{Output Response Format:}

\textbf{Analysis:} describe where you could find the objects of interest and what actions you need to execute to get there.

\textbf{Reasoning:} justify why the next action is important to solve the task.

\textbf{Command:} function call', 'role': 'system', \{'content': 'Scene Graph: \{nodes: [\{pos: (8.383142272116066, 0.3976500981658697, 0), pos\_map: (131.756, 365.228), room\_id: 0, node\_type: <NODETYPE.ROOM: 2>, frontier\_points: set(), id: room-0\}, \{bbox: array([0.32000007, 0.26699999, 0.24999995]), semantic\_class\_name: obstacle\_object, pos: (7.150000083170853, 0.9799999103916819), orn: -2.7124770079457416, name: object\_1, states: not\_move\_able, viewpoint: array([6.802738406863281, -0.4402219209680013, 0.0, 0.0, 0.0,
       0.5381722605035929, 0.8425332814648375, 0.0, 0.0, 0.0],
      dtype=object), room\_id: 0, pos\_map: (143, 340, 0), closest\_vor\_node: (134.0, 330.0), node\_type: <NODETYPE.OBJECT: 3>, pos\_in\_world: (7.12174225331974, 0.9598501065433025), closest\_vor\_node\_in\_world: (6.621742245869159, 0.50985009983778), id: object\_1\}, \{bbox: array([0.26000006, 0.14, 0.31999994]), semantic\_class\_name: updating\_object(reach\_obstacle\_index), pos: (6.550000178538284, 1.0499998436344797), orn: 3.1415926469292366, name: object\_2, states: not\_move\_able, viewpoint: array([6.811979331502813, -0.4331814501971404, 0.0, 0.0, 0.0,
       -0.2516452028034676, 0.9676042215118779, 0.0, 0.0, 0.0],
      dtype=object), room\_id: 0, pos\_map: (144, 328, 0), closest\_vor\_node: (133.5, 339.5), node\_type: <NODETYPE.OBJECT: 3>, pos\_in\_world: (6.521742244379043, 1.0098501072883606), closest\_vor\_node\_in\_world: (7.096742252947211, 0.4848500994652509), id: object\_2\}, \{bbox: array([0.09000008, 0.12, 0.11999994]), semantic\_class\_name: updating\_object(reach\_obstacle\_index), pos: (5.900000083170853, 0.8599999056233103), orn: -0.48065383129846045, name: object\_3, states: move\_able, viewpoint: array([4.561832653037297, 0.8680199317907509, 0.0, 0.0, 0.0,
       -0.9903361548528572, 0.13868778025208775, 0.0, 0.0, 0.0],
      dtype=object), room\_id: 0, pos\_map: (141, 315, 0), closest\_vor\_node: (141.0, 310.0), node\_type: <NODETYPE.OBJECT: 3>, pos\_in\_world: (5.871742234693288, 0.8598501050531864), closest\_vor\_node\_in\_world: (5.621742230967998, 0.8598501050531864), id: object\_3\}, \{bbox: array([0.13000007, 0.07, 0.14999995]), semantic\_class\_name: target\_object, pos: (7.0999998924359895, 0.6499998674763376), orn: -3.0692694745833773, name: object\_4, states: move\_able, viewpoint: array([6.802738406863281, -0.4402219209680013, 0.0, 0.0, 0.0,
       0.5381722605035929, 0.8425332814648375, 0.0, 0.0, 0.0],
      dtype=object), room\_id: 0, pos\_map: (136, 339, 0), closest\_vor\_node: (133.5, 339.5), node\_type: <NODETYPE.OBJECT: 3>, pos\_in\_world: (7.071742252574682, 0.6098501013278961), closest\_vor\_node\_in\_world: (7.096742252947211, 0.4848500994652509), id: object\_4\}, \{location: room-0, id: robot\}], links: [room-0 - object\_2, room-0 - object\_4, room-0 - object\_1, room-0 - object\_3, room-0 - robot]\},
\end{tcolorbox}
\end{figure*}

\begin{figure*}[t]
\centering
\begin{tcolorbox}[
  enhanced,
  sharp corners,
  colback=white,
  colframe=gray!50,
  drop shadow southeast,
  fontupper=\ttfamily\small,
  boxrule=0.5pt,
  arc=2mm,
  left=5mm,
  right=5mm,
  top=3mm,
  bottom=3mm,
]
\textbf{History:}
 explore(room-0) - failure, claim\_to\_target(room-0, object\_4) - success, navigate(room-0, object\_4) - success, claim\_to\_target(room-0, object\_4) - failure, explore(room-0) - failure.\vspace{0.5em}

What is the best next action to complete the task as efficiently as possible? I you don't think that the object can be found in a known room, prioritize opening doors over exploring a room.

\textbf{Remember:}
\begin{enumerate}\itemsep 0pt
  \item Respond with a function call
  \item You can only use the objects and rooms that you have already found. Object names have to match the description exactly.
  \item You can only explore the found rooms that contain unexplored areas. The explore action allows the robot to adjust its configuration, change its field of view, and make an initial observation of the object. This first observation is essential, as it provides the basis for updating the object and reaching its index.
  \item To observe objects from different directions first to get the belief update until reach obstacle or target index. If the updating object reaches the obstacle or target index, call the claim action first. If it is claimed as a target object (greatly welcome), directly call the remove action and then be done. Task completed and you do not need to navigate to it.
  \item If an object is likely to be the target object, it is good to navigate to this object and observe it from different directions for later claim action.
  \item The claim of both the target and obstacle objects is encouraged. After an object is claimed as the obstacle object, it is legal to remove it to free some field of view. However, the removal action often fails in real-world settings, so obstacle removal should be used cautiously and only attempted occasionally.
  \item If some actions are repeated several times, they may not be possible and you need to observe that object from more directions using explore action.
  \item Each object needs to be claimed as the target or obstacle object first and then it is legal to remove it. Directly remove is illegal.
  \item You can’t explore rooms that haven’t been found yet.
\end{enumerate}
\end{tcolorbox}
\end{figure*}

\begin{figure*}[t]
\centering
\begin{tcolorbox}[
  enhanced,
  sharp corners,
  colback=white,
  colframe=gray!50,
  drop shadow southeast,
  fontupper=\ttfamily\small,
  boxrule=0.5pt,
  arc=2mm,
  left=5mm,
  right=5mm,
  top=3mm,
  bottom=3mm,
  title={\textbf{LLM prompts example for low-level navigation action}},
  coltitle=black
]

\{\textbf{Content:} This is the response result in this step:

\vspace{0.5em}

**Analysis:** The target object "object\_3" has been successfully identified in room-0. The next best action would be to navigate to the updating object "object\_4" and observe it from different directions to update its belief until reaching the obstacle index.

\vspace{0.5em}

**Reasoning:** By navigating to the updating object "object\_4" and observing it from different directions, we can update its belief until reaching the obstacle index. This will allow us to determine if "object\_4" is the obstacle object that needs to be removed to complete the task efficiently.

\vspace{0.5em}

**Command:** navigate(room-0, object\_4)
\vspace{0.5em}
The object coordinate is (5.900000083170853, 0.8599999056233103).
The last robot configuration (pose, pan joint, tilt joint, lift joint) viewed this object is [4.991183852927918 1.2915176222424134] (coordinate), -0.3063040772599799 (orientation), 0.0 (pan angle: radian), 0.0 (tilt angle: radian), 0.0 (lift height: meter).
You know the move ranges of your pan (radian), tilt (radian), and lift (meter) joints are: (-0.261666667, 0.261666667), (0.0, 0.5), (0.0, 0.4)
', 'role': 'system'\}

\end{tcolorbox}
\end{figure*}

\begin{figure*}[t]
\centering
\begin{tcolorbox}[
  enhanced,
  sharp corners,
  colback=white,
  colframe=gray!50,
  drop shadow southeast,
  fontupper=\ttfamily\small,
  boxrule=0.5pt,
  arc=2mm,
  left=5mm,
  right=5mm,
  top=3mm,
  bottom=3mm,
  coltitle=black
]

\{\textbf{'Content':} 'Can you pick out a position from following coordinates and determine the orientation, pan joint, tilt joint, lift joint within given ranges for next pose to help robot to view the other side of the object?
(4.555492215079635, 0.34620009739920476), (5.6804422318426955, 2.4674501290082924), (4.621742216066837, 0.8598501050531864), (6.34674224177134, 2.4848501292675733), (4.999192221691279, 1.3299501120582216), (6.046742237300991, 2.507350129602849), (7.096742252947211, -0.01514990798532967), (7.221742254809856, -0.18179991046860788), (6.59674224549663, 2.384850127777457), (7.302942256019831, 2.459550128890574), (6.59674224549663, 2.449150128735602), (5.146742223889946, 0.584850100955367), (5.996742236555933, 2.5098501296401023), (8.39674227231872, 0.7348501031905412), (8.39674227231872, 1.0348501076608896), (8.39674227231872, 1.334850112131238), (4.830942219184161, 1.0848501084059476), (7.696742261887907, 2.37805012767613), (7.396742257417559, 2.453050128793717), (6.9967422514570945, 2.4848501292675733), (6.696742246986746, 2.4705501290544865), 

\vspace{1em}
\textbf{Remember:}
\begin{enumerate}\itemsep 0pt
  \item Respond with a function call.
  \item Make sure that the output vector only includes 6 numbers corresponding to the position (x axis), the position (y axis), orientation, pan joint, tilt joint, lift joint and divided by ;.
\end{enumerate}\}

\end{tcolorbox}
\end{figure*}

\section{LLM prompts example for MoMa-LLM}\label{appendix_g}
The high-level LLM prompts example for MoMa-LLM is shown in this section.
\begin{figure*}[t]
\centering
\begin{tcolorbox}[
  enhanced,
  sharp corners,
  colback=white,
  colframe=gray!50,
  drop shadow southeast,
  fontupper=\ttfamily\small,
  boxrule=0.5pt,
  arc=2mm,
  left=5mm,
  right=5mm,
  top=3mm,
  bottom=3mm,
  title={\textbf{LLM prompts example for MoMa-LLM}},
  coltitle=black
]

\textbf{Content:} You are a Fetch robot in an unexplored house. Your task is to find and remove a blue snack box, its corresponding property is labelled as target\_object, based on your head camera.

You have the coordinates of the four vertices of each workspace and all objects are limited in these known workspaces:

Workspace 1: (5.7, 0.61), (5.66, 1.29), (7.5, 1.29), (7.5, 0.61).

You know the move ranges of your pan (radian), tilt (radian), and lift (meter) joints are: (-0.261666667, 0.261666667), (0.0, 0.5), (0.0, 0.4)

\vspace{1em}
\textbf{You have the following actions available that you can use to achieve this task:}
\begin{enumerate}\itemsep 0pt
  \item \texttt{navigate(room\_name, object\_name)}: navigate to a configuration (pose and joint angles) that can observe this object in this room.
  \item \texttt{claim\_to\_obstacle(room\_name, object\_name)}: claim this object is the obstacle object, when the object has a high possibility to be the obstacle object.
  \item \texttt{claim\_to\_target(room\_name, object\_name)}: claim this object is the target object, when the object has a high possibility to be the obstacle object.
  \item \texttt{remove(room\_name, object\_name)}: remove this target or obstacle object to complete task or free some field of view for better exploration.
  \item \texttt{explore(room\_name)}: randomly select a frontier to explore the unknown space near one of the rooms that is not fully explored yet.
  \item \texttt{done()}: call when the task (target object is removed) is completed or if you are unable to take any further actions.
\end{enumerate}
\vspace{0.5em}
\textbf{Output Response Format:}

\textbf{Analysis:} describe where you could find the objects of interest and what actions you need to execute to get there.

\textbf{Reasoning:} justify why the next action is important to solve the task.
\end{tcolorbox}
\end{figure*}

\begin{figure*}[t]
\centering
\begin{tcolorbox}[
  enhanced,
  sharp corners,
  colback=white,
  colframe=gray!50,
  drop shadow southeast,
  fontupper=\ttfamily\small,
  boxrule=0.5pt,
  arc=2mm,
  left=5mm,
  right=5mm,
  top=3mm,
  bottom=3mm,
]
\textbf{Command:} function call

'role': 'user', \{'content': 'You are currently in the room-0. You are standing next to the following objects: [object\_2, object\_3, object\_4, object\_5, object\_6, object\_7]. Furthermore, you have found the following rooms, objects, and the corresponding properties of the objects in the house so far:

- The names of the objects in room-0: [object\_1, object\_2, object\_3, object\_4, object\_5, object\_6, object\_7].

- The corresponding properties of the objects in room-0: [obstacle\_object, obstacle\_object, obstacle\_object, obstacle\_object, updating\_object(reach\_obstacle\_index), obstacle\_object, updating\_object(reach\_target\_index)].

The only found rooms are [room-0] No other rooms are found.

Your 5 previous actions were: navigate(room-0, object\_5) - success, navigate(room-0, object\_7) - success, claim\_to\_target(room-0, object\_7) - failure, explore(room-0) - failure, navigate(room-0, object\_7) - success.

What is the best next action to complete the task as efficiently as possible? 
\textbf{Remember:}
\begin{enumerate}\itemsep 0pt
  \item Respond with a function call.

\item  You can only use the objects and rooms that you have already found. Object names have to match the description exactly.

\item You can only explore the found rooms that contain unexplored areas. The explore action allows the robot to adjust its configuration, change its field of view, and make an initial observation of the object. This first observation is essential, as it provides the basis for updating the object and reaching its index.

\item To observe objects from different directions first to get the belief update until reach obstacle or target index. If the updating object reaches the obstacle or target index, call the claim action first. If it is claimed as a target object (greatly welcome), directly call remove action and then done. Task completed and You do not need to navigate to it.

\item If an object is likely to be target object, it is good to navigate to this object and observe it from different directions for later claim action.

\item The claim of both the target and obstacle objects is encouraged. After an object is claimed as the obstacle object, it is legal to remove it to free some field of view. However, the removal action often fails in real-world settings, so obstacle removal should be used cautiously and only attempted occasionally.

\item If some actions are repeated several times, they may not be possible and you need to observe that object from more directions using the explore action.

\item Each object needs to be claimed as the target or obstacle object first and then it is legal to remove it. Directly removing is illegal.

\item You can’t explore rooms that haven’t been found yet.'\},
\end{enumerate}
\end{tcolorbox}
\end{figure*}

The high-level LLM prompts example for MoMa-LLM with more geometry properties is shown in this part.
\begin{figure*}[t]
\centering
\begin{tcolorbox}[
  enhanced,
  sharp corners,
  colback=white,
  colframe=gray!50,
  drop shadow southeast,
  fontupper=\ttfamily\small,
  boxrule=0.5pt,
  arc=2mm,
  left=5mm,
  right=5mm,
  top=3mm,
  bottom=3mm,
  title={\textbf{LLM prompts example for MoMa-LLM with more geometry properties}},
  coltitle=black
]
\textbf{Content:} You are a Fetch robot in an unexplored house. Your task is to find and remove a blue snack box, its corresponding property is labeled as target\_object based on your head camera.

You have the coordinates of the four vertices of each workspace and all objects are limited to these known workspaces:

Workspace 1: (5.7, 0.61), ((5.66, 1.29)), ((7.5, 1.29)), ((7.5, 0.61)).

You know the move ranges of your pan (radian), tilt (radian), and lift (meter) joints are: (-0.261666667, 0.261666667), (0.0, 0.5), (0.0, 0.4)

\textbf{You have the following actions available that you can use to achieve this task:}
\end{tcolorbox}
\end{figure*}

\begin{figure*}[t]
\centering
\begin{tcolorbox}[
  enhanced,
  sharp corners,
  colback=white,
  colframe=gray!50,
  drop shadow southeast,
  fontupper=\ttfamily\small,
  boxrule=0.5pt,
  arc=2mm,
  left=5mm,
  right=5mm,
  top=3mm,
  bottom=3mm,
]
\vspace{1em}
\begin{enumerate}\itemsep 0pt
  \item \texttt{navigate(room\_name, object\_name)}: navigate to a configuration (pose and joint angles) that can observe this object in this room.
  \item \texttt{claim\_to\_obstacle(room\_name, object\_name)}: claim this object is the obstacle object, when the object has a high possibility to be the obstacle object.
  \item \texttt{claim\_to\_target(room\_name, object\_name)}: claim this object is the target object, when the object has a high possibility to be the obstacle object.
  \item \texttt{remove(room\_name, object\_name)}: remove this target or obstacle object to complete task or free some field of view for better exploration.
  \item \texttt{explore(room\_name)}: randomly select a frontier to explore the unknown space near one of the rooms that is not fully explored yet.
  \item \texttt{done()}: call when the task (target object is removed) is completed or if you are unable to take any further actions.
\end{enumerate}
\vspace{0.5em}

\textbf{Output Response Format:}

\textbf{Analysis:} Describe where you could find the objects of interest and what actions you need to execute to get there.

\textbf{Reasoning:} justify why the next action is important for solving the task.

\textbf{Content:} 
'You are currently in the room-0. You are standing next to the following objects: [object\_2]. Furthermore, you have found the following rooms, objects, and the corresponding properties of the objects in the house so far:

- The names of the objects in room-0: [object\_2].

- The corresponding properties of the objects in room-0: [updating\_object

(likely\_to\_be\_obstacle\_object)].

- The corresponding poses of the objects (coordinate and quaternion, 7 elements each object) in room-0: [(5.900000083170853, 0.8599999056233103, 0.8800000636606228, 0.0, 0.0, -0.2380201103608644, 0.9712600145230841)].

- The corresponding sizes of the objects (longest horizontal axis is x axis, shorter horizontal axis is y axis, and vertical one is z axis, 3 elements each object) in room-0: [(0.09000007805983846, 0.12000000067840325, 0.11999994465407036)].

The only found rooms are [room-0] No other rooms are found.

Your 5 previous actions were: explore(room-0) - success, claim\_to\_obstacle(room-0, object\_2) - failure, explore(room-0) - success, explore(room-0) - success, explore(room-0) - success.\\

What is the best next action to complete the task as efficiently as possible? 

\vspace{0.5em}
\textbf{Remember:}
\begin{enumerate}\itemsep 0pt
  \item Respond with a function call.

\item  You can only use the objects and rooms that you have already found. Object names have to match the description exactly.

\item You can only explore the found rooms that contain unexplored areas. The explore action allows the robot to adjust its configuration, change its field of view, and make an initial observation of the object. This first observation is essential, as it provides the basis for updating the object and reaching its index.

\item To observe objects from different directions first to get the belief update until reach obstacle or target index. If the updating object reaches the obstacle or target index, call the claim action first. If it is claimed as a target object (greatly welcome), directly call remove action and then done. Task completed and You do not need to navigate to it.

\item If an object is likely to be target object, it is good to navigate to this object and observe it from different directions for later claim action.

\item The claim of both the target and obstacle objects is encouraged. After an object is claimed as the obstacle object, it is legal to remove it to free some field of view. However, the removal action often fails in real-world settings, so obstacle removal should be used cautiously and only attempted occasionally.

\item If some actions are repeated several times, they may not be possible and you need to observe that object from more directions using explore action.

\item Each object needs to be claimed as the target or obstacle object first and then it is legal to remove it. Directly remove is illegal.

\item You can’t explore rooms that haven’t been found yet.',
\end{enumerate}
\end{tcolorbox}
\end{figure*}

\begin{acks}
This research is supported by the ANU Futures Scheme. 
\end{acks}


\begin{thebibliography}{99}
\bibitem[(Auer et al., 2002)]{auer2002finite}
 Auer P, Cesa-Bianchi N, and Fischer P (2002) Finite-time analysis of the multiarmed bandit problem. \textit{Machine learning}. 47: 235–256.

\bibitem[(Aydemir et al., 2013)]{aydemir2013active}
Aydemir A, Pronobis A, Göbelbecker M, and Jensfelt P (2013) Active visual object search in unknown environments using uncertain semantics. \textit{IEEE Transactions on Robotics}, 29(4): 986-1002.

\bibitem[(Burks et al., 2023)]{burks2023harps}
Burks L, Ray HM, McGinley J, Vunnam S, and Ahmed N (2023) HARPS: An Online POMDP Framework for Human-Assisted Robotic Planning and Sensing. \textit{IEEE Transactions on Robotics}, 39(4): 3024-3042.

\bibitem[(Chen et al., 2020)]{Yongbo2020Active}
Chen Y, Huang S, and Fitch R. (2020) Active SLAM for mobile robots with area coverage and obstacle avoidance. \textit{IEEE/ASME Transactions on Mechatronics}, 25(3): 1182-1192.

\bibitem[(Chen et al., 2022)]{chen2022meta}
Chen R, Gao N, Vien NA, Ziesche H, and Neumann G (2022) Meta-learning regrasping strategies for physical-agnostic objects. Scaling robot learning workshop in ICRA, Philadelphia, PA, USA: IEEE.

\bibitem[(Chen et al., 2024a)]{chen2024pomdp}
Chen Y and Kurniawati H (2024a) POMDP planning for object search in partially unknown environment. In Advances in Neural Information Processing Systems, New Orleans, Louisiana, USA, 36.


\bibitem[(Chen et al., 2024b)]{anonymous2024npfkct}
Chen Y and Kurniawati H (2024b) NPF-$k$CT: A $k$-center clustering solver with neural process filter for continuous {POMDP}-based object search. \url{https://openreview.net/forum?id=WYsNjw02DE}.

\bibitem[(Chitta et al., 2017)]{Chitta2017ros}
Chitta S, Marder-Eppstein E, Meeussen W, Pradeep V, Tsouroukdissian AR, Bohren J, Coleman D, Magyar B, Raiola G, Lüdtke M, and Perdomo EF (2017) ros\_control: A generic and simple control framework for ROS. \textit{Journal of Open Source Software}, 2(20): 456-456.

\bibitem[(Danielczuk et al., 2003)]{R1}
Danielczuk M, Kurenkov A, Balakrishna A, Matl M, Wang D, Martín-Martín R, Garg A, Savarese S, and Goldberg K. (2019) Mechanical search: Multi-step retrieval of a target object occluded by clutter. In 2019 International Conference on Robotics and Automation (ICRA). Montreal, Canada: IEEE, pp. 1614-1621.

\bibitem[(Garrett et al., 2020)]{garrett2020online}
Garrett CR, Paxton C, Lozano-Pérez T, Kaelbling LP, and Fox D (2020). Online replanning in belief space for partially observable task and motion problems. In 2020 IEEE International Conference on Robotics and Automation (ICRA), Paris, France: IEEE, pp. 5678-5684.

\bibitem[(Garnelo et al., 2018)]{Garnelo2018Neural}
Garnelo M, Schwarz J, Rosenbaum D, Viola F, Rezende DJ, Eslami SM, and Teh YW (2018) Neural processes. arXiv preprint arXiv:1807.01622.


\bibitem[(Ge et al., 2024)]{ge2024commonsense}
Ge W, Tang C, and Zhang H (2024) Commonsense scene graph-based target localization for object search. arXiv preprint arXiv:2404.00343.

\bibitem[(Giuliari et al., 2023)]{giuliari2023leveraging}
Giuliari F, Skenderi G, Cristani M, Del Bue A, and Wang Y. (2023) Leveraging commonsense for object localisation in partial scenes. \textit{IEEE Transactions on Pattern Analysis and Machine Intelligence}, 45(10): 12038-12049.


\bibitem[(Hausknecht et al., 2015)]{hausknecht2015deep}
Hausknecht M and Stone P (2015) Deep recurrent q-learning for partially observable mdps. In 2015 AAAI Conference on Artificial Intelligence (AAAI) fall symposium series, Austin, Texas, USA.

\bibitem[(He et al., 2017)]{He2016Deep}
He K, Zhang X, Ren S, and Sun J (2016) Deep residual learning for image recognition. In Proceedings of the IEEE conference on computer vision and pattern recognition (CVPR), Las Vegas, NV, USA: IEEE, pp. 770-778.

\bibitem[(Hoerger et al., 2022)]{hoerger2022adaptive}
Hoerger M, Kurniawati H, Kroese D, and Ye N. (2022) Adaptive discretization using Voronoi trees for continuous-action POMDPs. In International Workshop on the Algorithmic Foundations of Robotics, University of Maryland, College Park, USA: Springer, pp. 170-187.

\bibitem[(Honerkamp et al., 2024)]{HonerkampHonerkamp}
Honerkamp D, Büchner M, Despinoy F, Welschehold T, Valada A. Language-grounded dynamic scene graphs for interactive object search with mobile manipulation. \textit{IEEE Robotics and Automation Letters}. 2024 Aug 9.



\bibitem[(Hou et al., 2021)]{Hou_belief}
Hou M, Lin TX, Zhou H, Zhang W, Edwards CR, and Zhang F (2021) Belief space partitioning for symbolic motion planning. In 2021 IEEE International Conference on Robotics and Automation (ICRA). Montreal, Canada: IEEE, pp. 8245-8251.

\bibitem[(Huang et al., 2021)]{huang2021mechanical}
Huang H, Dominguez-Kuhne M, Satish V, Danielczuk M, Sanders K, Ichnowski J, Lee A, Angelova A, Vanhoucke V, and Goldberg K (2021) Mechanical search on shelves using lateral access x-ray. In 2021 IEEE/RSJ International Conference on Intelligent Robots and Systems (IROS). Prague, Czech Republic: IEEE, pp. 2045-2052.

\bibitem[(Huang et al., 2022)]{huang2022mechanical}
Huang H, Fu L, Danielczuk M, Kim CM, Tam Z, Ichnowski J, Angelova A, Ichter B, and Goldberg K (2022) Mechanical search on shelves with efficient stacking and destacking of objects. \textit{The International Symposium of Robotics Research}: 205-221.


\bibitem[(Huang et al., 2024)]{huang2024out}
Huang Y, Yuan J, Kim C, Pradhan P, Chen B, Fuxin L, and Hermans T (2024) Out of sight, still in mind: Reasoning and planning about unobserved objects with video tracking enabled memory models. In 2024 IEEE International Conference on Robotics and Automation (ICRA), Yokohama, Japan: IEEE, pp. 3108-3115.


\bibitem[(Jocher et al., 2020)]{Jocher2020ultralytics}
Jocher G, Stoken A, Borovec J, Changyu L, Hogan A, Diaconu L, Ingham F, Poznanski J, Fang J, Yu L, and Wang M (2020) ultralytics/yolov5: v3. 1-bug fixes and performance improvements. Zenodo.

\bibitem[(Jain et al., 2025)]{Jain2025Learning}
Jain, A., Shaw, S. and Roy, N., 2025. Learning Attentive Neural Processes for Planning with Pushing Actions. arXiv preprint arXiv:2504.17924.

\bibitem[(Kim et al., 2021)]{KimJ}
Kim J, Lee E S, Lee M., Zhang D, and Kim Y M. Sgolam: Simultaneous goal localization and mapping for multi-object goal navigation. arXiv preprint arXiv:2110.07171, 2021.

\bibitem[(Kim et al., 2019)]{Kim2019Attentive}
Kim H, Mnih A, Schwarz J, Garnelo M, Eslami A, Rosenbaum D, Vinyals O and Teh YW (2019) Attentive neural processes. arXiv preprint arXiv:1901.05761.

\bibitem[(Kocsis et al., 2006)]{Kocsis2006Bandit}
Kocsis L and Szepesvári C (2006) Bandit based monte-carlo planning. In European conference on machine learning, Berlin, Heidelberg, Germany: Springer, pp. 282-293.

\bibitem[(Kurenkov et al., 2020)]{kurenkov2020visuomotor}
Kurenkov  A, Taglic J, Kulkarni R, Dominguez-Kuhne M, Garg A, Martín-Martín R, and Savarese S (2020) Visuomotor mechanical search: Learning to retrieve target objects in clutter. In 2020 IEEE/RSJ International Conference on Intelligent Robots and Systems (IROS). Las Vegas, USA: IEEE, pp. 8408-8414.

\bibitem[(Kurniawati et al., 2009)]{Kurniawati_2008} 
Kurniawati H, Hsu D, Lee WS (2009) SARSOP: Efficient point-based POMDP planning by approximating optimally reachable belief spaces. In Robotics: Science and Systems (RSS) IV, MIT Press, pp.65-72. 


\bibitem[(Kurniawati et al., 2016)]{kurniawati2016online}
Kurniawati H and Yadav V (2016) An online POMDP solver for uncertainty planning in dynamic environment. In Robotics Research: The 16th International Symposium (ISRR). Singapore: Springer, pp. 611-629.


\bibitem[(Labbé et al., 2019)]{labbe2019rtab}
Labbé M and Michaud F (2019) RTAB‐Map as an open‐source lidar and visual simultaneous localization and mapping library for large‐scale and long‐term online operation. \textit{Journal of Field Robotics}, 36(2): 416-446.

\bibitem[(Le et al., 2018)]{Le2018Empirical}
Le TA, Kim H, Garnelo M, Rosenbaum D, Schwarz J, and Teh YW (2018) Empirical evaluation of neural process objectives. In NeurIPS workshop on Bayesian Deep Learning, Montréal, Canada.

\bibitem[(Li et al., 2022)]{Li2022Category}
Li Y, Gao N, Ziesche H, and Neumann G
(2022) Category-agnostic 6d pose estimation
with conditional neural processes. arXiv preprint
arXiv:2206.07162.


\bibitem[(Lim et al., 2021)]{Michael2021Voronoi}
Lim MH, Tomlin CJ, and Sunberg ZN (2021) Voronoi progressive widening: efficient online solvers for continuous state, action, and observation POMDPs. In 2021 60th IEEE Conference on Decision and Control (CDC), Austin, Texas, USA: IEEE, pp. 4493-4500.




\bibitem[(Manceron et al., 2022)]{Manceron2017IKPy}
Manceron P, IKPy, Zenodo, 2022, doi: https://doi.org/10.5281/zenodo.6551158.

\bibitem[(Mern et al., 2021)]{mern2021bayesian}
Mern J, Yildiz A, Sunberg Z, Mukerji T, and Kochenderfer MJ (2021) Bayesian optimized Monte Carlo planning. In Proceedings of the AAAI Conference on Artificial Intelligence (AAAI). virtual, pp. 11880-11887.



\bibitem[(OpenAI, 2023)]{OpenAI_GPT}
OpenAI. "GPT-4 technical report." arXiv preprint arXiv:2303.08774, 2023.

\bibitem[(Pajarinen et al., 2019)]{pajarinen2022pomdp}
Pajarinen J, Lundell J, and Kyrki V (2022) POMDP planning under object composition uncertainty: Application to robotic manipulation. \textit{IEEE Transactions on Robotics}, 39(1): 41-56.

\bibitem[(Papadimitriou et al., 1987)]{papadimitriou1987complexity}
Papadimitriou CH and Tsitsiklis JN (1987) The complexity of Markov decision processes. \textit{Mathematics of operations research}, 12(3): 441-450.

\bibitem[(Qi et al., 2017)]{Qi2017Pointnet}
Qi CR, Su H, Mo K, and Guibas LJ (2017) Pointnet: Deep learning on point sets for 3d classification and segmentation. In Proceedings of the IEEE Conference on Computer Vision and Pattern Recognition (CVPR), Honolulu, Hawaii, USA: IEEE, pp. 652-660.

\bibitem[(Quigley et al., 2009)]{Quigley2009ROS}
Quigley M, Conley K, Gerkey B, Faust J, Foote T, Leibs J, Berger E,
Wheeler R, and Ng A (2009) ROS navigation stack. http://wiki.ros.org/navigation. 

\bibitem[(Qureshi et al., 2020)]{R5}
Qureshi AH, Dong J, Choe A, and Yip MC (2020) Neural manipulation planning on constraint manifolds. \textit{IEEE Robotics and Automation Letters}, 5(4): 6089-6096.


\bibitem[(Rana et al., 2023)]{Rana1953Sayplan}
Rana K, Haviland J, Garg S, Abou-Chakra J, Reid I, and Suenderhauf N, Sayplan: Grounding large language models using 3d scene graphs for scalable robot task planning. arXiv preprint arXiv:2307.06135, 2023.

\bibitem[(Rusu et al., 2011)]{Rusu20113d}
Rusu RB and Cousins S (2011) 3d is here: Point cloud library (pcl). In 2011 IEEE International Conference on Robotics and Automation (ICRA), Shanghai, China: IEEE, pp. 1-4.


\bibitem[(Schmalstieg et al., 2023)]{Schmalstieg}
Schmalstieg F, Honerkamp D, Welschehold T, Valada A. Learning hierarchical interactive multi-object search for mobile manipulation. \textit{IEEE Robotics and Automation Letters}. 2023 Nov 2;8(12):8549-56.


\bibitem[(Shaban et al., 2017)]{R3}
Shaban A, Bansal S, Liu Z, Essa I, and Boots B, 2017. One-shot learning for semantic segmentation. arXiv preprint arXiv:1709.03410.

\bibitem[(Shani et al., 2013)]{Shani_2013} Shani G, Pineau J, and Kaplow R (2013) A survey of point-based POMDP solvers. \textit{Autonomous Agents and Multi-Agent Systems}, 27: 1-51.

\bibitem[(Silver et al., 2010)]{silver2010monte} Silver D and Veness J (2010) Monte-Carlo planning in large POMDPs. In Advances in Neural Information Processing Systems (Neurips). Vancouver, British Columbia, Canada, 23.

\bibitem[(Sóti et al., 2023)]{R4}
Sóti G, Hein B, and Wurll C. (2023) Gradient-based grasp pose optimization on a nerf that approximates grasp success. In International Conference on Intelligent Autonomous Systems (IAS). Switzerland: Springer Nature, pp. 303-318.



\bibitem[(Somani et al., 2013)]{somani2013despot}
Somani A, Ye N, Hsu D, and Lee WS (2013) DESPOT: Online POMDP planning with regularization. In Advances in Neural Information Processing Systems (Neurips). Lake Tahoe, Nevada, United States, 26.

\bibitem[(Sucan et al., 2012)]{Sucan2012The}
Sucan IA, Moll M, and Kavraki LE (2012) The open motion planning library. \textit{IEEE Robotics \& Automation Magazine}, 19(4): 72-82.

\bibitem[(Sunberg et al., 2018)]{sunberg2018online}
Sunberg Z and Kochenderfer M (2018) Online algorithms for POMDPs with continuous state, action, and observation spaces. In Proceedings of the International Conference on Automated Planning and Scheduling (ICAPS). Delft, The Netherlands, pp. 259-263.

\bibitem[(Tang et al., 2025)]{Tang2025Task}
Tang, W., He, X., Huang, Y., Xiao, Y., Lu, C. and Cai, P., 2025. Tru-POMDP: Task Planning Under Uncertainty via Tree of Hypotheses and Open-Ended POMDPs. arXiv preprint arXiv:2506.02860.


\bibitem[(Ten et al., 2017)]{Pas2017Grasp}
Ten P. A, Gualtieri M, Saenko K, and Platt R (2017) Grasp pose detection in point clouds. \textit{The International Journal of Robotics Research}, 36(13-14): 1455-1473.



\bibitem[(Team et al., 2024)]{Team_Octo}
Team OM, Ghosh D, Walke H, Pertsch K, Black K, Mees O, Dasari S, Hejna J, Kreiman T, Xu C, Luo J. Octo: An open-source generalist robot policy. arXiv preprint arXiv:2405.12213. 2024 May 20.

\bibitem[(Thorndike et al., 1953)]{thorndike1953belongs}
Thorndike RL (1953) Who belongs in the family? Psychometrika, 18(4): 267-276.


\bibitem[(Wandzel et al., 2019)]{wandzel2019multi}
Wandzel A, Oh Y, Fishman M, Kumar N, Wong LL. and Tellex S (2019) Multi-object search using object-oriented pomdps. In 2019 International Conference on Robotics and Automation (ICRA). Montreal, Canada: IEEE, pp. 7194-7200.

\bibitem[(Yang et al., 2020)]{yang2020deep}
Yang Y, Liang H, and Choi C (2020) A deep learning approach to grasping the invisible. \textit{IEEE Robotics and Automation Letters}, 5(2): 2232-2239.


\bibitem[(Yildirim et al., 2022)]{yildirim2022learning}
Yildirim Y and Ugur E (2022) Learning social navigation from demonstrations with conditional neural processes. Interaction Studies, 23(3): 427-468.

\bibitem[(Yupeng et al., 2025)]{Zheng_Yupeng}
Yupeng Zheng, Pengxuan Yang, Zebin Xing, Qichao Zhang, Yuhang Zheng, Yinfeng Gao, Pengfei Li, Teng Zhang, Zhongpu Xia, Peng Jia, Dongbin Zhao: World4Drive: End-to-End autonomous driving via intention-aware physical latent world model. In 2025 International Conference on Computer Vision (ICCV), Honolulu, Hawai'i, USA. 


\bibitem[(Zitkovich et al., 2023)]{Zitkovich_B}
Zitkovich B, Yu T, Xu S, Xu P, Xiao T, Xia F, Wu J, Wohlhart P, Welker S, Wahid A, and Vuong Q, Rt-2: Vision-language-action models transfer web knowledge to robotic control. In 2023 Conference on Robot Learning (CoRL), pp. 2165-2183.

\bibitem[(Zhao et al., 2024a)]{Zirui2024Large}
Zhao Z, Lee W S, and Hsu D.  (2024a) Large language models as commonsense knowledge for large-scale task planning. In Advances in Neural Information Processing Systems, New Orleans, Louisiana, USA. 36.


\bibitem[(Zhao et al., 2024b)]{Zhao2024Occupancy}
Zhao L, Wang Y, and Huang S (2024b) Occupancy-slam: Simultaneously optimizing robot poses and continuous occupancy map. arXiv preprint arXiv:2405.10743.

\bibitem[(Zheng et al., 2021)]{zheng2021multi}
Zheng K, Sung Y, Konidaris G, and Tellex S (2021) Multi-resolution POMDP planning for multi-object search in 3D. In 2021 IEEE/RSJ International Conference on Intelligent Robots and Systems (IROS). Prague, Czech Republic: IEEE, pp. 2022-2029.

\bibitem[(Zheng et al., 2022)]{Zheng2022Towards}
Zheng K, Chitnis R, Sung Y, Konidaris G, and Tellex S (2022) Towards optimal correlational object search. In International Conference on Robotics and Automation (ICRA), Philadelphia, PA, USA: IEEE, pp. 7313-7319.

\bibitem[(Zheng et al., 2023)]{Zheng_system}
Zheng K, Paul A, and Tellex S. (2023). A system for generalized 3d multi-object search. arXiv preprint arXiv:2303.03178.

\bibitem[(Zhou et al., 2023)]{Zhou_Esc}
Zhou K, Zheng K, Pryor C, Shen Y, Jin H, Getoor L, Wang XE. Esc: Exploration with soft commonsense constraints for zero-shot object navigation. In International Conference on Machine Learning 2023 Jul 3 (pp. 42829-42842). PMLR.



\end{thebibliography}
\end{document}